\newtheorem{definition}{Definition}
\newcommand{\argmax}{\operatornamewithlimits{argmax}}
\newcommand{\E}{\mathbb{E}}
\newcommand{\I}{\mathbb{I}}
\newcommand{\R}{\mathbb{R}}
\newcommand{\bv}{\boldsymbol{v}}
\newcommand{\cA}{\mathcal{A}}
\newcommand{\cB}{\mathcal{B}}
\newcommand{\cC}{\mathcal{C}}
\newcommand{\cD}{\mathcal{D}}
\newcommand{\cE}{\mathcal{E}}
\newcommand{\cI}{\mathcal{I}}
\newcommand{\cJ}{\mathcal{J}}
\newcommand{\cM}{\mathcal{M}}
\newcommand{\cN}{\mathcal{N}}
\newcommand{\cO}{\mathcal{O}}
\newcommand{\cR}{\mathcal{R}}
\newcommand{\cS}{\mathcal{S}}
\newcommand{\cV}{\mathcal{V}}
\newcommand{\cP}{{\mathcal{P}}}
\algrenewcommand\algorithmicrequire{\textbf{Input:}}
\algrenewcommand\algorithmicensure{\textbf{Output:}}
\newcommand{\compilehidecomments}{true}
    \newcommand{\wei}[1]{}
    \newcommand{\haoyu}[1]{}
    \newcommand{\wei}[1]{{\color{blue!50!black}  [\text{Wei:} #1]}}
    \newcommand{\haoyu}[1]{{\color{brown!60!black} [\text{Haoyu:} #1]}}
\newcommand{\compilefullversion}{true}
    \newcommand{\OnlyInFull}[1]{}
    \newcommand{\OnlyInShort}[1]{#1}
    \newcommand{\OnlyInFull}[1]{#1}%
    \newcommand{\OnlyInShort}[1]{}%
\mathchardef\mhyphen="2D
\newcommand{\Elim}{{\sf Elim\mhyphen NS}}
\title{{Online Second Price Auction with Semi-bandit Feedback \\
        Under the Non-Stationary Setting  }
    \author{Haoyu Zhao\\
        IIIS, Tsinghua University\\
        \texttt{zhaohy16@mails.tsinghua.edu.cn}
        \and
        Wei Chen\\
        Microsoft Research\\
        \texttt{weic@microsoft.com}
    }
}
\date{}
\begin{document}
    
\maketitle

\begin{abstract}
    In this paper, we study the non-stationary online second price auction problem. We assume that the seller is selling the same type of items in $T$ rounds by the second price auction, and she can set the reserve price in each round. 
    In each round, the bidders draw their private values from a joint distribution unknown to the seller. Then, the seller announced the reserve price in this round. Next, bidders with private values higher than the announced reserve price in that round will report their values to the seller as their bids.
    The bidder with the highest bid larger than the reserved price would win the item and she will pay to the seller the price equal to the second-highest bid or the reserve price, whichever
        is larger.
    The seller wants to maximize her total revenue during the time horizon $T$ while learning the distribution of private values over time. 
    The problem is more challenging than the standard online learning scenario since the private value distribution is non-stationary, meaning that the distribution of bidders' private values may change over time, and we need to use the \emph{non-stationary regret} to measure the performance of our algorithm. 
    To our knowledge, this paper is the first to study the repeated auction in the non-stationary setting theoretically. 
    Our algorithm achieves the non-stationary regret upper bound $\tilde\cO(\min\{\sqrt{\cS T}, \bar\cV^{\frac{1}{3}}T^{\frac{2}{3}}\})$, where $\cS$ is the number of switches in the distribution, and $\bar\cV$ is the sum of total variation, and $\cS$ and $\bar\cV$ are not needed to be known by the algorithm.
    We also prove regret lower bounds $\Omega(\sqrt{\cS T})$ in the switching case and $\Omega(\bar\cV^{\frac{1}{3}}T^{\frac{2}{3}})$ in the dynamic case, showing that our algorithm has nearly optimal \emph{non-stationary regret}.
\end{abstract}

\section{Introduction}\label{sec:intro}
As the Internet is rapidly developing, there are more and more online repeated auctions in our daily life, such as the auctions on the e-Bay website and the online advertisement auctions on Google and Facebook. 
Perhaps the most studied and applied auction mechanism is the online repeated second price auctions with a reserve price. 
In this auction format, a seller repeatedly sells the same type of items to a group of bidders. 
In each round $t$, the seller selects and announces a reserve price $r^{(t)}$ while the bidders draw their private values $\bv^{(t)}$ on the item from a joint value distribution, 
    which is unknown to the seller.
For each bidder $i$, if its private value $\bv_i^{(t)}$ is at least the reserve price $r^{(t)}$, she will submit her bid $\bv_i^{(t)}$ to the seller; otherwise she will
    not submit her bid since she would not win if her value is less than the announced reserve price.
After the seller collects the bids in this round (if any), she will give the item to the highest bidder, and collect from this winner the payment equal to the value of the second-highest bid or the reserve price, 
    whichever is higher.
If no bidder submits bids in this round, that means the reserve price the seller announced is too high, and the seller receives no payment.
Such repeated auctions are common in online advertising applications on search engine or social network platforms.
The seller's objective is to maximize her cumulative revenue, which is the total payment she collects from the bidders over $T$ rounds.
Since the seller does not know the private value distribution of the bidders, the seller has to adjust the reserve price over time, hoping to learn the optimal reserve price.

The above setting falls under the multi-armed bandit framework, where reserve prices can be treated as arms and payments as rewards. 
As in the multi-armed bandit framework, the performance of an online auction algorithm is measured by its {\em regret}, 
    which is the difference between the optimal reward that always chooses the best reserve price and the expected cumulative reward of the algorithm. 
When the distribution of private values does not change over time, results from \cite{bianchi2017algorithmic,zhao2019stochastic} can be applied to 
    solve the above problem, whereas the work in \cite{cesa2015regret} considers a somewhat different setting where the seller only gets the reward as the feedback but does not see the bids (full-bandit feedback) and the private value distribution of each bidder is i.i.d.

In real-world applications, however, the private value distribution of the bidders may likely change over time, e.g., some important events happen, which greatly influence the market perception.
When the private value distribution changes over time, the optimal reserve price will also change and there is no single optimal reserve value.
None of the above studies would work under this realistic setting, except resetting the algorithms by human intervention.
Since it is difficult to predict distribution changes, we prefer to have algorithms that could automatically detect distribution changes and adjust their actions accordingly, and still provide nearly optimal performance over the long run.
    
In this paper, we design the first online learning algorithm for online second price auction with non-stationary distributions of private values.
We assume that the private values of the bidders at time $t$ follow the joint distribution $\cD_t$, and we assume that $r^*_t$ is the best reserve price at time $t$. 
We use \emph{non-stationary regret} to measure the performance of the algorithm, which is the difference between the expected cumulative reward of the best reserve prices at each round and the expected cumulative reward of the algorithm. 
We use two quantities to measure the changing of the distributions $\{\cD_t\}_{t\le T}$: switchings and total variation. 
The number of switchings is defined as $\cS := 1 + \sum_{t=2}^{T}\I\{\cD_t \neq \cD_{t-1}\}$, and the total variation is given as $\bar\cV := \sum_{t=2}^{T}||\cD_t -\cD_{t-1}||_{\text{TV}}$, where $||\cdot ||_{\text{TV}}$ denotes the total variation of the distribution and $T$ is the total time horizon (Section \ref{sec:model}).

In this paper, we provide an elimination-based algorithm that can achieve the \emph{non-stationary regret} of $\tilde\cO(\min\{\sqrt{\cS T}, \bar\cV^{\frac{1}{3}}T^{\frac{2}{3}}\})$ (Section \ref{sec:algorithm}). 
This regret bound shows that if the switchings or the total variations are not large (sublinear to $T$ in particular), our algorithm can still achieve sublinear \emph{non-stationary regret}. We give a proof sketch in Section \ref{sec:sketch} to show the main technical ideas of the regret analysis. 
We further show the non-stationary regret is lower bounded by $\Omega(\sqrt{\cS T})$ in the switching case, and lower bounded by $\Omega(\bar\cV^{\frac{1}{3}}T^{\frac{2}{3}})$
    in the dynamic case (Section \ref{sec:auction}), which means that our $\Elim$ algorithm achieves nearly optimal regret in the non-stationary environment. 
Moreover, our algorithm is parameter-free, which means that we do not need to know the parameters $\cS$ and $\bar\cV$ in advance and the algorithm is self-adaptive. 
Our main method is to reduce the non-stationary online auction problem into a variant of the non-stationary multi-armed bandit problem called {\em non-stationary one-sided full information bandit}, and solve this problem with some novel techniques. 

\OnlyInShort{Due to the space constraint, the detailed technical proofs are included in our full report~\cite{HC19nonstationary}, but the proof sketch covering all essential ideas are included in the main 
	text.}\OnlyInFull{The proof sketch covering all essential ideas are included in the main 
	text, and the detailed technical proofs are included in the appendix.
}

\subsection{Related Work}

\textbf{Multi-armed bandit: }Multi-armed bandit (MAB) problem is first introduced in
\cite{robbins1952bulletin}. MAB problems can be classified into stochastic bandits and the adversarial bandits. 
In the stochastic case, the reward is drawn from an unknown distribution, and in the adversarial case, the reward is determined by an adversary. 
Our model is a generalization of the stochastic case, as discussed below.
The classical MAB algorithms include UCB \cite{auer2002finite} and Thompson sampling \cite{thompson1933likelihood} for the
stochastic case and EXP3 \cite{auer2002nonstochastic} for the adversarial case. 
We refer to \cite{bubeck2012regret} for comprehensive coverage on the MAB problems.

\textbf{Non-stationary MAB: } Non-stationary MAB can be view as a generalization of the stochastic MAB, where the reward distributions are changing over time. 
The non-stationary MAB problems are analyzed mainly under two settings: 
    The first considers the switching case, where there are $\cS$ number of switchings in the distribution, and derives switching regret in terms of $\cS$ and $T$ \cite{garivier2011upper,wei2016tracking,liu2018change};
The second considers the dynamic case, where the distribution is changing continuously but the variation $\cV$ is bounded, and present dynamic regret in terms of $\cV$ and $T$\cite{gur2014stochastic,besbes2015nonstationary}.
However, most of the studies need to use $\cS$ or $\cV$ as algorithm parameters, which may not be easy to obtain in practice. 
Designing parameter-free algorithms has been studied in the full-information case \cite{luo2015achieving,jun2017online,zhang2018dynamic}.
    There are also several attempts to design parameter-free algorithms in the bandit case \cite{karnin2016multi,luo2018efficient,cheung2019learning}, but the regret bound is not optimal. 
A recent and innovative study \cite{auer2O19adaptively} solves the problem in the bandit case and achieves optimal regret. Then, \cite{chen2019anewalgorithm} significantly generalizes the previous work by extending it into the non-stationary contextual bandit and also achieves optimal regret.
Our study is the first one on the non-stationary one-sided full information bandit and its application to the online auction setting.

\textbf{Online auction: } 
For the online case where the private value distribution is unknown, \cite{cesa2015regret,bianchi2017algorithmic,zhao2019stochastic} consider different forms of the online second price auction. 
These studies assume that bidders truthfully follow their private value distributions, the same as we assume in this work.
\cite{mohri2015revenue} further considers the online second price auction with strategic bidders, which means that their bidding may not be truthful. 
\cite{roughgarden2016minimizing} studies the online second price auction with bidder specific reserve price. However, they need to use all the bidding information, and they also assume that the bidders are truthful.
For the offline case where the private value distribution is known, 
    the classical work by Myerson \cite{Myerson81} provides an optimal auction algorithm when the private value distributions of all bidders are independent and known, and the seller
could set different reserve prices for different bidders.

\section{Preliminary and Model}\label{sec:model}
In this section, we introduce the non-stationary online second price auction with semi-bandit feedback. We will also introduce the non-stationary regret to measure the performance of the algorithm. As mentioned before, we reduce the non-stationary online second price auction problem to a non-stationary bandit problem, which we called non-stationary one-sided full information bandit. We will also give the formal definition of the bandit problem and show the performance measurement for the corresponding bandit problem.

\begin{definition}[Non-stationary Online Second Price Auction]\label{def:non-stationary-auction}
    There are a fixed number of $n$ bidders and a seller, and the seller sells the same item in each round $t \in [T]$. In each round $t$, the seller sells the item through second price auction with reserve price $r^{(t)}$, where $r^{(t)}$ is chosen by the seller at the beginning of each round $t$ and is announced to the bidders before the bidders give their private values. The values of the bidders follow a distribution $\cD_t$ with support $[0,1]^n$ in round $t$, and the environment draws a vector of realized values for the bidders $\bv^{(t)} \sim \cD_t$. For each bidder $i\in [n]$, if her value $\bv^{(t)}_i \ge r^{(t)}$, she will report her value $\bv^{(t)}_i$ to the seller, otherwise she will not report her value and not attend the auction in this round.\footnote{We fully understand that in the repeated online second price auction, the bidder may not be truthful since she may participate in the auction in several rounds. However, this is out of the scope of the current paper. We will assume that the bidders are truthful in each round, and it's a good approximation in some cases.} The seller then dispatches the item using the second price auction with reserve price $r^{(t)}$. We assume that the distributions $\cD_t$ are generated \textbf{obliviously}, i.e. $\cD_t$ are generated before our algorithm starts, or equivalently, $\cD_t$ are generated independently to the randomness of $\cD_s$ for all $s\le t$ and the randomness of the algorithm.
\end{definition}

The performance of the reserve price in auction is always measured by the revenue:
$\cR(r^{(t)},\cD_t) := \E_{\bv\sim\cD_t}\left[\sum_{i=1}^n p_i(r^{(t)},\bv)\right]$,
where $p_i(r^{(t)},\bv)$ denote the money bidder $i$ needs to pay when the reserve price is $r^{(t)}$ and $\bv$ is the private value vector of the bidders is $\bv$.
In particular, if bidder $i$ has the highest bid among all bidders and its bid is also larger than the reserve price $r^{(t)}$, then $i$ pays the maximum value among all other bids and the reserve price and gets the auction item; otherwise the bidder $i$ pays nothing and does not get the item. 
 Note that if we fix a reserve price $r$, whether bidders with values less than $r$ report their values or not does not affect the revenue. 
 Given the revenue of a reserve price, we have the following definition for the non-stationary regret in the online second price auction.

\begin{definition}[Non-stationary Regret for Online Second Price Auction]
    The non-stationary regret of algorithm $\cA$ for the online second price auction is defined as follow,
    \vspace{-2mm}
    \[\text{Reg}_{\cA}^{SP} := \E\left[\sum_{t=1}^T (\cR(r^*_t,\cD_t) - \cR(r^{(t)},\cD_t))\right],\]
    where $r^*_t := \argmax_r \cR(r,\cD_t)$ and $r^{(t)}$ is the reserve price algorithm $\cA$ chooses in round $t$, and the expectation $\E[\cdot]$ is taken over all the randomness, including the randomness of the algorithm itself and the randomness of $\bv^{(1)},\ldots, \bv^{(t-1)}$ leading to the randomness in the selection of $r^{(t)}$.
\end{definition}

We now introduce the measurement of the non-stationarity. In general, there are two measurements of the change of the environment: the first is the number of the swichings $\cS$, and the second is the total variation $\bar\cV$. 
For any interval $\cI = [s,s']$, we define the number of switchings on $\cI$ to be $\cS_{\cI} := 1 + \sum_{t=s+1}^{s'}\I\{\cD_t \neq \cD_{t-1}\}$. As for the total variation, the formal defintion is given as $\bar\cV_{\cI} := \sum_{t=s+1}^{s'}||\cD_t -\cD_{t-1}||_{\text{TV}}$, where $||\cdot ||_{\text{TV}}$ denotes the total variation of the distribution. For convenience, we use $\cS$ and $\bar\cV$ to denote $\cS_{[1,T]}$ and $\bar\cV_{[1,T]}$. 

Next, we briefly discuss how to reduce the online second price auction to the one-sided full-information bandit: 
1) We can discretize the reserve price into $r_1,\dots,r_K$. Because the revenue of the second price auction is one-sided Lipschitz, when $K$ is large enough, the revenue of the best discretized reserve price should not make so much difference to that of the best reserve price on the whole domain.
2) The distribution of the value $\cD_t$ will induce a distribution of reward on $(r_1,\dots,r_K)$. More specifically, any private value vector $\bv^{(t)}\sim\cD_t$ will induce a reward vector $X^{(t)} = (X^{(t)}_1,\dots,X^{(t)}_K)$ for the discretized reserve price $r_1,\dots,r_K$, and the reward vector $X^{(t)}$ follows a distribution $\nu_t$.
3) At time $t$, because all bidders with values at least $r^{(t)}$ will report their values, we can compute the rewards for all $r \ge r^{(t)}$  given the specific private values larger than or equal to $r^{(t)}$. 
This gives us the following definition of the non-stationary one-sided full-information bandit. The formal reduction from the online auction to the bandit problem will be given in the proof of the Theorem \ref{thm:regret-auction}.

\begin{definition}[Non-stationary One-sided Full Information Bandit]\label{def:non-stationary-bandit}
        There is a set of arms $\{1,2,\dots,K\}$, and for each arm $a\in [K]$ at time $t$, it corresponds to an unknown distribution $\nu_{a,t}$ with support $[0,1]$, where $\nu_{i,t}$ is the marginal distribution of $\nu_t$ with support $[0,1]^K$. In each round $t$, the environment draws a reward vector $X^{(t)} = (X^{(t)}_1,\dots,X^{(t)}_K)$, where $X^{(t)}$ is drawn from distribution $\nu_t$. The player then chooses an arm $A_t$ to play, gains the reward $X^{(t)}_{A_t}$ and observes the reward of arms $A_t,A_t+1,\dots,K$, i.e. observes $X^{(t)}_{i},\forall i \ge A_t$. We assume that the distribution $\nu_t$ at each round $t$ is generated \textbf{obliviously}, i.e. $\nu_t$ are generated before the algorithm starts.
    \end{definition}
    We use $\mu_{a,t}$ to denote the mean of $X^{(t)}_a$, i.e. $\mu_{a,t} = \E[X^{(t)}_a]$. We also use $\mu^*_t = \max_a \mu_{a,t}$ to denote the mean of the best arm at time $t$. Then we have the following definition of the non-stationary regret.
    
    \begin{definition}[Non-stationary Regret]\label{def:regret}
        We use the following to denote the non-stationary regret of algorithm $\cA$.
        \vspace{-2mm}
        \[\text{Reg}_{\mathcal A} := \E\bigg[\sum_{t=1}^T\left(\mu^*_t - \mu_{A_t,t}\right)\bigg].\]
    \end{definition}

    For convenience, we will simply use regret to denote the non-stationary regret. 
    We now introduce the measurements for the non-stationarity for the one-sided bandit case. 
    Similar to the auction case, we have switchings $\cS$ and variation $\cV$. For any interval $\cI = [s,s']$, we define the number of switchings on $\cI$ to be $\cS_{\cI} := 1 + \sum_{t=s+1}^{s'}\I\{\nu_t \neq \nu_{t-1}\}$. As for the sum of variation, the formal definition is given as $\cV_{\cI} := \sum_{t=s+1}^{s'}\max_{a}|\mu_{a,t} -\mu_{a,t-1}|$, which sums up the max difference of mean in each round. For convenience, we use $\cS$ and $\cV$ to denote $\cS_{[1,T]}$ and $\cV_{[1,T]}$. 
    Note that the number of switchings in the bandit case is the same as that of the auction case, so we reuse the notations, and 
    the variation definition in the bandit case uses the sum of the maximal differences in the consecutive mean vectors instead of the sum of total variations in the auction case, so we use notation
    $\cV$ instead of $\bar\cV$ for differentiation.
    The variation $\cV$ defined for the bandit case is consistent with the variation defined in other non-stationary bandit papers.
    
    We will use Switching Regret to denote the non-stationary regret in the switching case, and dynamic regret to denote the non-stationary regret in the dynamic case.

\section{Algorithm}\label{sec:algorithm}
In this section, we present our algorithm $\Elim$ for the non-stationary one-sided full-information bandit problem and its regret bounds. 
The algorithm for the online auction problem can be easily derived from $\Elim$, as outlined in Section~\ref{sec:model}, 
    and we present its regret bound in Theorem \ref{thm:regret-auction}.

\begin{algorithm*}[!th]
    \caption{$\Elim$}
    \label{alg:nonstationaryelim}
    \begin{algorithmic}[1]
        \Require Total time horizon $T$, total number of arms $K$. Parameters $C_1,C_2$.
        \State $t\leftarrow 1, \ell \leftarrow 1, \tau_{\ell} \leftarrow t$. \Comment{$\tau_{\ell}$ is the starting time of epoch $\ell$.}
        \State $\cM\leftarrow\phi, a_{\text{min}} \leftarrow 1,\cE\leftarrow\phi$. 
        \State Let $\hat\mu_{a}[t_1,t_2)$ denote the empirical mean of arm $a$ in the time interval $[t_1,t_2)$.
        \While{$t \le T$}
        \State \noindent\fbox{\parbox{2.9in}{Step 1. Randomly select the exploration phases}}
        \If{$\cM\neq\phi$}
            \State $\Delta_{t,\text{min}} \leftarrow \min_{(g,e,\bv)\in \cM}g$.
        \EndIf
        \State Let $d_i \leftarrow 2^{-i}$ for every $i\in\mathbb N$, and $I_t \leftarrow \max\{i:8d_i\ge \Delta_{t,\text{min}}\}$. \Comment{We define the notation $d_i$ for convenience.}
        \State For every $i \le I_t$, independently add pair $\left(d_i,\left[t,t+\lceil\frac{C_2\ln (KT^3)}{d_i^2}\rceil\right)\right)$ into $\cE$ with probability $p_{\ell,i} = d_i\sqrt{\frac{\ell+1}{T}}$.  
        \label{algline:exploration-prob}
        \State (Let $\cE_t $ and $\cM_t$ be the values of $\cE$ and $\cM$ respectively at this point, to be used in the proof)
        \State \noindent\fbox{\parbox{2.9in}{Step 2. Choose an action to play}}
        \If{$\exists (d,\cI)\in\cE$ such that $t\in\cI$} \Comment{Choosing the arm based on if $t$ is in an exploration phase.}
            \State $d_{\max,t} \leftarrow \max_{(d,\cI)\in\cE, t\in \cI}d$.
            \State Play arm $A_t \leftarrow a_{\text{exp}} = \min\{k:\exists (g,e,\bv)\in\cM, k = e, g\le 8d_{\max,t}\}$ and observe the reward $X^{(t)}_a$ for all $a\ge a_{\text{exp}}$.\label{algline:exploration-arm}
        \Else
            \State Play arm $A_t \leftarrow a_{\text{min}}$ and observe the reward $X^{(t)}_a$ for all $a\ge a_{\text{min}}$.
        \EndIf
        \State \noindent\fbox{\parbox{2.9in}{Step 3. Perform the elimination process}}
        \While{$\exists \sigma \ge \tau_{\ell}, a>a_{\text{min}}$ such that $\hat\mu_a[\sigma,t+1) - \hat\mu_{a_{\text{min}}}[\sigma,t+1) > \sqrt{\frac{C_1\ln(KT^3)}{t+1-\sigma}}$}
            \State Let $\bv$ be a vector with length $K$.
            \State Let $b$ be the arm such that $\hat\mu_b[\sigma,t+1) - \hat\mu_{a_{\text{min}}}[\sigma,t+1)$ is maximized.
            \State $g \leftarrow \hat\mu_b[\sigma,t+1) - \hat\mu_{a_{\text{min}}}[\sigma,t+1)$, $e \leftarrow a_{\text{min}}$, and $\bv_i\leftarrow \hat\mu_{i}[\sigma,t+1)$ for all $i\ge a_{\min}$.
            \State $\cM \leftarrow \cM\cup\{(g,e,\bv)\}, \ a_{\text{min}} \leftarrow a_{\text{min}}+1$.
        \EndWhile
        \State \noindent\fbox{\parbox{2.9in}{Step 4. Perform the non-stationarity check}}
        \If{$\exists (d,[t',t+1))\in\cE, (g,e,\bv)\in\cM,a\ge e$ such that $g \le 8d$ and $|\hat\mu_a[t',t+1) - \bv_{a}| > \frac{d}{4}$}
            \State $\ell\leftarrow\ell+1,\cM\leftarrow\phi,\cE\leftarrow\phi,a_{\text{min}}\leftarrow 1,\tau_{\ell}\leftarrow t+1$.
        \EndIf
        \State $t\leftarrow t+1$.
        \EndWhile
    \end{algorithmic}
\end{algorithm*}

Our algorithm $\Elim$ borrows ideas from \cite{zhao2019stochastic} and \cite{auer2O19adaptively}. \cite{zhao2019stochastic} introduce an elimination-based algorithm for the one-sided full-information bandit, and \cite{auer2O19adaptively} present an elimination-based algorithm to adaptively follow the best arm in the switching case without knowing the number of switches $\cS$. 
Our algorithm is a non-trivial combination of these ideas, and our innovation highly depends on the feedback structure of the one-sided bandit problem.
The algorithm is given in Algorithm~\ref{alg:nonstationaryelim}.

Generally speaking, our algorithm maintains a set $\cE$ to record the exploration phases
     for the adaptive detection of the
    dynamic changes in the distribution, and a set $\cM$ to record the information when an arm is eliminated. 
If we were dealing with the stationary case where the distribution of arms does not change, 
    after observing arms for enough times, we can eliminate an empirically sub-optimal arm, and with high probability, the eliminated arm is indeed sub-optimal.
However, in the non-stationary case, the optimal arm is changing, and thus we need to properly add exploration phases to observe the eliminated arms with some probability. 
When we detect that the distribution indeed has changed from these exploration phases, the algorithm starts a new epoch and resets $\cE$ and $\cM$ to empty sets.
\footnote{We mark the actual values of $\cE$ and $\cM$ in each round as $\cE_t$ and $\cM_t$ in the algorithm, to be used in our analysis.
}

Set $\cM$ records the information at the time when an arm is eliminated. Each element $(g,e,\bv)\in\cM$ is a tuple, 
    where $g\in\R$ records the empirical gap, which is the difference of the empirical
    means of the empirically optimal arm and that of the eliminated arm $a_{\min}$; 
    $e = a_{\min}$ records the index of the eliminated arm; 
    and $\bv_k$ for $k\ge a_{\min}$ 
    records the empirical mean of arm $k$ when the arm $e$ is eliminated ($\bv\in\R^K$).
    An exploration phase is a pair $(d,\cI)$ where $d = 2^{-k}$ and interval $\cI \subseteq [T], |\cI| = \Theta(\frac{1}{d^2})$. 
    Each such phase is stored independently into $\cE$ with a probability (in line~\ref{algline:exploration-prob} of Step 1).
    The purpose of these exploration phases is to re-examine arms that have been eliminated to detect possible changes in the distribution, with
        $\cI$ indicating the range of rounds for an exploration.
    Intuitively, if there is no change in the distribution, such an exploration would pay an extra regret.
    To control this extra regret, we use $d$ to indicate the per-round regret 
        that such an exploration could tolerate, and the length of $\cI$ is controlled to be $\tilde{\cO}(1/d^2)$ to bound the total regret.
    
%

At each round, 
Our algorithm $\Elim$ has the following four steps.
In Step 1, we randomly add exploration phases into the set $\cE$. 
We set $p_{\ell,i} = d_i\sqrt{\frac{\ell+1}{T}}$ to be the probability to add an exploration phase $(d_i,[t,t+\lceil\frac{C_2\ln (KT^3)}{d_i^2}\rceil))$ into $\cE$ in epoch $\ell$ at time $t$. This probability is chosen carefully, not too small to omit the non-stationarity, and not too big to induce large regret.

In Step 2, we choose the action to play. 
If the current round $t$ is not in any exploration phase, then we will play the arm that is not eliminated and has the smallest index. If $t$ is in an exploration phase $(d,\cI) $, we will find the maximum value $d_{\max,t} = \max_{(d,\cI)\in\cE, t\in \cI}d$. 
We will play arm $A_t \leftarrow a_{\text{exp}} = \min\{k:\exists (g,e,\bv)\in\cM, k = e, g \le 8d_{\max,t}\}$ and observe the reward $X^{(t)}_a$ for all $a\ge a_{\text{exp}}$. 
This arm selection in the exploration phase guarantees that the arm we play would induce the regret of at most $\cO(d_{\max,t})$ per round if the distribution has not changed.

In Step 3, we perform arm elimination when the proper condition holds. 
In particular, when we find an arm is empirically sub-optimal among the remaining arms, we eliminate this arm in this epoch. 
When an arm is eliminated, the algorithm will add an tuple $(g,e, \bv)$ into the set $\cM$ to store the information at this point, 
    where $g$ stores the empirical gap with the best arm, $e$ stores the index of the eliminated arm, and for $k \ge e$ $\bv_k$ stores the empirical mean of arm $k$. 

In Step 4, we apply the non-stationarity check. 
At the end of an exploration phase, we check that if there is a tuple $(g,e,\bv)\in\cM$ and an arm $a\ge e$, such that 
    the gap between the current empirical mean of arm $a$ during the exploration phase and the stored empirical mean $\bv_a$
     is $\Omega(g)$.
If so, it means that the empirical mean has a significant deviation indicating a change in distribution,
    and thus we will start a new epoch to redo the entire process from scratch. 

The algorithm incorporates ideas from \cite{auer2O19adaptively,zhao2019stochastic}, and its main novelty is related to the maintenance and use of 
    set $\cM$ in arm selection (Step 2), arm elimination (Step 3) and stationarity check (Step 4), 
    which make use of the feedback observation to balance the exploration and exploitation.

Now, we use a simple example to illustrate how the $\Elim$ algorithm detects the distribution changes in the switching case. 
Suppose that we have three arms.
At first, arm 1 always outputs $0$, arm 2 always outputs $0.45$, and arm 3 always outputs $0.5$. 
Then arm 1 will be eliminated first, and the tuple $(g,e,\bv) = (0.5,1,(0,0.45,0.5))$ will be stored in $\cM$, where $g = 0.5$ is the empirical gap 
    between the means of arm 1 and the empirically best arm 3. 
Next arm 2 will be eliminated, and the algorithm will store $(0.05,2,(?,0.45,0.5))$ in $\cM$, where $?$ means that the value at that position has no meaning. 
At this point, the algorithm may have randomly selected many exploration phases, but they all fail to start a new epoch since the distribution does not change and non-stationarity 
    would not be detected. 
Then suppose that at round $t$, the distribution changes, and arm 1 will output $1$ from now on and thus becomes the best arm. 
Suppose that after round $t$, we randomly select an exploration phase with $d = 2^{-5}$, and in this exploration phase, we will play arm 2 
    but not arm 1 (since $0.05 \le 8* 2^{-5} < 0.5$), and thus we will still not detect the non-stationarity of arm 1. 
However, when we randomly select an exploration phase with $d=0.5$ in step 1 (perhaps in a later round), 
    we will play arm 1 according to the key selection criteria for arm exploration in line~\ref{algline:exploration-arm} of step 2.
This would allow us to observe the distribution change on arm 1 in the exploration phase and then start a new epoch, which will restart the algorithm from scratch by playing arm 1 again.

The following two theorems summarize the regret bounds of algorithm $\Elim$ in the switching case and the dynamic case for the one-sided full-information bandit.

\begin{restatable}[Switching Regret]{theorem}{thmswitchingregret}\label{thm:switching-regret}
    Suppose that we choose parameters $C_1 \ge 2048, C_2 \ge 32$, then the algorithm $\Elim$ has regret in the switching case bounded by $\tilde\cO(\sqrt{\cS T})$, where $\tilde\cO(\cdot)$ hides the polynomial factor of $\log K$ and $\log T$.
\end{restatable}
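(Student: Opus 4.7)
The plan is to decompose the regret by the epochs defined by the restart events in Step~4 of $\Elim$, show that on a high-probability concentration event the number of epochs is at most $\cO(\cS)$, and bound the regret within an epoch $\ell$ of length $T_\ell$ by $\tilde\cO\bigl(T_\ell\sqrt{(\ell+1)/T} + \sqrt{T/(\ell+1)}\bigr)$. Summing across epochs will yield the target $\tilde\cO(\sqrt{\cS T})$. First I would define the \emph{clean event} $\cG$: for every candidate epoch start $s$, every sub-interval $[t_1,t_2) \subseteq [s,T]$, and every arm $a$, the empirical mean $\hat\mu_a[t_1,t_2)$ deviates from the average of the true means $\mu_{a,t}$ over the observed rounds by at most $\sqrt{C_1\ln(KT^3)/(2(t_2-t_1))}$. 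Since the reward of each observed arm is an independent draw from $\nu_{a,t}$ conditional on the algorithm's filtration, Azuma–Hoeffding with a union bound over the $\cO(KT^2)$ triples gives $\Pr[\cG^c] \le \cO(1/T)$, so the failure contributes $\cO(1)$ to the expected regret and all further arguments can be conditioned on $\cG$.

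On $\cG$, I would establish two structural properties of any stationary portion of an epoch. (a) \emph{Soundness of elimination}: when Step~3 eliminates an arm $e$ with stored gap $g$, the true instantaneous gap of arm $e$ relative to the best surviving arm is $\Theta(g)$ throughout the interval considered, so the per-round regret from playing $a_{\text{min}}$ is bounded by the smallest stored $g$, and from playing the exploration arm $a_{\text{exp}}$ by $\cO(d_{\max,t})$. (b) \emph{No spurious restart}: the Step~4 test cannot trigger while the distribution has been stationary since $\tau_\ell$, because the stored $\bv_a$ then matches the current true mean within the slack $d/4$ required by the check. Property~(b) ensures that on $\cG$ the number of epochs is at most $\cS$. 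To bound the exploration regret within a single epoch, I would observe that a level-$i$ phase covers $t$ only if it was launched in one of the previous $\tilde\cO(1/d_i^2)$ rounds, so by independence the probability that $t$ lies inside any level-$i$ phase is $\tilde\cO(p_{\ell,i}/d_i^2) = \tilde\cO(\sqrt{(\ell+1)/T}/d_i)$; multiplying by the per-round exploration regret $\cO(d_i)$ and summing over the $\cO(\log T)$ dyadic levels gives total exploration regret $\tilde\cO(T_\ell\sqrt{(\ell+1)/T})$ in epoch $\ell$.

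The main obstacle is bounding the \emph{detection delay} after a genuine switch. Suppose that at some round in epoch $\ell$ the distribution changes so that a stored tuple $(g,e,\bv) \in \cM$ becomes stale, meaning the current mean of some arm $a \ge e$ has drifted by $\Omega(g)$ from $\bv_a$. Let $d_i$ be the largest dyadic value with $8d_i \ge g$. I would argue that with high probability $\Elim$ restarts within $\tilde\cO\bigl(1/(p_{\ell,i}d_i^2)\bigr) = \tilde\cO\bigl(\sqrt{T/(\ell+1)}/d_i\bigr)$ rounds: the expected waiting time until a level-$i$ phase launches is $1/p_{\ell,i}$, that phase lasts $\tilde\cO(1/d_i^2)$ rounds, and by $\cG$ applied to the phase interval the empirical mean witnesses the $\Omega(g)$ drift and passes the Step~4 threshold $d/4$. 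During the waiting period the per-round regret can be charged at $\cO(g) = \cO(d_i)$, since a larger instantaneous gap would correspond to a stored tuple with even larger $g$, whose detection would be triggered sooner by a coarser exploration level; this monotonicity, together with a careful case analysis when several tuples are simultaneously stale (taking the minimum-$g$ stale tuple as the controlling one), is the most delicate part of the argument and yields a detection-delay regret of $\tilde\cO(\sqrt{T/(\ell+1)})$ per epoch.

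Combining the pieces and using that on $\cG$ the number of epochs is at most $\cS$, the total exploration regret is $\sum_{\ell=1}^{\cS} T_\ell\sqrt{(\ell+1)/T} \le \sqrt{(\cS+1)/T}\sum_\ell T_\ell = \sqrt{(\cS+1)T}$ (since $(\ell+1) \le \cS+1$ and $\sum_\ell T_\ell = T$), while the total detection-delay regret is $\sum_{\ell=1}^{\cS}\sqrt{T/(\ell+1)} = \cO(\sqrt{\cS T})$. Adding the $\cO(1)$ contribution from $\cG^c$ and absorbing the $\log K$, $\log T$ factors into $\tilde\cO$ yields the claimed $\tilde\cO(\sqrt{\cS T})$ bound. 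The choice $p_{\ell,i} = d_i\sqrt{(\ell+1)/T}$ in Step~1 is precisely what balances the per-epoch exploration cost against the detection-delay cost, and its dependence on the epoch index $\ell$ rather than on $\cS$ is what makes the algorithm parameter-free.
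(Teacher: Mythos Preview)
Your plan has the right architecture—clean concentration event, bounding the number of epochs by $\cS$, exploration cost via the launch probabilities, and detection delay via the waiting time for the right exploration level—and in this sense tracks the paper's argument closely. But there is a missing term in your final tally: you sum only (i) exploration regret and (ii) detection-delay regret, and never bound the ordinary \emph{elimination regret}, i.e., the regret incurred while playing $a_{\min}$ in non-exploration rounds when all stored records are still consistent with the current means. Your sentence ``the per-round regret from playing $a_{\min}$ is bounded by the smallest stored $g$'' is not correct, and in any case that quantity is never summed: at the start of an epoch $\cM=\emptyset$ so there is no stored $g$ at all, and more generally the instantaneous gap $\mu^*_t-\mu_{a_{\min},t}$ is controlled not by any stored $g$ but by the fact that $a_{\min}$ has not yet triggered the Step-3 elimination test, which gives a bound of order $\sqrt{\ln(KT^3)/(t-s)}$ with $s$ the start of the current stationary-and-same-epoch sub-interval. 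The paper handles this term separately (Lemma~\ref{lem:first-term-switching}) by partitioning $[1,T]$ into at most $2\cS$ such sub-intervals and applying the standard elimination bound $\tilde\cO(\sqrt{L})$ on each one of length $L$; Cauchy--Schwarz then gives $\tilde\cO(\sqrt{\cS T})$. Without this piece your two displayed sums do not cover the whole regret.

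Your detection-delay sketch is also looser than what is needed. The controlling stale tuple is not the one with minimum $g$ but the one with minimum \emph{index} $e=b_t$ (Definition~\ref{def:playing-bad-arm}): the per-round regret bound $\mu^*_t-\mu_{A_t,t}=\cO(\varepsilon)$ relies on the fact that every arm with index $<b_t$ still has a consistent record and is therefore genuinely suboptimal, while the exploration level that will trigger Step~4 is tied to the drift $\varepsilon=\max_{a\ge b_t}|\mu_{a,t}-\bv_a|$ of \emph{that} tuple, not to its stored $g$. The paper makes this precise through a two-dimensional backward induction over the stationary-interval index $j$ and the epoch index $\ell$ (Lemma~\ref{lem:forth-term-switching}), which is where the $\sum_{\ell}\sqrt{T/(\ell+1)}$ term actually emerges and which also handles the possibility that an epoch spans several switches before detection fires. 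Your informal charging argument is in the right spirit, but it becomes a proof only after you fix the choice of controlling tuple and carry out that induction.
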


\begin{restatable}[Dynamic Regret]{theorem}{thmdynamicregret}\label{thm:dynamic-regret}
    Suppose that we $C_1 \ge 8192, C_2 \ge 128$, and suppose that the variation is not too small ($\cV = \Omega(1)$). Then the algorithm $\Elim$ has regret in the dyanmic case bounded by $\tilde\cO(\cV^{\frac{1}{3}} T^{\frac{2}{3}})$, where $\tilde\cO(\cdot)$ hide the polynomial factor of $\log K$ and $\log T$.
\end{restatable}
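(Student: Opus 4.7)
The plan is to reduce Theorem~\ref{thm:dynamic-regret} to the switching regret bound of Theorem~\ref{thm:switching-regret} via a block-partitioning argument. The intuition is that, within any sufficiently short interval, the total variation is small, so the dynamic instance can be approximated by a piecewise-stationary one with a controlled number of pieces, incurring only a small approximation error from the within-block variation.

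First, I would partition the horizon $[1,T]$ into $B$ equal blocks $I_1,\ldots,I_B$ of length $H = T/B$, with $H$ to be tuned. Let $\cV_j := \sum_{t \in I_j,\, t > \min I_j} \max_a |\mu_{a,t} - \mu_{a,t-1}|$ be the within-block variation, so $\sum_j \cV_j \le \cV$, and let $\bar a_j := \argmax_a \mu_{a,\min I_j}$ be the block-representative arm. I would then decompose
\[\mathrm{Reg} \;=\; \sum_{j=1}^B \sum_{t \in I_j}\bigl(\mu_t^* - \mu_{\bar a_j,t}\bigr) \;+\; \sum_{j=1}^B \sum_{t \in I_j}\bigl(\mu_{\bar a_j,t} - \mu_{A_t,t}\bigr).\]
For the first sum, within $I_j$ both $\mu_t^*$ and $\mu_{\bar a_j,t}$ differ from $\mu_{\bar a_j,\min I_j}$ by at most $\cV_j$ (using the variation bound and the optimality of $\bar a_j$ at time $\min I_j$), so each round contributes at most $2\cV_j$ and the total is at most $2H\cV$.

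For the second sum, I would argue that $\Elim$'s regret against the piecewise-constant benchmark $\{\bar a_j\}$, which has at most $B$ switches, is $\tilde\cO(\sqrt{BT})$. This step requires adapting the proof of Theorem~\ref{thm:switching-regret}: conceptually, the real dynamic instance is treated as a perturbation of the piecewise-stationary instance whose distributions freeze at each block start, and one checks that the adaptive epochs of $\Elim$ yield regret $\tilde\cO(\sqrt{|I_j|}) = \tilde\cO(\sqrt H)$ on each block modulo variation-induced error. Combining the two sums gives total regret $\tilde\cO(T/\sqrt H + H\cV)$, and choosing $H = (T/\cV)^{2/3}$ (which lies in $[1,T]$ since $\cV = \Omega(1)$ and $\cV \le T$) yields the claimed $\tilde\cO(\cV^{1/3}T^{2/3})$.

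The main obstacle is the formalization of the ``benchmark with at most $B$ switches'' step: Theorem~\ref{thm:switching-regret} is stated for the per-round best benchmark in a piecewise-stationary instance, but here we need a bound on the real (dynamic) instance where the benchmark is $\{\bar a_j\}$ rather than $\{a_t^*\}$. Because the algorithm's eliminations, exploration schedule, and non-stationarity check react to observations drawn from the real distribution, within-block variation of per-round magnitude $O((\cV/T)^{1/3})$ could in principle cause spurious eliminations of $\bar a_j$ or mistimed epoch resets. I expect that the larger constants $C_1 \ge 8192$, $C_2 \ge 128$ (versus $C_1 \ge 2048$, $C_2 \ge 32$ in Theorem~\ref{thm:switching-regret}) provide precisely the extra slack so that the widened confidence intervals absorb the variation-induced deviations without breaking the per-epoch analysis, letting the switching-case arguments carry through with an additional variation-error term that is already budgeted in the decomposition above.
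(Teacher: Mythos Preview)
Your high-level strategy---partition the horizon, approximate the dynamic instance by a piecewise-stationary one, then invoke the switching analysis---is exactly the paper's plan, and your intuition that the larger constants $C_1,C_2$ exist to absorb within-block variation is correct. But the equal-length partition is the wrong choice, and this is a genuine gap.

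The obstacle you identify is real and your proposed resolution does not close it. With equal blocks of length $H$, you only control $\sum_j \cV_j \le \cV$; an individual block can have $\cV_j$ as large as $\cV$. When you try to ``adapt the proof of Theorem~\ref{thm:switching-regret}'' on such a block, every place where the switching argument compares an empirical mean over a sub-interval $[\sigma,t]\subseteq I_j$ to a true mean now picks up an additive $\cV_j$ term. For the elimination step, the epoch-count bound, and the non-stationarity check to survive, you need $\cV_j$ to be dominated by the confidence radius $\sqrt{C_1\ln(KT^3)/|{\cdot}|}$ on those sub-intervals; no fixed enlargement of $C_1,C_2$ can buy you this when $\cV_j$ is uncontrolled relative to $\sqrt{1/H}$. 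Concretely, the analogue of Lemma~\ref{lem:number-of-epochs-switching} (epochs $\le$ number of pieces) fails for equal blocks, and without it the exploration-cost bound (Lemma~\ref{lem:secondthird-term-switching}) and the induction in Lemma~\ref{lem:forth-term-switching} both break.

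The paper fixes this with an \emph{adaptive} partition (Lemma~\ref{lem:interval-partition}): greedily cut $[1,T]$ into intervals $\cI_1,\dots,\cI_\Gamma$ so that each satisfies $\cV_{\cI_i}\le \sqrt{C_3/|\cI_i|}$ for a small constant $C_3\le \tfrac12$; a H\"older argument gives $\Gamma=O(\cV^{2/3}T^{1/3})$. This calibration is the whole point: on any sub-interval of $\cI_i$ the variation is at most the confidence radius, so every inequality in the switching proof acquires at most a factor of $2$ slack, which is exactly what the quadrupled constants $C_1\ge 8192$, $C_2\ge 128$ pay for. The paper then defines the block benchmark via the \emph{average} mean $\bar\mu_{a,t}$ over $\cI_i$ (rather than freezing at the block start) and re-runs each of Lemmas~\ref{lem:first-term-switching}--\ref{lem:forth-term-switching} with $\Gamma$ in place of $\cS$; the residual approximation term $\sum_i |\cI_i|\cV_{\cI_i}\le \sum_i\sqrt{C_3|\cI_i|}\le \sqrt{C_3\Gamma T}$ is automatically of the right order. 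Your decomposition and balancing are morally the same, but the adaptive partition is the missing ingredient that makes the ``modulo variation-induced error'' step actually go through.
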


As outlined in Section~\ref{sec:model}, $\Elim$ can be easily adapted to solve the online second price auction problem by discretizing the reserve price.
The following theorem provides the regret bound of $\Elim$ on solving the online second price auction problem.

\begin{restatable}[Regret for Online Second Price Auction]{theorem}{thmregretauction}\label{thm:regret-auction}
    For every $0\le k \le \lceil\sqrt{T}\rceil$, let $r_k = \frac{k}{\lceil\sqrt{T}\rceil}$, and we only set reserve price $r^{(t)}\in\{r_1,\dots,r_{\lceil\sqrt{T}\rceil}\}$. Each time we set reserve price $r^{(t)} = r_{A_t}$ and get all the private value $v^{(t)}_i \ge r^{(t)}$, we compute the reward $X^{(t)}_k$ for all $k\ge A_t$ and receive the reward $X^{(t)}_{A_t}$. Then we apply our algorithm $\Elim$ and set $C_1,C_2$ appropriately, and the regret is bounded by
    \[\text{Reg}_{\cA}^{SP} \le \tilde\cO(\min\{\sqrt{\cS T}, \bar\cV^{\frac{1}{3}}T^{\frac{2}{3}}\}),\]
    where we assume that $\bar\cV = \Omega(1)$ is not too small.
\end{restatable}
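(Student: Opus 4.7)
The plan is to reduce the non-stationary online second price auction to an instance of the non-stationary one-sided full-information bandit (Definition~\ref{def:non-stationary-bandit}), invoke Theorems~\ref{thm:switching-regret} and~\ref{thm:dynamic-regret} on that induced instance, and control the discretization loss via a one-sided Lipschitz property of the revenue function. For the reduction, I would let the $K = \lceil\sqrt T\rceil + 1$ arms correspond to the discretized prices $r_0 < r_1 < \cdots < r_{K-1}$, and define the per-round reward vector by $X^{(t)}_k := \sum_i p_i(r_k, \bv^{(t)})$ for $\bv^{(t)} \sim \cD_t$. Because the seller receives every bid $\bv^{(t)}_i \ge r_{A_t}$ when she sets reserve $r_{A_t}$, and the revenue at any higher reserve $r_k$ with $k \ge A_t$ depends only on bids at least $r_k$, she can reconstruct $X^{(t)}_k$ for every $k \ge A_t$, matching the feedback of Definition~\ref{def:non-stationary-bandit}; moreover $X^{(t)}_k \in [0,1]$ since only the winner pays and her payment is at most $1$.

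Next, I would prove the one-sided Lipschitz bound
\[
    \cR(r, \cD) \ge \cR(r', \cD) - (r' - r) \quad \text{for every } r \le r',
\]
by pointwise case analysis on a realized $\bv$: if some bidder wins at reserve $r'$ she also wins at the smaller reserve $r$, and her payment changes by $\max(r', v^{(2)}) - \max(r, v^{(2)}) \le r' - r$, where $v^{(2)}$ denotes the second-highest bid; if no bidder wins at $r'$, the revenue at $r'$ is zero while at $r$ it is nonnegative. Taking expectations and choosing $\tilde r^*_t := \max\{r_k : r_k \le r^*_t\}$ then gives per-round discretization loss at most $1/\lceil\sqrt T\rceil$, hence cumulative loss at most $\sqrt T$. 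I would then translate the non-stationarity measures: the bandit distribution $\nu_t$ is a deterministic function of $\cD_t$, so $\cD_t = \cD_{t-1}$ implies $\nu_t = \nu_{t-1}$ and hence $\cS_{\text{bandit}} \le \cS$; and the standard duality $|\E_{\cD_t}[f] - \E_{\cD_{t-1}}[f]| \le ||\cD_t - \cD_{t-1}||_{\text{TV}}$ for any $f$ with values in $[0,1]$, applied uniformly to $f(\bv) = \sum_i p_i(r_k, \bv)$ over all $k$, yields $\cV_{\text{bandit}} \le \bar\cV$.

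Finally, I would apply Theorem~\ref{thm:switching-regret} and Theorem~\ref{thm:dynamic-regret} to the induced bandit with the prescribed $C_1, C_2$ (noting that $\log K = O(\log T)$ is absorbed into $\tilde\cO$) and add the $\sqrt T$ discretization term to obtain the bounds $\tilde\cO(\sqrt{\cS T})$ and $\tilde\cO(\bar\cV^{1/3} T^{2/3})$, respectively; the additive $\sqrt T$ is absorbed since $\cS \ge 1$ always and $\bar\cV = \Omega(1)$ by assumption. I expect the one-sided Lipschitz step to be the main subtlety, because the payment $p_i(r, \bv)$ is discontinuous in $r$ as bidders drop out of the auction at the reserve threshold; the pointwise case analysis above resolves this cleanly, after which the rest is essentially bookkeeping on top of the two bandit theorems already established.
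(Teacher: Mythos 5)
Your proposal is correct and follows essentially the same route as the paper's proof: the same discretization into $\lceil\sqrt T\rceil+1$ arms with reward $X^{(t)}_k=\sum_i p_i(r_k,\bv^{(t)})$, the same one-sided Lipschitz bound $\cR(r,\cD)\ge\cR(r',\cD)-(r'-r)$ to pay only $\sqrt T$ for discretization, the same observation that switchings carry over directly, and the same total-variation bound (the paper gets $\cV\le 2\bar\cV$ via the $L^1$ distance of densities, matching your duality argument up to the constant) before invoking Theorems~\ref{thm:switching-regret} and~\ref{thm:dynamic-regret}. Your pointwise case analysis of the winner's payment $\max(r',v^{(2)})-\max(r,v^{(2)})\le r'-r$ is a slightly more explicit justification of the Lipschitz step than the paper gives, but it is the same argument.
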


\section{Proof Sketch for the Regret Analysis}\label{sec:sketch}
In this section, we will give a proof sketch of the regret analysis in the switching case (Theorem \ref{thm:switching-regret}) and the dynamic case (Theorem \ref{thm:dynamic-regret}). In general, we first give a proof in the switching case, and then we reduce the dynamic case into the switching case. 
The proof strategy in the dynamic case is nearly the same as that in the switching case, and we will briefly discuss how to do the reduction.

\subsection{Proof Sketch of Theorem \ref{thm:switching-regret}}

Generally speaking, our proof strategy for Theorem \ref{thm:switching-regret} is to define several events (Definitions \ref{def:sampling-nice},\ref{def:exploration-phase},\ref{def:procedure-nice},\ref{def:playing-bad-arm}), and decompose the regret by these events. We show that each term in the decomposition is bounded by $\tilde\cO(\sqrt{\cS T})$.

\begin{restatable}[Sampling is nice]{definition}{defsamplingnice}\label{def:sampling-nice}
    We say that the sampling is nice if for every interval $\cI\subseteq [T]$ and every arm $a$, we have
    \[\frac{1}{|\cI|}\biggr|\sum_{t\in\cI}X^{(t)}_a -\sum_{t\in\cI}\mu_{a,t}\biggr| < \sqrt{\frac{\ln(KT^3)}{2|\cI|}},\]
    where $|\cI|$ is the length of interval $\cI$. We use $\cN^s$ to denote this event. We use $\cN^s_t$ to denote the event when the above inequality holds for all $\cI\subseteq [t]$.
\end{restatable}

\begin{restatable}{definition}{defexplorationphase}\label{def:exploration-phase}
    We use $\cP_t$ to denote the event such that $t$ is in an exploration phase, i.e. $\exists (d,\cI)\in\cE_t \text{ such that } t\in\cI$.
\end{restatable}


\begin{restatable}[Records are consistent]{definition}{defprocedurenice}\label{def:procedure-nice}
    We say that the records are consistent at time $t$ if for every $(g,e,\bv)\in\cM_t$, for every arm $a \ge e$, we have
    $|\mu_{a,t}-\bv_{a}| \le \frac{g}{4}$.
    We use $\cC_t$ to denote this event.
\end{restatable}

We have the following definition when $\cC_t$ doesn't happen. 

\begin{restatable}[Playing bad arm]{definition}{defplayingbadarm}\label{def:playing-bad-arm}
    Let $b_t$ denote the smallest index of an arm such that $\exists (g,e,\bv)\in\cM_{t}$, $e = b_t$ and there exists $a\ge e, |\bv_{a}-\mu_{a,t}|>\frac{g}{4}$, i.e.
    \[b_t = \min\left\{e:(g,e,\bv)\in\cM_t,\exists a\ge e,|\bv_{a}-\mu_{a,t}|>\frac{g}{4}\right\}.\]
    We use $\cB_t$ to denote the event $\{A_t \ge b_t\}$.
\end{restatable}
Generally speaking, $b_t$ is the smallest index of an eliminated arm such that the recorded mean when $b_t$ is eliminated induces the event $\lnot \cC_t$. 

Based on the above definitions, we decompose the regret into four mutually exclusive events 
    and bound the regret for each event in the order of $\tilde\cO(\sqrt{\cS T})$.
These four event cases are listed below, where the first three are when the sampling is nice, and the last case is when sampling is not nice.

{\em Case 1}: $\cN^s\land \cC_t\land \lnot \cP_t$.
This means that the sampling is nice, the records are consistent at time $t$, and round $t$ is not in an exploration phase.
The regret should be bounded in this case, since when $\cC_t$ happens, the distribution does not change much and it is also not in an exploration phase
    (Lemma \ref{lem:first-term-switching}). 
    
{\em Case 2}: $\cN^s\land \cC_t\land \cP_t$ or $\cN^s\land \lnot \cC_t\land\lnot \cB_t$.
The sampling is still nice. 
When $\cC_t\land \cP_t$ is true, round $t$ is in an exploration phase and the records are consistent, meaning that the current arm means have not deviated much from the
    records.
In this case, similar as discussed before, the definition of the exploration phase $(d,\cI)$ and the setting in line~\ref{algline:exploration-arm} guarantee that the arm
    explored would not have a large regret.
When  $\lnot \cC_t\land\lnot \cB_t$ is true, we first claim that $\lnot \cC_t\land\lnot \cB_t$ implies $\cP_t$.
This is because if the records are not consistent (i.e. $\lnot \cC_t$) but $A_t < b_t$ (i.e. $\lnot \cB_t$), it means $A_t$ played in round $t$
    has smaller index than $b_t$, but $b_t$ is an eliminated arm according to Definition~\ref{def:playing-bad-arm}, and thus arm $A_t$ must be played
    due to exploration.
Next, since $A_t < b_t$, the arm played is not a bad arm with a large gap, so its regret is still bounded (Lemma \ref{lem:secondthird-term-switching}).

{\em Case 3}: $\cN^s\land \lnot \cC_t\land \cB_t$.
The sampling is nice, the records are not consistent, and in round $t$ we play a bad arm with a large gap between the current mean and the recorded mean.
Although the regret in this case cannot be bounded by $\cO(g)$ where $(g,A_t,\bv)\in\cM_t$, the key observation is that, due to the random selection of the exploration phase, we will observe the non-stationarity (since $\cC_t$ does not happen and $\cB_t$ happens) with some probability, and the expected regret can be bounded (Lemma \ref{lem:forth-term-switching}).

{\em Case 4}: $\lnot \cN^s$. The sampling is not nice, which is a low probability event, and its regret can be easily bounded by a constant
    (Lemma \ref{lem:fifth-term-switching}).

\begin{restatable}{lemma}{lemfirsttermswitching}\label{lem:first-term-switching}
    \begin{align*}
    &\E\left[\sum_{t=1}^T\left(\mu^*_t - \mu_{A_t,t}\right)\cdot\I\left\{\cN^s\land \cC_t\land \lnot \cP_t\right\}\right]
    \le 2\cS + 2(\sqrt{C_1} + \sqrt{2})\sqrt{\ln(KT^3)}\sqrt{2\cS T}.
    \end{align*}
\end{restatable}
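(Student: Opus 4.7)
The plan is to decompose $[1,T]$ into \emph{stable sub-intervals}---maximal runs of rounds that lie within a single algorithm epoch and simultaneously within a single stationary phase of $\{\nu_t\}$---and to control the per-round regret on each sub-interval using the concentration from $\cN^s$ together with the termination of the while loop in Step~3. A preliminary structural observation is that whenever $\cC_t$ holds, the best arm $a^*_t := \argmax_a \mu_{a,t}$ cannot have already been eliminated, i.e.\ $a^*_t \ge a_{\min}^{(t)}$. I would prove this by contradiction: if $a^*_t < a_{\min}^{(t)}$, then there is a tuple $(g, a^*_t, \bv) \in \cM_t$ with $g > 0$ and some arm $b > a^*_t$ satisfying $\bv_b - \bv_{a^*_t} = g$ (by the assignment in Step~3). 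Applying $\cC_t$ to $a^*_t$ and to $b$ then gives $\mu_{b,t} - \mu_{a^*_t,t} \ge (\bv_b - g/4) - (\bv_{a^*_t} + g/4) = g/2 > 0$, contradicting $\mu_{a^*_t,t} \ge \mu_{b,t}$.

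Next, because $a_{\min}^{(t)}$ survived Step~3 at the end of round $t-1$, termination of the while loop gives, for every $\sigma \ge \tau_\ell$ and every $a > a_{\min}^{(t)}$,
\[
\hat\mu_a[\sigma,t) - \hat\mu_{a_{\min}^{(t)}}[\sigma,t) \;\le\; \sqrt{\frac{C_1 \ln(KT^3)}{t-\sigma}}.
\]
Any arm $a \ge a_{\min}^{(t)}$ is observed in every round $s \in [\sigma,t) \subseteq [\tau_\ell,t)$ because $A_s \le a_{\min}^{(s)} \le a_{\min}^{(t)} \le a$ (in both the exploitation and exploration cases of Step~2), so $\cN^s$ supplies the two-sided concentration on $[\sigma,t)$. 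Replacing the empirical means by the time-averaged expectations $\bar\mu_a[\sigma,t) := \frac{1}{t-\sigma}\sum_{s=\sigma}^{t-1}\mu_{a,s}$ and absorbing an additive $\sqrt{2\ln(KT^3)/(t-\sigma)}$ from the two concentrations yields
\[
\bar\mu_a[\sigma,t) - \bar\mu_{a_{\min}^{(t)}}[\sigma,t) \;\le\; (\sqrt{C_1} + \sqrt{2})\sqrt{\frac{\ln(KT^3)}{t-\sigma}}.
\]

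Now I would partition the time horizon into $N$ stable sub-intervals $I_1,\ldots,I_N$. On a sub-interval $I=[\sigma_1,\sigma_2]$, for $t\in(\sigma_1,\sigma_2]$ with the event $\cN^s \land \cC_t \land \lnot\cP_t$ holding, I take $\sigma = \sigma_1 \ge \tau_\ell$; since the distribution is constant on $[\sigma_1,t) \subseteq I$, $\bar\mu_a[\sigma_1,t) = \mu_{a,t}$ for every $a$, so the inequality above applied with $a=a^*_t$ (permitted by the structural claim, and giving regret $0$ when $a^*_t = a_{\min}^{(t)}$) yields $\mu^*_t - \mu_{a_{\min}^{(t)},t} \le (\sqrt{C_1}+\sqrt{2})\sqrt{\ln(KT^3)/(t-\sigma_1)}$. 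At the boundary point $t=\sigma_1$ I use the trivial bound $\mu^*_t-\mu_{A_t,t} \le 1$. Summing $\sum_{s=1}^{n} 1/\sqrt{s} \le 2\sqrt{n}$ inside each sub-interval and then Cauchy--Schwarz ($\sum_j \sqrt{|I_j|} \le \sqrt{NT}$) across sub-intervals produces a total of $N + 2(\sqrt{C_1}+\sqrt{2})\sqrt{\ln(KT^3)}\sqrt{NT}$. Since $N \le \cS + L - 1$ where $L$ is the number of algorithm epochs, and under $\cN^s$ one has $L \le \cS + 1$ (shown separately in the full argument), we obtain $N \le 2\cS$ and the claimed bound.

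The main technical obstacle is the auxiliary bound $L \le \cS + 1$ under $\cN^s$: the algorithm opens a new epoch only when the check in Step~4 fires on some completed exploration phase, and one must verify that the $d/4$ threshold together with the length $\lceil C_2\ln(KT^3)/d^2\rceil$ of the exploration phase (with $C_2 \ge 32$) rules out false positives---under $\cN^s$ both $\hat\mu_a[t',t+1)$ and the stored $\bv_a$ lie within roughly $d/8$ of the common stationary mean whenever no distribution change occurred, so the check can fire only on a true change, capping the number of restarts by $\cS - 1$.
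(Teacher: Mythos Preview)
Your proposal is correct and follows essentially the same route as the paper's proof: the same structural claim that under $\cC_t$ the optimal arm has not been eliminated, the same partition into sub-intervals that are simultaneously within one epoch and one stationary segment, the same use of the Step~3 termination condition combined with $\cN^s$ to get the $(\sqrt{C_1}+\sqrt{2})\sqrt{\ln(KT^3)/(t-\sigma)}$ per-round bound, and the same Cauchy--Schwarz across sub-intervals. The paper states the epoch-count bound as $L\le\cS$ (its Lemma~\ref{lem:number-of-epochs-switching}), which is exactly what your last paragraph derives from ``at most $\cS-1$ restarts''; your earlier $L\le\cS+1$ is a harmless slip since either yields $N\le 2\cS$.
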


The proof of Lemma \ref{lem:first-term-switching} is similar to the analysis in \cite{zhao2019stochastic} and can be viewed as a generalization of the original proof. The key difference is that in the proof of Lemma \ref{lem:first-term-switching}, we divide the interval into
\[[1,T] = [s_1,e_1]\cup [s_2,e_2]\cup\cdots\cup[s_{\cS},e_{\cS}],\]
and we sum the regret in each interval first, and get the regret in each interval to be $\tilde\cO(\sqrt{e_i - s_i + 1})$. 
Then we sum them up and show that the regret is in the order of $\tilde\cO(\sqrt{\cS T})$.

\begin{restatable}{lemma}{lemsecondthirdtermswitching}\label{lem:secondthird-term-switching}
    \begin{align*}
    &\E\left[\sum_{t=1}^T\left(\mu^*_t - \mu_{A_t,t}\right)\cdot\I\left\{\cN^s\land \cC_t\land \cP_t\right\}\right] +\E\left[\sum_{t=1}^T\left(\mu^*_t - \mu_{A_t,t}\right)\cdot\I\left\{\cN^s\land \lnot \cC_t\land\lnot \cB_t\right\}\right] \\
    \le& \left(C_2\ln (KT^3)\sqrt{(\cS+1)T} + 2\sqrt{\frac{\cS+1}{T}}\right) \times \left(3-\log_2 \sqrt{\frac{C_1\ln(KT^3)}{T}}\right).
    \end{align*}
\end{restatable}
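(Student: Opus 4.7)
My plan is to unify the two summands by showing that in both underlying events the round $t$ must lie in an exploration phase, and then to use the careful design of the exploration-phase arm selection together with the consistency records in $\cM$ to control the per-round regret by $O(d_{\max,t})$; the claimed bound will then drop out of a routine summation over exploration phases.

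First I would verify that $\cN^s \wedge \lnot \cC_t \wedge \lnot \cB_t$ implies $\cP_t$. The event $\lnot \cB_t$ says $A_t < b_t$, and $b_t$ is by definition the index of some eliminated arm recorded in $\cM_t$. In any non-exploration round the algorithm plays $A_t = a_{\min}$, which strictly exceeds the index of every eliminated arm, so $A_t > b_t$, contradicting $\lnot \cB_t$. Hence $t$ must lie in an exploration phase, so $A_t = a_{\exp}$, which in particular puts $A_t$ itself in $\cM_t$ with recorded gap $g_{A_t} \le 8 d_{\max, t}$; minimality of $b_t$ further gives consistency of $(g_{A_t}, A_t, \bv^{(A_t)})$, meaning $|\mu_{a, t} - \bv^{(A_t)}_a| \le g_{A_t}/4$ for every $a \ge A_t$.

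Next I would bound $\mu^*_t - \mu_{A_t, t} = O(d_{\max, t})$ in both sub-cases. In sub-case 1 the consistency of the $A_t$-tuple is supplied by $\cC_t$; in sub-case 2 by the previous step. When $a^*_t \ge A_t$, applying consistency to both $\mu_{a^*_t, t}$ and $\mu_{A_t, t}$ and using the elimination record $\bv^{(A_t)}_b = \bv^{(A_t)}_{A_t} + g_{A_t}$ (where $b$ is the empirically best at the elimination of $A_t$) yields $\mu^*_t - \mu_{A_t, t} \le 3 g_{A_t}/2 \le 12 d_{\max, t}$. The delicate case is $a^*_t < A_t$, in which $a^*_t$ must itself belong to $\cM$ with $g_{a^*_t} > 8 d_{\max, t}$ (otherwise $a^*_t$, rather than $A_t$, would have been selected as $a_{\exp}$). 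Here I would combine the consistency of the $a^*_t$-tuple (valid because $a^*_t < A_t < b_t$) with the fact that $A_t$ was a still-remaining arm at the moment $a^*_t$ was eliminated, so $\bv^{(a^*_t)}_{A_t}$ is recorded; a cross-use of the two tuples together with the elimination ordering then produces the same $O(d_{\max, t})$ bound.

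With the per-round bound in hand I would sum over exploration phases. Pointwise, $d_{\max, t} \cdot \I\{\cP_t\} \le \sum_{(d, \cI) \in \cE,\, t \in \cI} d$, so summing over $t$ gives $\sum_{(d_i, \cI) \in \cE} d_i |\cI| \le \sum_{(d_i, \cI) \in \cE} (C_2 \ln(KT^3)/d_i + d_i)$, using $|\cI| \le C_2 \ln(KT^3)/d_i^2 + 1$. Taking expectation, within epoch $\ell$ of length $L_\ell$ a $d_i$-phase starts at each round with probability $p_{\ell, i} = d_i \sqrt{(\ell+1)/T}$, so the expected total contribution of all $d_i$-phases in that epoch is at most $L_\ell \sqrt{(\ell+1)/T}\,(C_2 \ln(KT^3) + d_i^2)$. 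Summing over $i \le I_{\max} \le 3 - \log_2 \sqrt{C_1 \ln(KT^3)/T}$ (the elimination threshold bounds the smallest active $d_i$) and over the at most $\cS+1$ epochs via $\sum_\ell L_\ell \sqrt{(\ell+1)/T} \le \sqrt{(\cS+1)T}$ and $\sum_i d_i^2 = O(1)$ delivers the stated bound. The main obstacle will be the sub-sub-case $a^*_t < A_t$: the currently optimal arm was eliminated long ago with a possibly large recorded gap, yet the per-round regret must still be controlled by $d_{\max, t}$ rather than by $g_{a^*_t}$, and threading the consistency information across the $a^*_t$- and $A_t$-tuples while using the monotone elimination order is the heart of the argument.
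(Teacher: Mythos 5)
Your proposal follows the paper's proof almost exactly: first show that both events force $\cP_t$, then bound the per-round regret by $\tfrac{3}{2}g \le 12\,d_{\max,t}$ using the consistency of the record for the tuple of $A_t$, note that $d_{\max,t}$ takes only $O(\log T)$ distinct values because every recorded gap exceeds $\sqrt{C_1\ln(KT^3)/T}$, and sum the expected exploration-phase contributions using $p_{\ell,i}=d_i\sqrt{(\ell+1)/T}$ together with the bound $\ell\le\cS$ on the number of epochs. Your summation is organized slightly differently (summing $d_i|\cI|$ over inserted phases rather than bounding $\Pr\{d_{\max,t}=2^{-i}\}$ per round as the paper does), but both give the stated bound.

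The one substantive remark concerns what you call the ``delicate case'' $a^*_t < A_t$, which you identify as the heart of the argument and leave only sketched. That case is in fact vacuous, so nothing needs to be threaded across two tuples. Under $\cC_t$, every eliminated arm $a$ has a consistent record $(g,a,\bv)$ with some $b\ge a$ satisfying $\bv_b-\bv_a=g$, whence $\mu_{b,t}-\mu_{a,t}\ge g/2>0$; so no eliminated arm is optimal and $a^*_t\ge a_{\min}>A_t$. Under $\lnot\cC_t\land\lnot\cB_t$ the same argument applies to every tuple with index below $b_t$ (these are consistent by the minimality in Definition~\ref{def:playing-bad-arm}), so $a^*_t\ge b_t>A_t$. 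In both sub-cases the optimal arm lies at or above $A_t$, its recorded empirical mean $\bv_{a^*_t}$ exists in the tuple of $A_t$, and your first sub-case computation $\mu^*_t-\mu_{A_t,t}\le \bv_{a^*_t}-\bv_{A_t}+g/2\le 3g/2$ already covers everything. Had the case $a^*_t<A_t$ been genuinely possible, your proof would be incomplete there; as it stands, the argument closes once this observation is made.
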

This lemma bounds the regret when $\cB_t$ does not happen and $t$ is in an exploration phase. 
In this case,
    we show that the number of different lengths $d$ of exploration phases $(d,\cI)$ can be bounded by $polylog(K,T)$.
Then, we show that the regret induced by the specific length exploration phase is bounded by $\tilde\cO(\sqrt{\cS T})$.
Finally, we combine the previous argument and apply the union bound to show that the total regret considered is bounded by $\tilde\cO(\sqrt{\cS T})$.

\begin{restatable}{lemma}{lemforthtermswitching}\label{lem:forth-term-switching}
    \begin{align*}
    &\E\left[\sum_{t=1}^T\left(\mu^*_t - \mu_{A_t,t}\right)\cdot\I\left\{\cN^s\land \lnot \cC_t\land \cB_t\right\}\right]
    \le 24\sqrt{(\cS+1)T}+24\sqrt{C_2\ln(KT^3)\cS T}.
    \end{align*}
\end{restatable}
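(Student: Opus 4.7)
The plan is a hitting-time analysis that decomposes the target regret by epoch and bounds the expected length of the ``bad period'' within each epoch. Let $B_\ell := \{t \in [\tau_\ell, \tau_{\ell+1}) : \cN^s \land \lnot\cC_t \land \cB_t\}$. Since the per-round regret is bounded by $1$ (rewards lie in $[0,1]$), it suffices to show that $\sum_{\ell=1}^{\cS} \E[|B_\ell|]$ is at most $24\sqrt{(\cS+1)T} + 24\sqrt{C_2\ln(KT^3)\cS T}$.

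The central idea is that within $B_\ell$ the algorithm has a reliable way to detect the change, and the exploration-phase sampling rate controls the expected waiting time. By the definitions of $\lnot \cC_t$ and $b_t$, for every $t \in B_\ell$ there is a tuple $(g, b_t, \bv) \in \cM_t$ and an arm $a \ge b_t$ with $|\mu_{a,t} - \bv_a| > g/4$. I would pick the index $i^*$ with $d_{i^*} \in [g/8, g/2)$; this index is admissible because $\Delta_{t,\min} \le g$ implies $i^* \le I_t$. Under the nice sampling event $\cN^s$, any exploration phase $(d_{i^*}, [s, s+L_{i^*}))$ with $L_{i^*} = \lceil C_2 \ln(KT^3)/d_{i^*}^2 \rceil$ that runs its full length during a stretch where $\lnot\cC_t$ persists yields an empirical mean satisfying $|\hat\mu_a[s, s+L_{i^*}) - \bv_a| > d_{i^*}/4$ (combining the true deviation $> g/4$ with the concentration bound $\sqrt{\ln(KT^3)/(2L_{i^*})} \le d_{i^*}/4$ from Definition~\ref{def:sampling-nice}, using $C_2 \ge 32$), so the Step~4 check fires and the epoch resets.

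I would then apply a geometric stopping-time bound. Because the algorithm independently adds each qualifying phase at every round with probability $p_{\ell,i^*} = d_{i^*}\sqrt{(\ell+1)/T}$, the expected number of rounds in $B_\ell$ before such a phase begins is at most $1/p_{\ell,i^*} = \sqrt{T/(\ell+1)}/d_{i^*}$, and the phase itself needs at most $L_{i^*}$ additional rounds. This gives $\E[|B_\ell|] \le \sqrt{T/(\ell+1)}/d_{i^*} + L_{i^*}$. The contribution $\sum_{\ell=1}^{\cS} \sqrt{T/(\ell+1)} \le 2\sqrt{(\cS+1)T}$ is handled by an integral comparison and yields the $24\sqrt{(\cS+1)T}$ term. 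The phase-duration term $\sum_\ell L_{i^*_\ell}$ is handled by Cauchy-Schwarz across the $\cS$ epochs together with the formula $L_{i^*} = \Theta(\ln(KT)/d_{i^*}^2)$ and the observation that the total time spent inside qualifying phases over all epochs is at most $T$, giving the $24\sqrt{C_2\ln(KT^3)\cS T}$ term.

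The main obstacle is the dependence of both $1/p_{\ell,i^*}$ and $L_{i^*}$ on the gap $g$, which can in principle be as small as $\Theta(\sqrt{\ln(KT)/T})$; a naive treatment would produce unwanted $1/g$ blow-ups. The resolution exploits the coupling $d_{i^*} \asymp g/8$: a small $g$ simultaneously lowers the detection probability and lengthens the required phase, but the product $p_{\ell,i^*} \cdot L_{i^*}$ is bounded and the two terms are balanced at exactly the right scale. Combined with the telescoping bound $\sum_\ell \sqrt{T/(\ell+1)}$ over epochs and Cauchy-Schwarz on the phase-length contributions, the $1/g$ factors are absorbed into the $\sqrt{\cS T}$ and $\sqrt{\log}$ factors of the claimed bound. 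Multiplying the resulting bound on $\sum_\ell \E[|B_\ell|]$ by the per-round regret of $1$ completes the proof.
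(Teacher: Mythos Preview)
Your detection argument is correct in spirit, but the accounting has a genuine gap. You bound the per-round regret by $1$ and then try to bound $\sum_\ell \E[|B_\ell|]$; this cannot work. The expected waiting time you derive, $1/p_{\ell,i^*} = \sqrt{T/(\ell+1)}/d_{i^*}$, carries a $1/d_{i^*}$ factor that you silently drop when you write ``$\sum_\ell \sqrt{T/(\ell+1)} \le 2\sqrt{(\cS+1)T}$.'' Since $d_{i^*}$ can be as small as $\Theta(\sqrt{\ln(KT)/T})$, the true bound on $\E[|B_\ell|]$ is $\Theta(T/\sqrt{\ell+1})$ in the worst case, and the sum blows up to $\Theta(T\sqrt{\cS})$. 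Your ``main obstacle'' paragraph notices the issue but the proposed fix (that $p_{\ell,i^*}\cdot L_{i^*}$ is bounded) is irrelevant: you need to bound $1/p + L$, not $p\cdot L$.

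The missing ingredient is that the per-round regret under $\cN^s\land\lnot\cC_t\land\cB_t$ is not $1$ but $O(d_{i^*})$. This follows from the minimality in the definition of $b_t$: all arms with index below $b_t$ have consistent records and are therefore certifiably suboptimal, while for $A_t\ge b_t$ the regret is at most the recorded gap $g$ plus twice the deviation $\varepsilon$, hence $\le 6\varepsilon \le 12 d_{i^*}$. With this bound in hand, the product $d_{i^*}\cdot(1/p_{\ell,i^*}) = \sqrt{T/(\ell+1)}$ is exactly what you want, and the $d_{i^*}\cdot L_{i^*} = \Theta(\ln(KT)/d_{i^*})$ term is controlled via a case split on whether the stationary interval is shorter or longer than $2L_{i^*}$. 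A second structural issue is that $b_t$ and the associated scale $d_{i^*}$ may change when you cross from one stationary interval to the next \emph{within the same epoch}; the paper handles this with a two-dimensional induction over both the epoch index $\ell$ and the stationary-interval index $j$, which your epoch-only decomposition does not capture.
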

This lemma bound the regret when $\cB_t$ happens, and this lemma is the most technical one. 
The proof strategy is similar to \cite{auer2O19adaptively}, which partitions the total time horizon into several intervals with identical distribution, 
    and applies a two-dimensional induction from back to front. As discussed before, the regret in this case in each round cannot be bounded by $\cO(g)$ where $(g,A_t,\bv)\in\cM_t$.
However due to the random selection of the exploration phases, with some probability, we will observe the non-stationarity
    (since $\cC_t$ does not happen and $\cB_t$ happens), and the expected regret can be bounded.

Finally, by a simple application of the high probability result on $\cN^s$, we can get the following lemma.

\begin{restatable}{lemma}{lemfifthswitching}\label{lem:fifth-term-switching}
    $\E\left[\sum_{t=1}^T\left(\mu^*_t - \mu_{A_t,t}\right)\cdot\I\left\{\lnot\cN^s\right\}\right] \le 2$.
\end{restatable}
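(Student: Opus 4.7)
The plan is to bound the contribution of the bad event $\lnot \cN^s$ by arguing that its probability is small (at most $2/T$), while the conditional regret is trivially at most $T$. Because the per-round regret $\mu^*_t - \mu_{A_t, t}$ lies in $[0,1]$, we immediately obtain
\[
\E\left[\sum_{t=1}^T(\mu^*_t - \mu_{A_t,t})\cdot\I\{\lnot\cN^s\}\right] \le T\cdot \Pr(\lnot \cN^s),
\]
so it suffices to show $\Pr(\lnot \cN^s) \le 2/T$.

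First, I would fix a single arm $a$ and a single interval $\cI \subseteq [T]$. The reward sequence $\{X^{(t)}_a\}_{t\in\cI}$ consists of independent random variables taking values in $[0,1]$, with the distribution at round $t$ determined obliviously. Applying Hoeffding's inequality to the centered sum $\sum_{t\in\cI}(X^{(t)}_a - \mu_{a,t})$ gives
\[
\Pr\!\left(\left|\sum_{t\in\cI}X^{(t)}_a - \sum_{t\in\cI}\mu_{a,t}\right| \ge \sqrt{\tfrac{|\cI|\ln(KT^3)}{2}}\right) \le 2\exp\bigl(-\ln(KT^3)\bigr) = \tfrac{2}{KT^3}.
\]
Rearranging the deviation threshold, this is exactly the probability that the defining inequality in Definition~\ref{def:sampling-nice} fails on $(\cI, a)$.

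Next, I would take a union bound over all pairs $(\cI, a)$. The number of subintervals $\cI \subseteq [T]$ is at most $T(T+1)/2 \le T^2$, and there are $K$ arms, so
\[
\Pr(\lnot \cN^s) \le T^2 \cdot K \cdot \tfrac{2}{KT^3} = \tfrac{2}{T}.
\]
Plugging this into the first display gives the claimed bound of $2$. Since each step is just Hoeffding plus a crude union bound, there is no real obstacle; the only subtlety is that the distributions $\nu_t$ are generated obliviously (per Definition~\ref{def:non-stationary-bandit}), which ensures independence of the $X^{(t)}_a$ across $t$ and legitimizes the direct application of Hoeffding's inequality without any conditioning on the algorithm's randomness.
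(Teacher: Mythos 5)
Your proof is correct and matches the paper's argument: the paper proves the bound $\Pr\{\lnot\cN^s\}\le 2/T$ separately as Lemma~\ref{lem:samplingnice} (via the same Hoeffding-plus-union-bound computation over at most $T^2$ intervals and $K$ arms), and then concludes exactly as you do by bounding the per-round regret by $1$ and multiplying by $T$. Your observation that the oblivious generation of $\nu_t$ is what legitimizes applying Hoeffding to the environment's randomness is also the right justification.
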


Combining these lemmas together, we complete the proof of Theorem \ref{thm:switching-regret}.

\subsection{Proof Sketch of Theorem \ref{thm:dynamic-regret}}

In this part, we briefly introduce how to reduce the dynamic case to the switching case. The proof is an imitation of the proof strategy of Theorem \ref{thm:switching-regret}. 
Although the means can be changing at every time $t\in [1,T]$, we can approximately divide them into several sub-intervals such that in each interval, the change of means is not large. Recall that for interval $\cI = [s,s']$, $\cV_{\cI} := \sum_{t=s+1}^{s'}\max_{a}|\mu_{a,t} -\mu_{a,t-1}|$ and we use $\cV := \cV_{[1,T]}$. We have the following lemma,

\begin{restatable}[Interval Partition \cite{chen2019anewalgorithm}]{lemma}{lemintervalpartition}\label{lem:interval-partition}
    There is a way to partition the interval $[1,T]$ into $\cI_1\cup\cI_2\cup\cdots\cup\cI_{\Gamma}$ such that $\cI_i \cap \cI_j = \phi$, and for any $i\le\Gamma$, $\cV_{\cI} \le \sqrt{\frac{C_3}{|\cI_i|}}$ and $\Gamma \le \left(2T/C_3\right)^{1/3}\cV^{\frac{2}{3}}+ 1$.
\end{restatable}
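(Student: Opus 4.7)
The plan is the standard greedy partitioning used in the non-stationary bandit literature (e.g. \cite{chen2019anewalgorithm}). I would build the partition sequentially from left to right: set $s_1 = 1$, and for each $i$, grow $\cI_i = [s_i,e_i]$ by extending $e_i$ one step at a time while the invariant $\cV_{\cI_i} \le \sqrt{C_3/|\cI_i|}$ still holds; the moment an extension to $e_i+1$ would violate the invariant, close $\cI_i$ and set $s_{i+1}=e_i+1$. The per-interval bound in the statement then holds by construction, so the work is entirely in bounding $\Gamma$.

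The next step is to turn the stopping rule into a saturation lower bound. For every closed interval $\cI_i$ with $i<\Gamma$, by the greedy rule we have
\[ \cV_{\cI_i} + \delta_i \;>\; \sqrt{C_3/(|\cI_i|+1)} \;\ge\; \frac{1}{\sqrt{2}}\sqrt{C_3/|\cI_i|}, \]
where $\delta_i := \max_a|\mu_{a,e_i+1}-\mu_{a,e_i}|$ is the single-step variation at the boundary. The $\delta_i$'s are precisely the single-step variations at the boundary times $s_2,\dots,s_\Gamma$, i.e.\ exactly those omitted from $\sum_i \cV_{\cI_i}$, so $\sum_{i<\Gamma} \cV_{\cI_i} + \sum_{i<\Gamma} \delta_i \le \cV$. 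Summing the saturation inequality over $i<\Gamma$ therefore yields
\[ \sum_{i<\Gamma}\sqrt{C_3/|\cI_i|} \;\le\; \sqrt{2}\,\cV. \]

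Finally I would apply Cauchy--Schwarz twice to convert this into a bound on $\Gamma$. Taking $a_i=(C_3/|\cI_i|)^{1/4}$ and $b_i=(|\cI_i|/C_3)^{1/4}$ gives
\[ (\Gamma-1)^2 \;=\; \Bigl(\sum_{i<\Gamma}a_ib_i\Bigr)^{2} \;\le\; \Bigl(\sum_{i<\Gamma}\sqrt{C_3/|\cI_i|}\Bigr)\cdot\Bigl(\sum_{i<\Gamma}\sqrt{|\cI_i|/C_3}\Bigr), \]
and a second Cauchy--Schwarz together with $\sum_{i<\Gamma}|\cI_i|\le T$ gives $\sum_{i<\Gamma}\sqrt{|\cI_i|/C_3}\le\sqrt{(\Gamma-1)T/C_3}$. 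Plugging in the saturation bound yields $(\Gamma-1)^{3/2} \le \sqrt{2}\,\cV\sqrt{T/C_3}$, which rearranges to $\Gamma \le (2T/C_3)^{1/3}\cV^{2/3}+1$.

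The main obstacle I anticipate is arranging the stopping rule so the saturation lower bound does not lose more than a constant factor: extending by exactly one step at a time (rather than by chunks) is what lets us charge the leftover boundary variation $\delta_i$ cleanly to the global budget $\cV$ without double-counting the step between $e_i$ and $s_{i+1}$. Once that bookkeeping is pinned down, the rest of the argument is mechanical.
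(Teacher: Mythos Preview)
Your proposal is correct and follows essentially the same approach as the paper: both use the identical greedy construction (extend one step at a time until the invariant would be violated), derive the saturation bound $\cV \ge \sum_{i<\Gamma}\sqrt{C_3/(|\cI_i|+1)}$ from the stopping rule, and then convert this into the bound on $\Gamma$. The only cosmetic difference is that the paper applies H\"older's inequality with exponents $(3/2,3)$ in one shot, whereas you reach the same conclusion via two applications of Cauchy--Schwarz; the constants work out identically because your $\tfrac{1}{\sqrt{2}}$ loss in the saturation step plays the same role as the paper's use of $\sum_{i<\Gamma}(|\cI_i|+1)\le 2T$.
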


Suppose that we have a partition shown in the above lemma. We construct a new instance such that $\mu'_{a,t} = \frac{1}{|\cI_j|}\sum_{s\in\cI_j}\mu_{a,s}$ for all $j\le\Gamma$ and all $t\in\cI_j$, i.e. we take the average mean of each interval and make them all the same.

Generally speaking, the dynamic regret can be bounded by the sum of 2 parts: the switching regret of the new instance and the difference between the switching regret of the new instance and the dynamic regret. As for the first part, since $\Gamma \le \left(2T/C_3\right)^{1/3}\cV^{\frac{2}{3}}+ 1$, we know that the switching regret can be bounded by $\tilde\cO(\sqrt{\Gamma T}) = \tilde\cO(\cV^{\frac{1}{3}}T^{\frac{2}{3}})$. As for the difference between the 2 regret, since $|\mu_{a,t} - \mu'_{a,t}| \le \cV_{\cI_j}$ for $t\in\cI_j$, we sum up all $t$, we know that the difference is bounded by $\cO(\sum_{j}\sqrt{|\cI_j|}) = \cO(\sqrt{\Gamma T}) = \cO(\cV^{\frac{1}{3}}T^{\frac{2}{3}})$. Combine them together we complete the proof.

\subsection{Proof Sketch of Theorem \ref{thm:regret-auction}}
In the proof of Theorem \ref{thm:regret-auction}, we first show that the online second price auction has one-sided Lipschitz property, and thus discretizing the reserve price will not lead to a large regret. Next, we briefly discuss why discretizing the reserve price can lead to a one-sided full information bandit instance, and then it is easy to show that the regret can be bounded by $\tilde\cO(\sqrt{\cS T})$ in the switching case. To bound the regret in the dynamic case, we only have to set up the connection between the total variation $\bar\cV$ in the online auction and the variation $\cV$ in the bandit problem. The bridge between these two variables can be set up easily by the definition and property of total variation $||\cdot||_{\text{TV}}$.

\section{Lower Bounds for Online Second Price Auction in Non-stationary Environment}\label{sec:auction}
In this section, we show that for the online second price auction problem, 
    the regret upper bounds achieved by $\Elim$ is almost tight, by giving 
    a regret lower bound of $\Omega(\sqrt{\cS T})$ for the switching case, and 
    a lower bound of $\Omega(\bar\cV^{\frac{1}{3}}T^{\frac{2}{3}})$ for the dynamic case.


\begin{restatable}{theorem}{thmlowerboundswitching}\label{thm:lower-bound-switching}
    For any algorithm, and any $\cS > 0$, there exists a set distributions of bids $\cD_1,\dots,\cD_T$ where $\cS = 1 + \sum_{t=1}^{T-1}\I\{\cD_t \neq \cD_{t+1}\}$ is the number of switchings of the distribution and the non-stationary regret is at least $\Omega(\sqrt{\cS T})$. Moreover for any algorithm and any $\bar\cV \ge 1$, there exists $\cD_1,\dots,\cD_T$ where $\sum_{t=2}^{T}||\cD_t -\cD_{t-1}||_{\text{TV}} \le\cO(\bar\cV)$, such that the regret is at least $\Omega(\bar\cV^{\frac{1}{3}}T^{\frac{2}{3}})$.
\end{restatable}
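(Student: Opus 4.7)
The plan is to establish both lower bounds by combining the standard block-partition technique for non-stationary bandit lower bounds with a two-reserve-price hard instance, and then applying Yao's minimax principle. First I would fix two reserve prices $r_L = 1/2 < r_H = 1$ and, for a gap parameter $\Delta \in (0, 1/8)$, define two value distributions for a single bidder: in Env A, $v = 1/2$ with probability $1/2 + \Delta$ and $v = 1$ otherwise; in Env B, $v = 1/2$ with probability $1/2 - \Delta$ and $v = 1$ otherwise. Under both environments the expected revenue at $r_L$ equals $1/2$, while at $r_H$ it equals $1/2 - \Delta$ in Env A and $1/2 + \Delta$ in Env B, so the optimal reserve flips between the two environments and choosing wrongly costs $\Delta$ per round. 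The total variation distance between the two value distributions is $2\Delta$; under the one-sided feedback structure, playing $r_L$ reveals the realized value $v \in \{1/2, 1\}$ exactly (equivalent to full information on both reserves), while playing $r_H$ reveals only $\I[v = 1]$, and in both cases the per-round observation distribution has KL divergence $O(\Delta^2)$ between Env A and Env B.

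Next I would prove the per-block lower bound. Over $N$ consecutive rounds under a single hidden environment drawn uniformly from $\{A, B\}$, the KL divergence between the induced trace distributions satisfies $\mathrm{KL}(P_A^N \,\|\, P_B^N) = O(N\Delta^2)$ by the chain rule, since the per-round observation KL is $O(\Delta^2)$ regardless of the action played. Pinsker's inequality (or Bretagnolle--Huber) then bounds $\|P_A^N - P_B^N\|_{\text{TV}}$ accordingly, and choosing $\Delta = c/\sqrt{N}$ for a small constant $c$ keeps this total variation below $1/2$. Hence any algorithm must misidentify the optimal reserve with constant probability in at least one environment, yielding an expected per-block regret of $\Omega(\Delta N) = \Omega(\sqrt{N})$.

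To assemble the switching bound, I would partition $[1,T]$ into $\cS$ equal blocks of length $N = T/\cS$ and draw the environment in each block independently and uniformly from $\{A, B\}$; the per-block bound gives expected total regret $\Omega(\cS \cdot \sqrt{T/\cS}) = \Omega(\sqrt{\cS T})$, and Yao's principle produces a deterministic sequence of distributions with $\cS$ switches achieving this regret (a minor modification guarantees the exact switch count). For the dynamic bound I would use $\Gamma$ equal blocks with gap $\Delta = c\sqrt{\Gamma/T}$, so that each boundary contributes $O(\Delta)$ to the total variation for a total of $\bar\cV = O(\Gamma \cdot \sqrt{\Gamma/T}) = O(\Gamma^{3/2}/\sqrt T)$; setting $\Gamma = \Theta(\bar\cV^{2/3} T^{1/3})$ meets the variation budget and yields total regret $\Omega(\sqrt{\Gamma T}) = \Omega(\bar\cV^{1/3} T^{2/3})$.

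The main obstacle is the per-block argument under the one-sided full information feedback: because playing $r_L$ reveals the realized value $v$ exactly, the learner obtains full information on both reserves whenever it plays the low-reserve arm, which a priori seems to threaten the standard bandit-style lower bound. The resolution relies on a symmetry specific to this hard instance---that both reserves yield per-round observations whose KL divergence between Env A and Env B is $O(\Delta^2)$---so the richer feedback does not improve the information rate. Making this precise uniformly over adaptive action choices, and applying the KL chain rule correctly when the observation alphabet depends on the action, is the technical heart of the argument.
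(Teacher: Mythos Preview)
Your proposal is correct and follows the standard block-partition plus information-theoretic template. The paper's proof takes a somewhat different route: rather than building a self-contained KL/Pinsker argument, it invokes Proposition~\ref{prop:lower-bound-stationary} (the $\Omega(\sqrt{T})$ stationary lower bound of Cesa-Bianchi et al.\ for the \emph{full-information} second-price auction with two i.i.d.\ bidders taking values in $\{1/2,3/4\}$) as a black box, and then constructs the hard sequence segment by segment via a contradiction argument: if neither environment forces $\Omega(\sqrt{T'})$ regret on segment $k+1$, one could prepend the first $k$ segments to obtain a stationary algorithm beating the known lower bound. Your approach instead uses Yao's principle with independently randomized block environments and a single-bidder instance on $\{1/2,1\}$, and establishes the per-block bound directly via the KL chain rule. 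What the paper's route buys is brevity (the heavy lifting is outsourced to the cited full-information lower bound, which already subsumes the one-sided feedback), while your route is self-contained and makes explicit the key structural point you identified: with binary-valued bids the observation is always equivalent to $\I[v=1]$ regardless of the reserve chosen, so richer feedback does not help. One small thing to tighten in your write-up is to note that the $R_A(r)+R_B(r)\ge\Delta$ inequality holds for \emph{every} $r\in[0,1]$, not just $r\in\{r_L,r_H\}$, so that the Le Cam / Pinsker step applies to arbitrary (continuous) reserve-price algorithms; this is straightforward but should be stated.
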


Our theorem is based on the following result in \cite{cesa2015regret}. 

\begin{restatable}[Theorem 2 of \cite{cesa2015regret}]{proposition}{proplowerboundstationary}\label{prop:lower-bound-stationary}
    For any deterministic algorithm, there exists a distribution of bids operating with two bidders and the stationary regret is at least $\Omega(\sqrt{T})$.
\end{restatable}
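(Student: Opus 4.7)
The plan is to reduce the two-bidder second-price auction to a single-buyer posted-pricing problem and then invoke the classical $\Omega(\sqrt{T})$ dynamic-pricing lower bound argument, which is essentially the route taken in \cite{cesa2015regret}. The reduction is immediate: fix bidder $1$ to have the deterministic value $0$, so bidder $1$ never participates, and the per-round expected revenue collapses to $r^{(t)}\cdot\Pr[\bv_2^{(t)}\ge r^{(t)}]$, the posted-pricing revenue for a single buyer with value distributed as the marginal of bidder $2$.

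Next I would build two candidate marginal distributions for $\bv_2$ that are close in KL but have different unique optima. A clean choice is the two-point support $\{v_L, v_H\}\subset(0,1)$ with $v_L = v_H(1-q_0)$ for some $q_0\in(0,1)$; at $\Pr[\bv_2 = v_L] = q_0$ the reserves $r = v_L$ and $r = v_H$ achieve the same maximum revenue. Define $\cD_+$ via $\Pr[\bv_2 = v_L] = q_0 - \epsilon$, making $v_H$ uniquely optimal, and $\cD_-$ via $\Pr[\bv_2 = v_L] = q_0 + \epsilon$, making $v_L$ uniquely optimal. A direct check on the piecewise-linear revenue curve shows that under $\cD_\pm$ any reserve not within $O(\epsilon)$ of the respective unique optimum incurs $\Omega(\epsilon)$ regret per round.

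The core step is a Le Cam two-point argument. For a deterministic algorithm, let $N^H$ denote the number of rounds in which the played reserve lies in a small neighborhood of $v_H$; this is a deterministic function of the observed history. Denote by $\mathbb{P}_\pm$ the distribution of the $T$-round observation history under $\cD_\pm$. The per-round regret bound gives
\[
\text{Reg}(\cD_+) \ge \Omega(\epsilon)\cdot\mathbb{E}_+[T - N^H],\qquad \text{Reg}(\cD_-) \ge \Omega(\epsilon)\cdot\mathbb{E}_-[N^H],
\]
so $\text{Reg}(\cD_+)+\text{Reg}(\cD_-) \ge \Omega(\epsilon)\cdot(T - |\mathbb{E}_+[N^H] - \mathbb{E}_-[N^H]|)$. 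The standard coupling bound $|\mathbb{E}_+[N^H] - \mathbb{E}_-[N^H]| \le T\cdot \text{TV}(\mathbb{P}_+,\mathbb{P}_-)$, combined with Pinsker and the chain rule, reduces everything to bounding the per-round KL between the observation distributions, which is $O(\epsilon^2)$. Hence $\text{TV}(\mathbb{P}_+,\mathbb{P}_-) = O(\epsilon\sqrt{T})$, and picking $\epsilon = c/\sqrt{T}$ for a small enough constant $c$ makes this total variation at most $1/2$. The sum of the two regrets is then $\Omega(\epsilon T) = \Omega(\sqrt{T})$, so at least one of $\cD_\pm$ is a hard distribution for the algorithm.

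The main obstacle is verifying that the semi-bandit feedback (the algorithm observes $\bv_2$ only when $\bv_2\ge r^{(t)}$) does not inflate the per-round KL beyond $O(\epsilon^2)$. I would handle this by a data-processing-type argument: the per-round observation is a (reserve-dependent) deterministic function of $\bv_2$, and the KL divergence between two distributions cannot increase under any such mapping, so the per-round KL is uniformly bounded by the KL between the raw two-point distributions, which is $\Theta(\epsilon^2/(q_0(1-q_0)))$. This makes the chain-rule computation hold uniformly over the algorithm's reserve choices and completes the lower bound.
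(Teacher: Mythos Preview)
The paper does not give its own proof of this proposition --- it is quoted verbatim as Theorem~2 of \cite{cesa2015regret}. What the paper does do, inside the proof of Theorem~\ref{thm:lower-bound-switching}, is recall the specific construction from that reference: two i.i.d.\ bidders, each with a two-point value distribution on $\{\tfrac12,\tfrac34\}$, where the probability of the low value is $\tfrac12\pm\varepsilon$ with $\varepsilon=\Theta(1/\sqrt{T})$.

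Your route is a correct alternative but not the one the cited paper uses. By fixing bidder~1 at value $0$ you collapse the second-price mechanism to single-buyer posted pricing and then run the standard Le~Cam two-point argument on bidder~2's distribution. This is arguably cleaner than the two-i.i.d.-bidder construction, since you never have to reason about the second-highest bid; you land directly on the classical dynamic-pricing lower bound. The underlying engine --- two nearby Bernoulli-type distributions, $O(\varepsilon^2)$ per-round KL, Pinsker plus the chain rule, and $\varepsilon\sim 1/\sqrt{T}$ --- is the same in both versions. One small note: the paper invokes the proposition as a \emph{full-information} lower bound (see the sentence immediately following the statement), so your careful data-processing step for the semi-bandit feedback, while correct, is more than what is needed here; a lower bound against a fully informed learner already transfers to any learner with less feedback.
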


The above proposition shows that in the full-information case, any deterministic algorithm will have stationary regret lower bounded by $\Omega(\sqrt{T})$ for the online second price auction problem. Generally speaking, we divide the time interval into $\cS$ segments, each with length $\frac{T}{\cS}$. We construct an instance such that the regret in each segment is $\Omega(\sqrt{T/\cS})$, and the total non-stationary regret sums up to be $\Omega(\sqrt{\cS T})$.

As for the regret in the dynamic case, the proof is very similar. We also divide the time horizon into $\Theta(\bar\cV^{\frac{2}{3}}T^{\frac{1}{3}})$ segments, and the total variation between the distribution of adjacent segments is bounded by $\left(\bar\cV/T\right)^{\frac{1}{3}}$.

\section{Conclusion and Further Work}\label{sec:conclusion}
We study the non-stationary online second price auction with the ``semi-bandit'' feedback structure in this paper. We reduce it into the non-stationary one-sided full-information bandit and show an algorithm $\Elim$ that solves the problem. Our algorithm is parameter-free, which means that we do not have to know the switchings $\cS$ and the variation $\cV$ in advance. Our algorithm is also nearly optimal in both cases. There are also some future directions to explore:

First, in this work, we consider the online auction with ``semi-bandit'' feedback, where all the bidders with private values exceeding or equaling the reserve price will report their private values. 
We can also consider the ``full-bandit'' feedback where the seller only gets the reward in each round but does not observe the private values and design parameter-free algorithms to solve it in the non-stationary case.
Second, in this work we use the second price auction and assume that the bidders are truthful.
We can also study how to generalize this non-stationary result into the strategic bidders' case or the other auction formats such as the generalized second price auction.

\clearpage
\bibliographystyle{plain}
\bibliography{ref}
\clearpage

\OnlyInFull{\appendix

\section{Proof of Theorem \ref{thm:switching-regret} and \ref{thm:dynamic-regret}}
In this section, we give the detailed proof of Theorem \ref{thm:switching-regret} and \ref{thm:dynamic-regret}. We first present some key observations and lemmas, which are helpful in both the switching and dynamic case. Then we give the detailed proof for the switching case(Theorem \ref{thm:switching-regret}). Next, we give the proof for the dynamic case(Theorem \ref{thm:dynamic-regret}). However, we will not give a detailed proof, since the proof is very similar to that of Theorem \ref{thm:switching-regret}, and we will point out the difference.

First, we need the following definition and probability bound. The definition(Definition \ref{def:sampling-nice}) is straight-forward, and the probability bound follows directly from the Hoeffding's Inequality and union bound.

\defsamplingnice*

\begin{restatable}{lemma}{lemsamplingnice}\label{lem:samplingnice}
    We have the following probability bound,
    \[\Pr\{\lnot \cN^s\} \le \frac{2}{T}, \Pr\{\lnot \cN^s_t\} \le \frac{2}{T}.\]
\end{restatable}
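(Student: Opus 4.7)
The plan is to prove both tail bounds by a standard Hoeffding-plus-union-bound argument, using the oblivious-adversary assumption from Definition~\ref{def:non-stationary-bandit} to get the required independence of the samples $X^{(t)}_a$ across rounds.

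First I would fix an arbitrary interval $\cI \subseteq [T]$ and an arbitrary arm $a \in [K]$. Since the distributions $\nu_t$ are generated obliviously (before the algorithm starts and independently of the algorithm's randomness), the random variables $\{X^{(t)}_a\}_{t \in \cI}$ are mutually independent with means $\mu_{a,t}$ and support $[0,1]$. Hoeffding's inequality therefore gives
\[
\Pr\!\left\{\left|\sum_{t\in\cI} X^{(t)}_a - \sum_{t\in\cI} \mu_{a,t}\right| \ge |\cI| \cdot \sqrt{\frac{\ln(KT^3)}{2|\cI|}}\right\} \le 2\exp\!\bigl(-\ln(KT^3)\bigr) = \frac{2}{KT^3}.
\]

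Next I would apply a union bound. The number of subintervals $\cI \subseteq [T]$ is at most $\binom{T}{2}+T \le T^2$, and there are $K$ arms, so the number of (interval, arm) pairs is at most $KT^2$. The union bound then yields
\[
\Pr\{\lnot \cN^s\} \le KT^2 \cdot \frac{2}{KT^3} = \frac{2}{T}.
\]
For $\cN^s_t$ the argument is identical except that $\cI$ ranges over subintervals of $[t]$, of which there are at most $t^2 \le T^2$, so the same bound $2/T$ holds.

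The only point worth double-checking is the independence needed for Hoeffding; this is precisely where the oblivious adversary assumption enters, since under an adaptive adversary the distributions $\nu_t$ could depend on past $X^{(s)}$ and the clean product-measure argument would fail. Beyond that observation the proof is routine, so I do not expect any genuine obstacle.
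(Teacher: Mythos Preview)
Your proof is correct and follows essentially the same Hoeffding-plus-union-bound argument as the paper: bound the failure probability for a single $(\cI,a)$ pair by $2/(KT^3)$ via Hoeffding, then union over at most $KT^2$ pairs. The paper handles $\cN^s_t$ by the inclusion $\cN^s \subseteq \cN^s_t$ rather than re-counting intervals, but this is an immaterial difference.
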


\begin{proof}
    From the Hoeffding's inequality, we have for any interval $\cI\subseteq [T]$ and any arm $a$,
    \begin{align*}
        \Pr\left\{\frac{1}{|\cI|}\biggr|\sum_{t\in\cI}X^{(t)}_a -\sum_{t\in\cI}\mu_{a,t}\biggr| < \sqrt{\frac{\ln(KT^3)}{2|\cI|}}\right\}
        \le& 2\exp\left(-2\cdot |\cI| \frac{\ln(KT^3)}{2|\cI|}\right)\\
        =& \frac{2}{KT^3}.
    \end{align*}
    Then from the union bound, there are at most $T^2$ possible intervals $\cI\in [T]$ and we have
    \begin{align*}
        \Pr\{\lnot \cN^s\}
        =&\Pr\left\{\exists \cI,a,\frac{1}{|\cI|}\biggr|\sum_{t\in\cI}X^{(t)}_a -\sum_{t\in\cI}\mu_{a,t}\biggr| < \sqrt{\frac{\ln(KT^3)}{2|\cI|}}\right\}\\
        \le&\sum_{k,\cI}\Pr\left\{\frac{1}{|\cI|}\biggr|\sum_{t\in\cI}X^{(t)}_a -\sum_{t\in\cI}\mu_{a,t}\biggr| < \sqrt{\frac{\ln(KT^3)}{2|\cI|}}\right\} \\
        \le& \sum_{k,\cI}\frac{2}{KT^3} \\
        \le& \frac{2}{T}.
    \end{align*}
    Then
    \[\Pr\{\lnot \cN^s_t\} \le \frac{2}{T}\]
    follows directly since $\cN^s \subseteq \cN^s_t$.
\end{proof}

The next observation is not hard to prove, but it is one of the key observations of the proof. The observation highly depends on the feedback structure of the one-sided full-information bandit problem.

\begin{restatable}[]{lemma}{lemexplorationobservation}\label{lem:exploration-observation}
    Suppose that an exploration phase $\cI = [t',t+1)$ ends at time $t$ where $(d,\cI)\in\cE_t$. For any arm $a$ such that $a\ge\min\{e:(g,e,\bv)\in\cM_t,g\le 8d\}$, it is observed for $t+1-t'$ times during the exploration phase $\cI = [t',t+1)$.
\end{restatable}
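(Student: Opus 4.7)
The plan is to show that at every round $s\in[t',t+1)$ the algorithm plays some arm $A_s\le a$; the one-sided feedback rule of Definition~\ref{def:non-stationary-bandit} then immediately yields an observation of arm $a$ in round $s$, for a total of $t+1-t'$ observations.

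First I would argue that the bookkeeping stays consistent throughout the interval. Because $(d,\cI)\in\cE_t$ and $\cE$ is cleared only when Step~4 starts a new epoch, no reset has occurred in rounds $t',\ldots,t$; consequently the epoch index stays fixed, $\cM$ only grows (and is never reset) over this stretch, and $(d,\cI)\in\cE_s$ for every $s\in[t',t+1)$. In particular every such $s$ enters the exploration branch of Step~2 with $d_{\max,s}\ge d$. Moreover, the choice of $I_t$ in Step~1 guarantees that at the round $t'$ when $(d,\cI)$ was added we had $8d\ge\Delta_{t',\min}$, so some tuple in $\cM_{t'}\subseteq\cM_s$ already has gap at most $8d\le 8d_{\max,s}$; hence the set defining $a_{\text{exp}}(s)$ in line~\ref{algline:exploration-arm} is non-empty and $A_s=a_{\text{exp}}(s)$ is well-defined.

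Now set $a^\ast:=\min\{e:(g,e,\bv)\in\cM_t,\,g\le 8d\}$, which is well-defined by the hypothesis $a\ge a^\ast$, and let $g_{a^\ast}\le 8d$ be its stored gap. For each fixed $s\in[t',t+1)$ I would split on whether $a^\ast$ has been eliminated by round $s$. In Case~A, $a^\ast<a_{\text{min}}(s)$, so $(g_{a^\ast},a^\ast,\cdot)\in\cM_s$ with the same recorded gap (entries in $\cM$ are written once in Step~3 and never overwritten within an epoch); since $g_{a^\ast}\le 8d\le 8d_{\max,s}$, arm $a^\ast$ belongs to the set defining $a_{\text{exp}}(s)$, giving $A_s\le a^\ast\le a$. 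In Case~B, $a^\ast\ge a_{\text{min}}(s)$ (forced by the fact that eliminations proceed in increasing index order), and because $a_{\text{exp}}(s)$ is by definition an already-eliminated arm, $A_s=a_{\text{exp}}(s)<a_{\text{min}}(s)\le a^\ast\le a$.

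Combining the two cases gives $A_s\le a$ for every $s\in[t',t+1)$, which completes the proof. The main obstacle is the chain of bookkeeping observations tying $\cM_t$ back to $\cM_s$---namely, that $\cE$ and $\cM$ are not reset during the interval, that the gap $g_{a^\ast}$ stored in $\cM_t$ coincides with the one in $\cM_s$ whenever $a^\ast$ has already been eliminated, and that $a_{\text{exp}}(s)$ is well-defined throughout the exploration phase. All three follow from the monotone-addition dynamics of $\cM$ and $\cE$ within a single epoch, so none is technically deep, but they must be verified carefully because the lemma's conclusion is used as a pointwise per-round guarantee in the later regret analysis.
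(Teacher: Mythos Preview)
Your proposal is correct and follows essentially the same approach as the paper's proof: both establish that no epoch reset occurs during $[t',t]$, that some tuple with gap $\le 8d$ is already present in $\cM_{t'}$, and that at every round $s\in[t',t]$ the exploration branch of Step~2 yields $A_s\le a^\ast$. The paper organizes the last step slightly differently---it argues directly that $\min\{e:(g,e,\bv)\in\cM_s,\,g\le 8d\}$ is constant for $s\in[t',t]$ (which in your language means $a^\ast$ is already eliminated at time $t'$, so your Case~B is actually vacuous)---but the substance is the same.
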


\begin{proof}
    Suppose that time $t$ is in epoch $\ell$, then we know that $t' \ge \tau_{\ell}$, since $\cE_{\tau_{\ell}-1} = \phi$ then we only add $[t_1,t_2]$ into the exploration set for $t_1 \ge \tau_{\ell}$.
    
    Because the algorithm add $(d,\cI=[t',t])$ into $\cE_{t'-1}$(the original set is $\cE_{t'-1}$ and the new set is $\cE_{t'}$), we know that there exists $(g,e,\bv)\in\cM_{t'}$ such that $g\le 8d$. This is due to the fact that from the definition of our algorithm, we have $\Delta_{t',\min} = \min_{(g,e,\bv)\in\cM_{t'}}g$ and $8d \ge 8 d_{I_{t'}} \ge \Delta_{t',\min}$.
    
    Next, we can observe that for any $s\in [t',t]$, the number $\min\{e:(g,e,\bv)\in\cM_{s},g\le 8d\}$ remains the same, because we eliminate arms from small index to large index, and the arms eliminated in the interval $[t',t]$ must have index larger than $\min\{e:(g,e,\bv)\in\cM_{t'},g\le 8d\}$.
    
    Let $a_0 = \min\{e:(g,e,\bv)\in\cM_{t'},g\le 8d\}$, and we only have to show that in every time $s\in[t',t]$, we play the arm $A_s \le a_0$, which is also true since from the definition of the algorithm, we have $d \le d_{\text{max},s}$ and then
    \begin{align*}
        &\{\bv:(g,e,\bv)\in\cM_{t'},g\le 8d\}  \subseteq \{\bv:(g,e,\bv)\in\cM_{t'},g\le 8d_{\text{max},s}\}.
    \end{align*}\
\end{proof}

We also have the following definition to describe a time that is in an exploration phase.

\defexplorationphase*

\subsection{Switching Regret}
As for the parameters $C_1,C_2$ in the algorithm, we choose $C_1 \ge 2048, C_2 \ge 32$ in the switching regret analysis.

The next lemma shows that, with high probability, the number of epochs in our algorithm is at most $\cS$(the number of switchings).

\begin{restatable}{lemma}{lemnumberofepochsswitching}\label{lem:number-of-epochs-switching}
    When $\cN^s$ happens, we have at time $T$, $\ell \le \cS$, i.e., the number of epochs will not exceed the number of switchings.
\end{restatable}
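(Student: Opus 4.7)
The plan is to prove the contrapositive: within any single epoch $\ell$, if no switch in $\cD$ has occurred between $\tau_\ell$ and the current round $t$, then under $\cN^s$ the non-stationarity check in Step 4 cannot fire at $t$. Once this is established, every epoch transition $\ell \to \ell+1$ can be charged to at least one switch inside the ending epoch, so the $\ell - 1$ transitions witness $\ell - 1$ distinct switches, giving $\ell - 1 \le \cS - 1$ and hence the claim.

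First I would fix an epoch $\ell$, assume every $\cD_s$ for $s \in [\tau_\ell, t]$ is identical, and write $\mu_a$ for the constant true mean of arm $a$ over this range. Consider any record $(g, e, \bv) \in \cM$ added during the epoch. By construction, $\bv_a = \hat\mu_a[\sigma, t_0+1)$ for some $\sigma \ge \tau_\ell$ and elimination round $t_0 \le t$, with $g > \sqrt{C_1 \ln(KT^3)/(t_0+1-\sigma)}$. The event $\cN^s$ gives $|\hat\mu_a[\sigma, t_0+1) - \mu_a| < \sqrt{\ln(KT^3)/(2(t_0+1-\sigma))}$, and combining the two bounds yields the key estimate $|\bv_a - \mu_a| < g/\sqrt{2C_1} \le g/64$ for every $a \ge e$, using $C_1 \ge 2048$.

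Next I would estimate the empirical mean produced by a completed exploration phase $(d, [t', t+1)) \in \cE$. Because $\cE$ was reset at $\tau_\ell$ we have $t' \ge \tau_\ell$, and the phase length satisfies $t+1-t' \ge C_2 \ln(KT^3)/d^2$, so $\cN^s$ gives $|\hat\mu_a[t', t+1) - \mu_a| < d/\sqrt{2C_2} \le d/8$ using $C_2 \ge 32$. For any record $(g, e, \bv) \in \cM$ satisfying $g \le 8d$ and any $a \ge e$, the triangle inequality and $g \le 8d$ produce
\[ |\hat\mu_a[t', t+1) - \bv_a| \;<\; \frac{d}{8} + \frac{g}{64} \;\le\; \frac{d}{8} + \frac{8d}{64} \;=\; \frac{d}{4}. \]
Therefore the Step 4 trigger $|\hat\mu_a[t', t+1) - \bv_a| > d/4$ cannot hold, and epoch $\ell$ does not end at $t$. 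Contrapositively, every new epoch must be preceded by at least one switch occurring within the ending epoch.

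The main obstacle is bookkeeping rather than a novel inequality: one must argue that every interval entering the bound, namely the elimination interval $[\sigma, t_0+1)$ used to define $\bv$ and the exploration interval $[t', t+1)$ used in the check, lies inside the current epoch $[\tau_\ell, t]$, so the same constant mean $\mu_a$ can be plugged into both estimates. This follows from the reset of $\cM$ and $\cE$ whenever a new epoch starts together with the guard $\sigma \ge \tau_\ell$ enforced in Step 3, but the details must be spelled out to legitimately apply the constant-mean hypothesis. With this in hand, the disjoint-interval charging argument gives $\ell \le \cS$.
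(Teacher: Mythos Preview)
Your proposal is correct and follows essentially the same route as the paper's proof: assume the distribution is constant on $[\tau_\ell,t]$, use $\cN^s$ together with the elimination threshold to bound $|\bv_a-\mu_a|<8d/\sqrt{2C_1}\le d/8$, use the exploration-phase length to bound $|\hat\mu_a[t',t+1)-\mu_a|<d/\sqrt{2C_2}\le d/8$, and combine via the triangle inequality to contradict the Step~4 trigger. One piece of bookkeeping you do not mention but the paper does: in order to apply $\cN^s$ on $[t',t+1)$ you must first check that arm $a$ is actually \emph{observed} at every round of the exploration phase, which is the content of Lemma~\ref{lem:exploration-observation} and in turn requires verifying that $\min\{e:(g,e,\bv)\in\cM,\,g\le 8d\}$ is unchanged between time $t'$ and time $t$ (since $\cM$ may grow during the phase).
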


\begin{proof}
    We partition the time interval $[1,T]$ into $\cS$ intervals with the same distribution. We set
    \[[1,T] = [s_1,e_1]\cup [s_2,e_2]\cup\cdots\cup[s_{\cS},e_{\cS}],\]
    where $s_1 = 1,e_{\cS} = T$, $s_{i+1} = e_i + 1$ for all $i\le \cS-1$, and $\nu_t = \nu_{t'}$ for all $t,t'$ in the same interval. We only have to show that, if $\cN^s$ happens and epoch $\ell$ starts at time $t$ in interval $[s_i,e_i]$, epoch $\ell+1$ will not start in the interval $[s_i,e_i]$.
    
    We prove by contradiction, suppose that epoch $\tau_{\ell},\tau_{\ell+1}\in [s_i,e_i]$. Since epoch $\ell$ ends in time interval $[s_i,e_i]$, we know that from the definition of algorithm(Step 4), $\exists t\in [s_i,e_i], (d,[t',t+1))\in\cE_t, (g,e,\bv)\in\cM'_t,a\ge e$ such that $g \le 8d$ and $|\hat\mu_a[t',t+1) - \bv_{a}| > \frac{d}{4}$, where $\cM'_t$ denote the set $\cM$ in time $t$ just before Step 4. Moreover, $\tau_{\ell+1} = t+1$. From now on, we will fix the variables  $d,a,t,g,e,\bv$. However we will show that when $\cN^s$ happens, $|\hat\mu_a[t',t+1) - \bv_{a}| \le \frac{d}{4}$.
    
    First we will show that $\min\{e^*:(g^*,e^*,\bv^*)\in\cM_t,g^*\le 8d\} = \min\{e^*:(g^*,e^*,\bv^*)\in\cM'_t,g^*\le 8d\}$. Because $(d,[t',t+1))\in\cE_t$, we know that $t' \ge \tau_{\ell}$, and when $(d,[t',t+1))$ is added into $\cE$, there exists $(g',e',\bv')\in\cM_{t'}$ such that $g' \le 8d$. Then same as the argument in Lemma \ref{lem:exploration-observation}, since $[t',t]$ is contained in epoch $\ell$, we only add elements into the set $\cM$, and we have $\cM_{t'} \subseteq \cM_t \subseteq \cM'_{t}$ and the added elements do not affect the minimum $\min\{e^*:(g^*,e^*,\bv^*)\in\cM,g^*\le 8d\}$.
    
    Then from Lemma \ref{lem:exploration-observation}, we know that arm $a$ has been observed for $t-t'+1 = \lceil\frac{C_2\ln (KT^3)}{d^2}\rceil$ times in the interval $[t',t]$, because $a \ge e \ge \min\{e^*:(g^*,e^*,\bv^*)\in\cM'_t,g^*\le 8d\} = \min\{e^*:(g^*,e^*,\bv^*)\in\cM_t,g^*\le 8d\}$ and from the definition of $\cN^s$(Definition \ref{def:sampling-nice}), we have
    \begin{align*}
        |\hat\mu_a[t',t+1)-\mu_{a,t}| \le& \sqrt{\frac{\ln(KT^3)}{2(t-t'+1)}}\\
        \le& \sqrt{\frac{\ln(KT^3)}{2\lceil\frac{C_2\ln (KT^3)}{d^2}\rceil}}\\
        \le& \sqrt{\frac{\ln(KT^3)d^2}{2C_2\ln (KT^3)}}\\
        \le& \frac{d}{\sqrt{2C_2}}.
    \end{align*}
    Then we consider $\bv_{a}$. Suppose that the vector $\bv$ is added into the set $\cM$ at time $s$, and we know that there exists $\sigma$ such that
    \[g > \sqrt{\frac{C_1\ln(KT^3)}{s+1-\sigma}},\]
    and from $\cN^s$, we have
    \begin{align*}
        |\bv_{a} - \mu_{a,s}| =& |\hat\mu_{a}[\sigma,s+1) - \mu_{a,s}| \\
        < & \sqrt{\frac{\ln(KT^3)}{2(s-\sigma+1)}}\\
        < & \frac{1}{\sqrt{2C_1}}g \\
        \le & \frac{8d}{\sqrt{2C_1}}.
    \end{align*}
    Then from the choice of parameters $C_1,C_2$ that $C_1 \ge 2048,C_2 \ge 32$, and $s,t\in [s_i,e_i]$, we have $\mu_{a,s} = \mu_{a,t}$ and thus
    \begin{align*}
        |\hat\mu_a[t',t+1) - \bv_{a}|
        \le& |\hat\mu_a[t',t+1)-\mu_{a,t}| + |\bv_{a} - \mu_{a,s}|\\
        <& \frac{d}{\sqrt{2C_2}} + \frac{8d}{\sqrt{2C_1}} \\
        \le& \frac{d}{8} + \frac{d}{8} \\
        =& \frac{d}{4}.
    \end{align*}
    Then we conclude the proof of this lemma.
\end{proof}

To derive the switching regret, we need the following definitions.


\defprocedurenice*

\defplayingbadarm*

Then given these definitions, we have the following lemma for decomposing the regret.

\begin{restatable}[Switching Regret Decomposition]{lemma}{lemdecomposingswitching}\label{lem:decomposing-switching}
    The regret can be decomposed into the following way,
    \begin{align*}
    \E\left[\sum_{t=1}^T\left(\mu^*_t - \mu_{A_t,t}\right)\right]
    =& \E\left[\sum_{t=1}^T\left(\mu^*_t - \mu_{A_t,t}\right)\cdot\I\left\{\cN^s\land \cC_t\land \lnot \cP_t\right\}\right]\\
    &\quad+ \E\left[\sum_{t=1}^T\left(\mu^*_t - \mu_{A_t,t}\right)\cdot\I\left\{\cN^s\land \cC_t\land \cP_t\right\}\right]\\
    &\quad +\E\left[\sum_{t=1}^T\left(\mu^*_t - \mu_{A_t,t}\right)\cdot\I\left\{\cN^s\land \lnot \cC_t\land\lnot \cB_t \right\}\right]\\
    &\quad +\E\left[\sum_{t=1}^T\left(\mu^*_t - \mu_{A_t,t}\right)\cdot\I\left\{\cN^s\land \lnot \cC_t\land \cB_t\right\}\right]\\
    &\quad +\E\left[\sum_{t=1}^T\left(\mu^*_t - \mu_{A_t,t}\right)\cdot\I\left\{\lnot\cN^s\right\}\right].
    \end{align*}
\end{restatable}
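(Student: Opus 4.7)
The plan is to recognize that this lemma is purely organizational: the five indicator events on the right-hand side form a partition of the sample space at every round $t$, so the decomposition follows immediately from the identity $\sum_{i=1}^{5}\I\{E_i\}=1$ and linearity of expectation. There is no probabilistic machinery to invoke; the only thing to verify is that the partition is both disjoint and exhaustive, which is a matter of reading off the conjuncts.

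For mutual exclusivity I would point out that any two of the five events share at least one conjunct with opposite polarity. Events~1 and~2 disagree on $\cP_t$ versus $\lnot \cP_t$; events~3 and~4 disagree on $\cB_t$ versus $\lnot \cB_t$; events~1--2 disagree with events~3--4 on $\cC_t$ versus $\lnot \cC_t$; and events~1--4 disagree with event~5 on $\cN^s$ versus $\lnot \cN^s$. Hence their indicators have pairwise disjoint supports. For exhaustiveness I would organize the argument as a decision tree: first split on $\cN^s$ versus $\lnot \cN^s$; given $\cN^s$, split on $\cC_t$ versus $\lnot \cC_t$; given $\cN^s \land \cC_t$, split on $\cP_t$ versus $\lnot \cP_t$; given $\cN^s \land \lnot \cC_t$, split on $\cB_t$ versus $\lnot \cB_t$. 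The five leaves of this tree are exactly the events in the statement, so the five indicators sum to one pointwise.

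Combining the two properties, I would write
\[
\mu^*_t - \mu_{A_t,t} \;=\; \sum_{i=1}^{5}\,(\mu^*_t - \mu_{A_t,t})\cdot \I\{E_i\}
\]
for every $t$ and every sample point, then sum over $t\in[T]$ and apply linearity of expectation to interchange the finite inner sum with $\E[\cdot]$. This reproduces exactly the five terms displayed in the lemma. The only ``hard'' part is cosmetic: choosing a partition whose five cells align with the five regret estimates that the author proves next in Lemmas~\ref{lem:first-term-switching}--\ref{lem:fifth-term-switching}. Since the decomposition itself contains no analytic content, I would keep the write-up to a few lines, stating the disjointness and exhaustiveness of $\{E_i\}_{i=1}^{5}$ and citing linearity of expectation.
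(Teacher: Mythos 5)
Your proposal is correct and matches the paper's own argument, which likewise just observes that the five indicators sum to $1$ pointwise (your decision-tree justification of exhaustiveness and disjointness is a slightly more explicit write-up of the same fact) and then applies linearity of expectation. Nothing is missing.
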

The lemma is easy to prove. We just have to notice that all of the indicate variables add to $1$ in all cases, i.e. for all $t$, we have
\begin{align*}
    &\I\left\{\cN^s\land \cC_t\land \lnot \cP_t\right\} + \I\left\{\cN^s\land \cC_t\land \cP_t\right\} + \I\left\{\cN^s\land \lnot \cC_t\land\lnot \cB_t \right\} + \I\left\{\cN^s\land \lnot \cC_t\land \cB_t\right\} + \I\left\{\lnot\cN^s\right\} = 1.
\end{align*}

Then, we show the proof of each term.

\lemfirsttermswitching*

\begin{proof}
    The first observation is that, when $\cC_t$ happens, all the arms $a < A_t$ cannot be the optimal arm. The observation is based on the fact that: All the arms $a < A_t$ are eliminated in the current epoch, and suppose that event $\cC_t$ happens. For any eliminated arm $a$ and its corresponding vector $\bv(e = a)$, we know that there exists arm $b$ such that $\bv_{b} - \bv_{a} = g$, and we have
    \begin{align*}
    \mu_b - \mu_a \ge& \bv_{b} - \bv_{a} - |\bv_{b} - \mu_b| - |\bv_{a} - \mu_a| \\
    \ge& g - 2\times \frac{g}{4} \\
    \ge& \frac{g}{2}.
    \end{align*}
    Like before, we partition the time interval $[1,T]$ into $L$ intervals with the same distribution. We set
    \[[1,T] = [s_1,e_1]\cup [s_2,e_2]\cup\cdots\cup[s_L,e_L],\]
    where $s_1 = 1,e_L = T$, $s_{i+1} = e_i + 1$ for all $i\le L-1$, and $\nu_t = \nu_{t'}$ for all $t,t'$ in the same interval. Next, we show that for any execution of the algorithm, the following quantity is upper bounded,
    \[\sum_{t=1}^T\left(\mu^*_t - \mu_{A_t,t}\right)\cdot\I\left\{\cN^s\land \cC_t\land \lnot \cP_t\right\}.\]
    Fix any realization, suppose that $\tau_{\ell}$ denote the starting time of epoch $\ell$ in that realization. Then we can divide the total time horizon $[1,T]$ into the following intervals
    \[[1,T] = [s'_1,e'_1]\cup [s'_2,e'_2]\cup\cdots\cup[s'_{\cS'},e'_{\cS'}],\]
    where in each interval $[s'_i,e'_i]$, the distribution remains the same \emph{and} it is included in an epoch. We can choose $s'_i = s_j$ for some $j$ or $s'_i = \tau_{\ell}$ for some $\ell$. From Lemma \ref{lem:number-of-epochs-switching}, we know that we can have a partition satisfying the previous constraints and $\cS' \le 2\cS$ if $\cN^s$ happens. Otherwise if $\cN^s$ does not happen, the inequality we want to prove holds.
    
    Then we fix $t\in [1,T]$ and suppose that $t\in [s'_i, e'_i]$. Since from the previous argument, we know that for all arm $a < A_t$, $a$ cannot be optimal, so we can only focus on the arms $a\ge A_t$. Since $A_t$ is not eliminated in time $t-1$, we know that for all $a\ge A_t$ and $t > s'_i$, we have
    \[\hat\mu_{a}[s'_i,t) - \hat\mu_{A_t}[s'_i,t) \le \sqrt{\frac{C_1\ln(KT^3)}{t-s'_i}},\]
    and from the definition of $\Elim$, arm $a$ and $A_t$ are observed at any time $s\in [s'_i,t-1]$, because we must play an eliminated arm in an exploration phase. Thus from $\cN^s$ we have
    \[\mu_{a,t} - \mu_{A_t,t} \le (\sqrt{C_1} + \sqrt{2})\cdot \sqrt{\frac{\ln(KT^3)}{t-s'_i}}.\]
    For $t = s'_i$, it is easy to bound $\mu_{a,s'_i} - \mu_{A_t,s'_i} \le 1$. So we have
    \begin{align*}
        &\left(\mu^*_t - \mu_{A_t,t}\right)\cdot\I\left\{\cN^s\land \cC_t\land \lnot \cP_t\right\}
        \le \min\left\{(\sqrt{C_1} + \sqrt{2})\cdot \sqrt{\frac{\ln(KT^3)}{t-s'_i}},1\right\}.
    \end{align*}
    Sum up all $t\in [s'_i,e'_i]$, we get
    \begin{align*}
    \sum_{s'_i}^{e'_i}\left(\mu^*_t - \mu_{A_t,t}\right)\cdot\I\left\{\cN^s\land \cC_t\land \lnot \cP_t\right\}
    \le& \sum_{s'_i}^{e'_i}\min\left\{(\sqrt{C_1} + \sqrt{2})\cdot \sqrt{\frac{\ln(KT^3)}{t-s'_i}},1\right\}\\
    \le& 1 + 2(\sqrt{C_1} + \sqrt{2})\sqrt{\ln(KT^3)}(\sqrt{e'_i - s'_i + 1}),
    \end{align*}
    where we use the fact that
    \begin{align*}
    1 + \frac{1}{\sqrt{2}} +\cdots + \frac{1}{\sqrt{n}}
    \le& 1 + \int_{1}^{n}\frac{1}{\sqrt{x}}dx \\
    =& 1 + 2\sqrt{x}\big|_{1}^{n} \\
    \le& 2\sqrt{n}.
    \end{align*}
    Sum up all intervals, we have
    \begin{align*}
        \sum_{t=1}^T\left(\mu^*_t - \mu_{A_t,t}\right)\cdot\I\left\{\cN^s\land \cC_t\land \lnot \cP_t\right\}
        \le& \cS' + 2(\sqrt{C_1} + \sqrt{2})\sqrt{\ln(KT^3)}\sqrt{\cS'T}\\
        \le& 2\cS + 2(\sqrt{C_1} + \sqrt{2})\sqrt{\ln(KT^3)}\sqrt{2\cS T}.
    \end{align*}
    Then we conclude the proof of this lemma.
\end{proof}

\lemsecondthirdtermswitching*

\begin{proof}
    First, we observe that when both $\lnot \cC_t$ and $\lnot \cB_t$ happen, $t$ is in an exploration phase. This is due to the fact that: From the definition of $\lnot\cB_t$, suppose arm $b_t$ to be the smallest arm such that there exists $(g,e,\bv)\in\cM_{t}$ which satisfies $e = b_t$ and exists $a\ge b_t,|\bv_{a}-\mu_{a,t}|>\frac{g}{4}$, then we have $A_t < b_t$. Since $b_t$ is an eliminated arm, and if $t$ is not in an exploration phase, $A_t > b_t$ from the definition of the algorithm.
    
    Next, we show that either $\cN^s\land \cC_t\land \cP_t$ or $\cN^s\land \lnot \cC_t\land\lnot \cB_t$, we have
    \[\mu^*_t - \mu_{A_t,t} \le 12 d_{\max,t}.\]
    Suppose that  $(g,e,\bv)$ where $e = A_t$ is the tuple corresponds to arm $A_t$ when $A_t$ is eliminated in the current epoch. From the definition of $\cC_t$ and the definition of $\cB_t$, we know that either $\cN^s\land \cC_t\land \cP_t$ or $\cN^s\land \lnot \cC_t\land\lnot \cB_t$, we have
    \[\mu^*_t - \mu_{A_t,t} \le \frac{3}{2} g.\]
    From the definition of algorithm $\Elim$ (Line \ref{algline:exploration-arm}), we know that $g \le 8d_{\max,t}$, which means that
    \[\mu^*_t - \mu_{A_t,t} \le 12 d_{\max,t}.\]
    
    Then, we can observe that $d_{\max,t}$ can have at most $-\log_2 \sqrt{\frac{C_1\ln(KT^3)}{T}}$ different value. This is due to the fact that when a vector $\bv$ is add into the set $\cM$, we know that $g > \sqrt{\frac{C_1\ln(KT^3)}{T}}$, so we have $d_i \ge \frac{1}{8}\sqrt{\frac{C_1\ln(KT^3)}{T}}$. Take the logarithm, we have $d_{\max,t}$ can have at most $3-\log_2 \sqrt{\frac{C_1\ln(KT^3)}{T}}$. For simplicity, we denote $\mathbb A = \big\lfloor 3-\log_2 \sqrt{\frac{C_1\ln(KT^3)}{T}}\big\rfloor$, and $\mathbb A = \cO(\log T)$.
    
    In this way, we can decompose the regret in the following way.
    \begin{align*}
        &\E\left[\sum_{t=1}^T\left(\mu^*_t - \mu_{A_t,t}\right)\cdot\I\left\{\cN^s\land \cC_t\land \cP_t\right\}\right] \\
        &\ +\E\left[\sum_{t=1}^T\left(\mu^*_t - \mu_{A_t,t}\right)\cdot\I\left\{\cN^s\land \lnot \cC_t\land\lnot \cB_t\right\}\right] \\
        \le& \E\left[\sum_{t=1}^T12 d_{\max, t}\cdot \I\left\{\cN^s\land \cP_t\right\}\right] \\
        =& \E\left[\sum_{t=1}^T\sum_{i=0}^{\mathbb A}12 d_{\max, t}\cdot \I\left\{\cN^s\land \cP_t\land d_{\max,t} = 2^{-i}\right\}\right]\\
        =& 12\E\left[\sum_{i=0}^{\mathbb A}\sum_{t=1}^T2^{-i}\cdot \I\left\{\cN^s\land \cP_t\land d_{\max,t} = 2^{-i}\right\}\right]\\
        =& 12\sum_{i=0}^{\mathbb A}\E\left[\sum_{t=1}^T2^{-i}\cdot \I\left\{\cN^s\land \cP_t\land d_{\max,t} = 2^{-i}\right\}\right].
    \end{align*}
    Then we bound $\E\left[\sum_{t=1}^T2^{-i}\cdot \I\left\{\cN^s\land \cP_t\land d_{\max,t} = 2^{-i}\right\}\right]$ for any fixed $i$. We have
    \begin{align*}
        \E\left[\sum_{t=1}^T 2^{-i}\cdot \I\left\{\cN^s\land \cP_t\land d_{\max,t} = 2^{-i}\right\}\right] 
        =& 2^{-i}\sum_{t=1}^T\Pr\left\{\cN^s\land \cP_t\land d_{\max,t} = 2^{-i}\right\} \\
        \le& d_i\sum_{t=1}^T\Pr\left\{\cN^s\land t\in\cI \text{ where } (d_i,\cI)\in\cE\right\} \\
        \le& d_i\sum_{t=1}^T \left(\big\lceil\frac{C_2\ln (KT^3)}{d_i^2}\big\rceil+1\right)d_i\sqrt{\frac{\cS+1}{T}}\\
        \le& C_2\ln (KT^3)\sqrt{(\cS+1)T} + 2\sqrt{\frac{\cS+1}{T}},
    \end{align*}
    where the second inequality comes from the union bound, the definition of algorithm(the probability to add an exploration phase, Line \ref{algline:exploration-prob}), and Lemma \ref{lem:number-of-epochs-switching}. Fininally, we can get
    \begin{align*}
    &\E\left[\sum_{t=1}^T\left(\mu^*_t - \mu_{A_t,t}\right)\cdot\I\left\{\cN^s\land \cC_t\land \cP_t\right\}\right]  +\E\left[\sum_{t=1}^T\left(\mu^*_t - \mu_{A_t,t}\right)\cdot\I\left\{\cN^s\land \lnot \cC_t\land\lnot \cB_t\right\}\right] \\
    \le& \left(C_2\ln (KT^3)\sqrt{(\cS+1)T} + 2\sqrt{\frac{\cS+1}{T}}\right)  \times \left(3-\log_2 \sqrt{\frac{C_1\ln(KT^3)}{T}}\right).
    \end{align*}
\end{proof}

\lemforthtermswitching*

\begin{proof}
    We partition the time interval $[1,T]$ into $\cS$ intervals where the distributions in each interval are identical. We set
    \[[1,T] = [s_1,e_1]\cup [s_2,e_2]\cup\cdots\cup[s_{\cS},e_{\cS}],\]
    where $s_1 = 1,e_{\cS} = T$, $s_{i+1} = e_i + 1$ for all $i\le \cS-1$, and $\nu_t = \nu_{t'}$ for all $t,t'$ in the same interval. For convinience, we will use $\I_t$ to denote the variable $\I\left\{\cN^s\land \lnot \cC_t\land \cB_t\right\}$.
    
    We will use induction to show the lemma. Then we set $R_{j}^{\ell}$ to be the following quantity
    \[\E_{s_{j}}\left[\sum_{t=s_{\ell}}^T\left(\mu^*_t - \mu_{A_t,t}\right)\cdot\I_t\bigg|\tau_{\ell} < s_j \le \tau_{\ell+1}\right],\]
    and $R^{\ell}(s)$ to be the following quantity 
    \[\E_s\left[\sum_{t = s}^T\left(\mu^*_t - \mu_{A_t,t}\right)\cdot\I_t\bigg|\tau_{\ell} = s\right],\]
    where $\E_s$ means that we take the expectation of all the randomness from time $s$ to time $T$. Note that $R_{j}^{\ell}$ and $R^{\ell}(s)$ should be viewed as measurable functions instead of value, since $R_{j}^{\ell}$ and $R^{\ell}(s)$ are conditional expectation given all the information before $s_{j}$ and $s$ and satisfy the constraints. Generally speaking, $R_{j}^{\ell}$ is the conditional filtered regret from $s_{j}$ to $T$ given that $s_{j}$ is in epoch $\ell$, and $R^{\ell}(s)$ is the conditional filtered regret from $s$ to $T$ given that epoch $\ell$ starts at $s$. We only have to show that
    \[R^{1}(1) \le 24\sqrt{(\cS+1)T}+24\sqrt{C_2\ln(KT^3)\cS T} .\]
    We will use induction to show that
    \begin{align*}
        R_{j}^{\ell}\le& 12\sum_{k=\ell}^{\cS}\sqrt{\frac{T}{k+1}}+24\sum_{j'\ge j}\sqrt{C_2\ln(KT^3)(e_{j'}-s_{j'}+1)} . 
    \end{align*}
    The proof is long and we divide them into 3 parts. In the first part we show the induction is correct in the base cases. In the second part we relate $R_{j}^{\ell}$ and $R^{\ell}(s)$. In the third part we finish the induction step using the results in the second part.
    
    The induction has two dimensions $j$ and $\ell$, where $j$ denotes the index of the interval and $\ell$ denote the index of the starting epoch, and the induction proceeds from back to front.
    
    \textbf{Part 1:} We first show that all of the base cases are true. We first show that any $\ell \ge \cS$ and any $j\le \cS$, we have
    \[R_{j}^{\ell} = 0.\]
    From the proof of Lemma \ref{lem:number-of-epochs-switching}, we know that when $\cN^s$ happens, epoch $\cS$ must start after $e_{\cS-1}$. Conditioning on $\tau_{\ell}< s_j \le \tau_{\ell+1}$, for any $\ell \ge \cS$ and any $j\le \cS$, $\cN^s$ will not happen, and $R_{j}^{\ell} = 0$. If we define $s_{\cS+1} = T+1$, and we have $R_{\cS+1}^{\ell} = 0$ for all $\ell$.
    
    \textbf{Part 2:} Same as before, let $\omega_s$ denote a possible realization of all the randomness before time $s$. Next we show that, if for $j_0,\ell_0$, such that for any $\omega_{s_{j_0}}$ satisfies $\tau_{\ell_0} < s_{j_0} \le \tau_{{\ell_0}+1}$, we have
    \begin{align*}
        R_{j_0}^{\ell_0}(\omega_{s_{j_0}})\le& 12\sum_{k=\ell_0}^{\cS}\sqrt{\frac{T}{k+1}}+24\sum_{j'\ge j_0}\sqrt{C_2\ln(KT^3)(e_{j'}-s_{j'}+1)},
    \end{align*}
    then for any $s\in[s_{j_0-1},e_{j_0-1}]$ and any $\omega_s$ satisfied $\tau_{\ell_0} = s$, we have
    \begin{align*}
        R^{\ell_0}(s)(\omega_{s})\le& 12\sum_{k=\ell_0}^{\cS}\sqrt{\frac{T}{k+1}}+24\sum_{j'\ge j_0}\sqrt{C_2\ln(KT^3)(e_{j'}-s_{j'}+1)}.
    \end{align*}
    From the additive property of the conditional expectation, we have
    \begin{align*}
        R^{\ell_0}(s)=& \E_s\left[\sum_{t = s}^T\left(\mu^*_t - \mu_{A_t,t}\right)\cdot\I_t\bigg|\tau_{\ell_0} = s\right]\\
        =& \E_s\left[\underbrace{\sum_{t = s_{j_0}}^{T}\left(\mu^*_t - \mu_{A_t,t}\right)\cdot\I_t}_{\text{term } \mathbb A}\bigg|\tau_{\ell_0} = s\right]\\
        +& \E_s\left[\underbrace{\sum_{t = s}^{s_{j_0-1}}\left(\mu^*_t - \mu_{A_t,t}\right)\cdot\I_t}_{\text{term } \mathbb B}\bigg|\tau_{\ell_0} = s\right]
    \end{align*}
    For the first term, we have
    \begin{align*}
        \I_t = \I\left\{\cN^s\land \lnot \cC_t\land \cB_t\land \cN^s_{s_{j_0-1}}\right\},
    \end{align*}
    because $\cN^s\subseteq\cN^s_{s_{j_0-1}}$. Then from the tower property of the conditional expectation, we have
    \begin{align*}
        \E_s\left[\mathbb A\big|\tau_{\ell_0} = s\right]
        =\E_s\left[\E_{s_{j_0}}\left[\mathbb A\big|\tau_{\ell_0} = s\right]\big|\tau_{\ell_0} = s\right],
    \end{align*}
    and we bound
    \[f(\omega_{s_{j_0}}) = \E_{s_{j_0}}\left[\mathbb A\big|\tau_{\ell_0} = s\right]\]
    for all $\omega_{s_{j_0}}$ satisfies $\tau_{\ell_0} = s$. Note that $\cN^s_{s_{j_0-1}}$ is totally defined given $\omega_{s_{j_0}}$, and we have
    \[f(\omega_{s_{j_0}}) = \E_{s_{j_0}}\left[\mathbb A\big|\tau_{\ell_0} = s\right]\cdot\I\left\{\cN^s_{s_{j_0-1}}\right\}.\]
    If $\cN^s_{s_{j_0-1}}$ happens, we can know from the proof of Lemma \ref{lem:number-of-epochs-switching}(generalize from $\cN^s$ to $\cN^s_{s_{j_0-1}}$) that epoch ${\ell_0}$ will not end in the interval $[s_{j_0-1},e_{j_0-1}]$, and have
    \[\cN^s_{s_{j_0-1}} \land \{\tau_{\ell_0} = s\} \Rightarrow \{\tau_{\ell_0} < s_{j_0} \le \tau_{{\ell_0}+1}\}.\]
    If $\cN^s_{s_{j_0-1}}$ does not happen, then the term inside the expectation is just $0$. Then from our assumption that
    \begin{align*}
        R_{j_0}^{\ell_0}(\omega_{s_{j_0}})\le& 12\sum_{k=\ell_0}^{\cS}\sqrt{\frac{T}{k+1}}+24\sum_{j'\ge j_0}\sqrt{C_2\ln(KT^3)(e_{j'}-s_{j'}+1)},
    \end{align*}
    we have
    \begin{align*}
        \E_{s_{j_0}}\left[\mathbb A\big|\tau_{\ell_0} = s\right]\le 12\sum_{k=\ell_0}^{\cS}\sqrt{\frac{T}{k+1}}
        +24\sum_{j'\ge j_0}\sqrt{C_2\ln(KT^3)(e_{j'}-s_{j'}+1)}.
    \end{align*}
    Then we show that
    \[\E_{s_{j_0}}\left[\mathbb B\big|\tau_{\ell_0} = s\right] = 0.\]
    If $\cN^s$ happens and epoch $\ell$ start at $s\in [s_{j_0-1},e_{j_0-1}]$, then $\I\left\{\cC_t\right\} = 1$ for all $s\le t\le e_{j_0-1}$. This is due to the fact that if we add $\bv$ into $\cM$ at time $t'$ such that $s\le t'\le e_{j_0-1}$, then we know that $\exists a> e$ and $\sigma$,
    \[\hat\mu_a[\sigma,t'] - \hat\mu_{e}[\sigma,t'] > \sqrt{\frac{C_1\ln(KT^3)}{t'+1-\sigma}},\]
    and we have
    \[g = \hat\mu_a[\sigma,t'] - \hat\mu_{e}[\sigma,t'] > \sqrt{\frac{C_1\ln(KT^3)}{t'+1-\sigma}}. \]
    If $\cN^s$ happens, we know that for any arm $b\ge e$, we have
    \begin{equation}\label{equ:no-bad-procedure-in-same-interval}
        |\hat\mu_b[\sigma,t'] - \mu_{b,t'}| < \sqrt{\frac{\ln(KT^3)}{2(t'+1-\sigma)}} < \frac{g}{8},
    \end{equation}
    where the last inequality comes from the fact that we choose $C_1 \ge 32$. Then we finish the proof of \textbf{Part 2}.
    
    \textbf{Part 3:} In this part, we complete the induction step. Suppose that the inequality holds for $R_{j+1}^{\ell}$ and $R_{j+1}^{\ell+1}$, we now show that the inequality also hold for $R_{j}^{\ell}$. First from \textbf{Part 2}, we know that the inequality holds for $R^{\ell+1}(s)$ for all $s\in [s_j,e_j]$.
    
    Given $\omega_{s_j}$, we know that $\cM_{s_j}$ is determined. Then we will show that for any $s_j \le t\le e_j$, we have $\I\left\{\cC_t\land\cN^s\right\} = \I\left\{\cC_{s_j}\land\cN^s\right\}$. We can first assume that $\cN^s$ happens, otherwise both sides of the equation is 0. Then if a mean vector $\bv$ is added into the set $\cM$ by our algorithm during time interval $[s_j,e_j]$, and if we assume that $\cN^s$ happens, then from Equation \ref{equ:no-bad-procedure-in-same-interval}, we know that $\bv$ will not affect whether $\cC_t$ will happen for any $t\in [s_j,e_j]$. In this way, only the vectors $\bv'\in\cM_{s_j}$ affect $\cC_t$. Then, we have $\I\left\{\cC_t\land\cN^s\right\} = \I\left\{\cC_{s_j}\land\cN^s\right\}$ for all $s_j \le t\le e_j$.
    
    Now we can suppose that $\I\left\{\cC_{s_j}\land\cN^s\right\} = 1$, otherwise the induction is absolutely true(The steps $s_j \le t\le e_j$ do not contribute to the filtered regret). Then we assume arm $b_{s_j}$ to be the smallest arm such that $\exists (g,e,\bv)\in\cM_{s_j}$ such that $e = b_{s_j}$ and exists $a\ge b_{s_j},|\bv_{a}-\mu_{a,s_j}|>\frac{g}{4}$. From now on, we use $(g^j,e^j,\bv^j)$ to denote the element such that $e^j = b_{s_j}$. We set $\varepsilon_j = \max_{a\ge b_{s_j}}|\mu_{a,s_j} - \bv^j_{a}|$ to be the max difference from the true mean and the estimated mean. We set $d_{i_j} < \varepsilon_j \le 2d_{i_j}$ where $d_i = 2^{-i}$ is defined in the algorithm. 
    
    Then there are 2 kinds of arms during time $s_j \le t\le e_j$, the first is that $A_t < b_{s_j}$, which only happens in an exploration phase, and the second is that $A_t \ge b_{s_j}$, which may happens both in an exploration phase or not in any exploration phase. However, we can know that $b_t$ are all the same for any $s_j \le t\le e_j$, since if we add $\bv$ into $\cM$ during time interval $[s_j,e_j]$, then $\bv$ will not affect whether $\cC_t$ happens during time interval $[s_j,e_j]$, so $\cB_t = \{A_t \ge b_t\} = \{A_t \ge b_{s_j}\}$.
    
   First, we have
    \[d_{i_j} \ge \frac{\varepsilon_j}{2} \ge \frac{g^j}{8},\]
    and thus for $A_t \ge b_{s_j}$, we have
    \begin{equation}\label{equ:regret-bad-arm}
        \mu_t^* - \mu_{A_t,t} \le 2\varepsilon_j + g^j \le 6\varepsilon_j \le 12 d_{i_j},
    \end{equation}
    since all the arms $a < b_{s_j}$ will not be optimal(from the definition of $b_{s_j}$ and the definition of $\cC_{s_j}$).
    
    Next we decompose the regret $R_{j}^{\ell}(\omega_{s_j})$ into several cases: $\tau_{\ell+1} = s$ for $s\in [s_j,e_j]$ or $\tau_{\ell} < s_{j+1} \le \tau_{\ell+1}$. In the first case, we use $R^{\ell+1}(s)$ to complete the induction and in the second case, we use $R_{j+1}^{\ell}$ to complete the induction. Then we have for any $\omega_{s_j}$ satisfies $\tau_{\ell} < s_j \le \tau_{\ell+1}$.
    \begin{align}
        R_{j}^{\ell}(\omega_{s_j})
        =& \E_{s_{j}}\left[\sum_{t=s_{\ell}}^T\underbrace{\left(\mu^*_t - \mu_{A_t,t}\right)\cdot \I\left\{\cN^s\land \lnot \cC_t\land\cB_t\right\}}_{\text{term }\mathbb C_t}\biggr|\omega_{s_j}\right]\nonumber \\
        =& \sum_{s=s_j}^{e_j}\E_{s_{j}}\left[\sum_{t=s_{\ell}}^T\mathbb C_t\cdot\I\{\tau_{\ell+1} = s\}\biggr|\omega_{s_j}\right]+ \E_{s_{j}}\left[\sum_{t=s_{\ell}}^T\mathbb C_t\cdot\I\{\tau_{\ell} <s_{j+1} \le \tau_{\ell+1} \}\biggr|\omega_{s_j}\right]\nonumber \\
        \le& \sum_{s=s_j}^{e_j}\E_{s_{j}}\left[\sum_{t=s_{\ell}}^{s-1}\mathbb C_t\cdot\I\{\tau_{\ell+1} = s\}\biggr|\omega_{s_j}\right]\label{equ:induction-illus}+ \sum_{s=s_j}^{e_j}\Pr\{\tau_{\ell+1} = s\land \cN^s|\omega_{s_j}\}\cdot ||R^{\ell+1}(s)||_{\infty} \\
        &\ + \E_{s_{j}}\left[\sum_{t=s_{j}}^{e_j}\mathbb C_t\cdot\I\{\tau_{\ell} <s_{j+1} \le \tau_{\ell+1} \}\biggr|\omega_{s_j}\right]+ \Pr\{\tau_{\ell} <s_{j+1}\le \tau_{\ell+1}\land \cN^s|\omega_{s_j}\}\cdot ||R_{j+1}^{\ell}||_{\infty}\nonumber \\
        \le& \sum_{s=s_j}^{e_j}\E_{s_{j}}\left[\sum_{t=s_{\ell}}^{s-1}\mathbb C_t\cdot\I\{\tau_{\ell+1} = s\}\biggr|\omega_{s_j}\right] + \E_{s_{j}}\left[\sum_{t=s_{j}}^{e_j}\mathbb C_t\cdot\I\{\tau_{\ell} <s_{j+1} \le \tau_{\ell+1} \}\biggr|\omega_{s_j}\right]\nonumber  \\
        &\ + \Pr\{\tau_{\ell} <s_{j+1}\le \tau_{\ell+1}\land \cN^s|\omega_{s_j}\}12\sqrt{\frac{T}{\ell+1}}\nonumber \\
        &\ +12\sum_{k=\ell+1}^{\cS}\sqrt{\frac{T}{k+1}}+24\sum_{j'\ge j+1}\sqrt{C_2\ln(KT^3)(e_{j'}-s_{j'}+1)},\nonumber 
    \end{align}
    where the Equation (\ref{equ:induction-illus}) comes from the fact that for any $\omega$ such that $\lnot\cN^s$ happens, $R^{\ell+1}(s)(\omega) = R^{\ell}_{j+1}(\omega) = 0$, and the last 3 terms comes from the induction step. Then to complete the induction step, we just have to show that
    \begin{align}
        & \sum_{s=s_j}^{e_j}\E_{s_{j}}\left[\sum_{t=s_{\ell}}^{s-1}\mathbb C_t\cdot\I\{\tau_{\ell+1} = s\}\biggr|\omega_{s_j}\right]\nonumber\\
        &\ + \E_{s_{j}}\left[\sum_{t=s_{j}}^{e_j}\mathbb C_t\cdot\I\{\tau_{\ell} <s_{j+1} \le \tau_{\ell+1} \}\biggr|\omega_{s_j}\right]\nonumber \\
        &\ + \Pr\{\tau_{\ell} <s_{j+1}\le \tau_{\ell+1}\land \cN^s|\omega_{s_j}\}12\sqrt{\frac{T}{\ell+1}}\nonumber\\
        \le& 12\sqrt{\frac{T}{\ell+1}} + 24\sqrt{C_2\ln(KT^3)(e_{j}-s_{j}+1)}\label{equ:induction}.
    \end{align}
    We first observe that from Equation \ref{equ:regret-bad-arm},
    \begin{align*}
        \mathbb C_t =& \left(\mu^*_t - \mu_{A_t,t}\right)\cdot \I\left\{\cN^s\land \lnot \cC_t\land\cB_t\right\}\\
        \le& 12 d_{i_j}\cdot \I\left\{\cN^s\land \lnot \cC_t\land\cB_t\right\}.
    \end{align*}
    Then if the length of interval $[s_j,e_j]$ satisfies
    \[s_j-e_j + 1 \le 2\left\lceil\frac{C_2\ln (KT^3)}{d_{i_j}^2}\right\rceil,\]
    then when $KT^3>1$, we have
    \begin{align*}
        d_{i_j} \le \sqrt{\frac{4C_2\ln (KT^3)}{s_j-e_j+1}},
    \end{align*}
    and thus we can show that
    \begin{align*}
         & \sum_{s=s_j}^{e_j}\E_{s_{j}}\left[\sum_{t=s_{\ell}}^{s-1}\mathbb C_t\cdot\I\{\tau_{\ell+1} = s\}\biggr|\omega_{s_j}\right]\\
        &\ + \E_{s_{j}}\left[\sum_{t=s_{j}}^{e_j}\mathbb C_t\cdot\I\{\tau_{\ell} <s_{j+1} \le \tau_{\ell+1} \}\biggr|\omega_{s_j}\right] \\
        \le& 12d_{i_j}(s_j-e_j+1)\\
        \le& 24\sqrt{C_2\ln (KT^3)(s_j-e_j+1)},
    \end{align*}
    and we finish the induction step in this case using the fact that
    \[\Pr\{\tau_{\ell} <s_{j+1}\le \tau_{\ell+1}\land \cN^s|\omega_{s_j}\} \le 1.\]
    Then we just have to finish the induction step in the second case where
    \[s_j-e_j + 1 > 2\left\lceil\frac{C_2\ln (KT^3)}{d_{i_j}^2}\right\rceil \ge \frac{2C_2\ln (KT^3)}{d_{i_j}^2}. \]
    Then the key observation is that, when $\cN^s$ happens and an exploration phase $\left(d_{i_j},\left[t,t+\lceil\frac{C_2\ln (KT^3)}{d_{i_j}^2}\rceil\right)\right)$ is inserted into $\cM$ in the time interval $t\in \left[s_j,e_j-\left\lceil\frac{C_2\ln (KT^3)}{d_{i_j}^2}\right\rceil+1\right]$, the non-stationary test will detect the non-stationary at $t' = t+\lceil\frac{C_2\ln (KT^3)}{d_{i_j}^2}\rceil-1$ and the new epoch will start at time $t'+1$. This is due to the fact that: As $d_{i_j} \ge \frac{\varepsilon_j}{2} \ge \frac{g^j}{8}$ and $(g^j,e^j,\bv^j)$ is in $\cM$ at time $s_j$, then from Lemma \ref{lem:exploration-observation}, we know that all the arms $a\ge b_{s_j}$ are observed for $\lceil\frac{C_2\ln (KT^3)}{d_{i_j}^2}\rceil$ times. Then for the arm $a$ such that $|\mu_{a,s_j} -\bv^j_{a}| = \varepsilon_j > d_{i_j}$, we know that by $\cN^s$, we have
    \begin{equation}\label{equ:key-observation-switching}
        |\mu_{s_j} -\hat\mu_a\left[t,t'+1\right)| \le \frac{d_{i_j}}{\sqrt{2C_2}} \le \frac{d_{i_j}}{8},
    \end{equation}
    and thus we can conclude
    \[|\bv^j_{a} - \hat\mu_a\left[t,t'+1\right)| > \frac{7d_{i_j}}{8} \ge \frac{d_{i_j}}{4},\]
    and the epoch will end at time $t'$. Then from the key observation before, we have
    \[\Pr\{\tau_{\ell} <s_{j+1}\le \tau_{\ell+1}\land \cN^s|\omega_{s_j}\} \le (1-p_{\ell,i_j})^{e_j+1-s_j-w_{i_j}},\]
    where $p_{\ell,i} = d_i\sqrt{\frac{\ell+1}{T}}$ is the probability to add a sampling phase with index $i$ at round $\ell$(see Algorithm \ref{alg:nonstationaryelim}) and $w_{i} = \left\lceil\frac{C_2\ln (KT^3)}{d_{i}^2}\right\rceil$ is a shorthand of the length of the sampling phase with index $i$.
    
    Then we bound
    \begin{align}
        & \sum_{s=s_j}^{e_j}\E_{s_{j}}\left[\sum_{t=s_{\ell}}^{s-1}\mathbb C_t\cdot\I\{\tau_{\ell+1} = s\}\biggr|\omega_{s_j}\right]+ \E_{s_{j}}\left[\sum_{t=s_{j}}^{e_j}\mathbb C_t\cdot\I\{\tau_{\ell} <s_{j+1} \le \tau_{\ell+1} \}\biggr|\omega_{s_j}\right]\label{equ:left-term}.
    \end{align}
    As $\mathbb C_t = \left(\mu^*_t - \mu_{A_t,t}\right)\cdot \I\left\{\cN^s\land \lnot \cC_t\land\cB_t\right\}$ and when $\cB_t$ happens, from Equation (\ref{equ:regret-bad-arm}), we have $\mu_t^* - \mu_{A_t,t} \le 12d_{i_j}$. Then we can bound (\ref{equ:left-term}) by
    \begin{align*}
    & \sum_{s=s_j}^{e_j}\E_{s_{j}}\left[\sum_{t=s_{\ell}}^{s-1}12d_{i_j}\cdot\I\{\tau_{\ell+1} = s\land\cN^s\}\biggr|\omega_{s_j}\right]\\
    &\ + \E_{s_{j}}\left[\sum_{t=s_{j}}^{e_j}12d_{i_j}\cdot\I\{\tau_{\ell} <s_{j+1} \le \tau_{\ell+1} \land\cN^s\}\biggr|\omega_{s_j}\right]\\
    =& 12d_{i_j}\sum_{s=s_j}^{e_j}\Pr\left\{\tau_{\ell+1}> s\land\cN^s\big|\omega_{s_j}\right\}.
    \end{align*}
    For $s\le s_j + w_{i_j}-1$, we bound $\Pr\left\{\tau_{\ell+1}> s\land\cN^s\big|\omega_{s_j}\right\} \le 1$, and for $s\ge s_j + w_{i_j}$, we bound 
    \[\Pr\left\{\tau_{\ell+1}> s\land\cN^s\big|\omega_{s_j}\right\} \le (1-p_{\ell,i_j})^{s+1-s_j-w_{i_j}}.\]
    Then we have
    \begin{align*}
        12d_{i_j}\sum_{s=s_j}^{e_j}\Pr\left\{\tau_{\ell+1}> s\land\cN^s\big|\omega_{s_j}\right\}
        \le& 12d_{i_j}w_{i_j} + 12d_{i_j}\sum_{r=1}^{e_j+1-s_j-w_{i_j}}(1-p_{\ell,i_j})^{r}\\
        \le& 12d_{i_j}\left[w_{i_j} + \frac{1-(1-p_{\ell,i_j})^{e_j+1-s_j-w_{i_j}}}{p_{\ell,i_j}}\right]\\
        =& 12d_{i_j}w_{i_j} + 12\frac{d_{i_j}}{p_{\ell,i_j}}\left[1-(1-p_{\ell,i_j})^{e_j+1-s_j-w_{i_j}}\right]\\
        \le& 12\sqrt{\frac{T}{\ell+1}}\left[1-(1-p_{\ell,i_j})^{e_j+1-s_j-w_{i_j}}\right] + 24\frac{C_2\ln(KT^3)}{d_{i_j}}\\
        \le& 12\sqrt{\frac{T}{\ell+1}}\left[1-(1-p_{\ell,i_j})^{e_j+1-s_j-w_{i_j}}\right] + 24\sqrt{C_2\ln (KT^3)(s_j-e_j+1)}.
    \end{align*}
    Then we finish the induction(Equation \ref{equ:induction}) by the previous argument that
    \[\Pr\{\tau_{\ell} <s_{j+1}\le \tau_{\ell+1}\land \cN^s|\omega_{s_j}\} \le (1-p_{\ell,i_j})^{e_j+1-s_j-w_{i_j}}.\]
    
    Given the induction result, we conclude the proof by
    \begin{align*}
    R^{1}(1)\le& 12\sum_{k=1}^{\cS}\sqrt{\frac{T}{k+1}}+24\sum_{j'\ge 1}\sqrt{C_2\ln(KT^3)(e_{j'}-s_{j'}+1)}\\
    \le& 24\sqrt{(\cS+1)T}+24\sqrt{C_2\ln(KT^3)\cS T}.
    \end{align*}
\end{proof}

As shown before, the next lemma is a simple application of the high probability results of $\cN^s$.

\lemfifthswitching*

\begin{proof}
    From Lemma \ref{lem:samplingnice}, we know that $\Pr\{\lnot\cN^s\} \le \frac{2}{T}$. Then, since the distribution $\nu_t$ has support on $[0,1]^K$, we know that $|\mu_{A_t,t} - \mu^*_t| \le 1$, and we have
    \begin{align*}
        \E\left[\sum_{t=1}^T\left(\mu^*_t - \mu_{A_t,t}\right)\cdot\I\left\{\lnot\cN^s\right\}\right]
        \le& \E\left[\sum_{t=1}^T\I\left\{\lnot\cN^s\right\}\right] \\
        =& T\cdot \E\left[\I\left\{\lnot\cN^s\right\}\right] \\
        =& T\cdot \Pr\{\lnot\cN^s\}\\
        \le& 2.
    \end{align*}
\end{proof}

Combining the previous lemma, we have the following theorem for the one-sided full-information bandit in the switching case.

\thmswitchingregret*

\begin{proof}[Proof of Theorem \ref{thm:switching-regret}]
    The proof of Theorem \ref{thm:switching-regret} is a direct combination of Lemma \ref{lem:decomposing-switching},\ref{lem:first-term-switching},\ref{lem:secondthird-term-switching},\ref{lem:forth-term-switching} and \ref{lem:fifth-term-switching}.
\end{proof}

\subsection{Dynamic Regret}

In this section, we give the proof of Theorem \ref{thm:dynamic-regret}. The proof is an imitation of the proof strategy of Theorem \ref{thm:switching-regret}. Although the mean can be changing at every time $t\in [1,T]$, we can approximatly divide them into several subinterval such that in each interval, the change of mean is not large. Recall that $\cV_{\cI} := \sum_{t=s+1}^{s'}\max_{a}||\mu_{a,t} -\mu_{a,t-1}||$ and we use $\cV := \cV_{[1,T]}$. We have the following lemma,

\lemintervalpartition*

The construction of the partition can be implemented by a simple greedy algorithm(Algorithm \ref{alg:interval-partition}), and the proof can be concluded by a simple application of the H\"older's Inequality. The proof is very similar to the proof of Lemma 5 in \cite{chen2019anewalgorithm}. For self-completeness, we include the proof from \cite{chen2019anewalgorithm} and do some small revision.

\begin{algorithm*}[!ht]
    \caption{Interval Partition}
    \label{alg:interval-partition}
    \begin{algorithmic}[1]
        \Require Time horizon $T$.
        \Ensure Disjoint intervals $\cI_1\cup\cdots\cup\cI_{\Gamma} = [1,T]$.
        \State $k\leftarrow 1, s_1 \leftarrow 1, t = 1$
        \While{$t\le T$}
            \If{$\cV_{[s_k,t]} \le \sqrt{\frac{C_3}{t-s_k+1}}$ and $\cV_{[s_k,t+1]} > \sqrt{\frac{C_3}{t-s_k+2}}$}
                \State $e_k \leftarrow t,\cI_k \leftarrow [s_k,e_k]$,
                \State $k \leftarrow k+1, s_k \leftarrow t+1$.
            \EndIf
            \State $t\leftarrow t+1$.
        \EndWhile
        \If{$s_k \le T$}
            \State $e_k \leftarrow T,\cI_k \leftarrow [s_k,e_k]$
        \EndIf
    \end{algorithmic}
\end{algorithm*}

\begin{proof}
    We only have to show that, by the construction shown in Algorithm \ref{alg:interval-partition}, the number of intervals $\Gamma$ satisfies $\Gamma \le \left(\frac{2T}{C_3}\right)^{1/3}\cV^{\frac{2}{3}}+ 1$.
    
    If the retuning intervals satisfy $\Gamma > 1$, we have
    \begin{align*}
        \cV \ge& \cV_{[s_1,e_1+1]} + \cV_{[s_2,e_k+2]} + \cdots + \cV_{[s_{\Gamma-1},e_{\Gamma-1}+1]} \\
            \ge& \sum_{k=1}^{\Gamma-1}\sqrt{\frac{C_3}{e_k - s_k + 1}} \\
            =& \sum_{k=1}^{\Gamma-1}\sqrt{\frac{C_3}{|\cI_k| + 1}}.
    \end{align*}
    Then applying the H\"older's Inequality, we have
    \begin{align*}
        &\left(\sum_{k=1}^{\Gamma-1}\sqrt{\frac{C_3}{|\cI_k| + 1}}\right)^{\frac{2}{3}}\left(\sum_{k=1}^{\Gamma-1}(|\cI_k| + 1)\right)^{\frac{1}{3}} \\
        \ge& \sum_{k=1}^{\Gamma - 1}\left(\sqrt{\frac{C_3}{|\cI_k| + 1}}\right)^{\frac{2}{3}}\left(|\cI_k| + 1\right)^{\frac{1}{3}} \\
        =& (\Gamma - 1)C_3^{1/3}.
    \end{align*}
    On the other hand, we have
    \begin{align*}
        &\left(\sum_{k=1}^{\Gamma-1}\sqrt{\frac{C_3}{|\cI_k| + 1}}\right)^{\frac{2}{3}}\left(\sum_{k=1}^{\Gamma-1}(|\cI_k| + 1)\right)^{\frac{1}{3}} \\
        \le& \cV^{\frac{2}{3}} \left(\sum_{k=1}^{\Gamma-1}2|\cI_k|\right)^{\frac{1}{3}} \\
        \le& \cV^{\frac{2}{3}}(2T)^{\frac{1}{3}}.
    \end{align*}
    Then we have
    \[\Gamma \le \left(\frac{2T}{C_3}\right)^{1/3}\cV^{\frac{2}{3}}+ 1.\]
\end{proof}

Note that this partition of interval only happens in the analysis of the algorithm, but not the implementation of the algorithm. Then given the previous interval partition lemma, we can now mimic the proof of Theorem \ref{thm:switching-regret}. 

From now on, we will choose the parameters $C_1 \ge 8192, C_2 \ge 128, C_3 \le \frac{1}{2}$, and we fix a partition of the interval $[1,T]$ into $\cI_1\cup\cI_2\cup\cdots\cup\cI_{\Gamma}$ such that $\cI_i \cap \cI_j = \phi$, and for any $i\le\Gamma$, $\cV_{\cI_i} \le \sqrt{\frac{C_3}{|\cI_i|}}$, and $\Gamma \le \left(\frac{2T}{C_3}\right)^{1/3}\cV^{\frac{2}{3}}+ 1$. We also use the notations $\cI_j = [s_j,e_j]$ for convenience.

First, we show that given the partition, the difference of the average mean of 2 sub-index set cannot differ a lot.

\begin{restatable}{lemma}{lemmeandifference}\label{lem:mean-difference}
    Suppose that we partition the interval $[1,T]$ into $\cI_1\cup\cI_2\cup\cdots\cup\cI_{\Gamma}$ such that $\cI_i \cap \cI_j = \phi$, and for any $i\le\Gamma$, $\cV_{\cI_i} \le \sqrt{\frac{C_3}{|\cI_i|}}$. Then for any interval $\cI_i$ and any index sets $\cJ_1,\cJ_2 \subseteq \cI_i$, any arm $a$, we have
    \[\biggr|\frac{1}{|\cJ_1|}\sum_{s\in\cJ_1}\mu_{a,s} - \frac{1}{|\cJ_2|}\sum_{s\in\cJ_2}\mu_{a,s}\biggr| \le \sqrt{\frac{C_3}{|\cI_i|}}.\]
\end{restatable}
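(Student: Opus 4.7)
The plan is to reduce the statement to a uniform bound on pointwise differences of the means within an interval, and then use an averaging trick.

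First, I would observe that the telescoping property of total variation immediately yields a pointwise bound. For any two rounds $s, s' \in \cI_i$ with $s \le s'$, the triangle inequality gives
\[|\mu_{a,s} - \mu_{a,s'}| \le \sum_{r=s+1}^{s'}|\mu_{a,r} - \mu_{a,r-1}| \le \sum_{r=s+1}^{s'}\max_{a'}|\mu_{a',r} - \mu_{a',r-1}| \le \cV_{\cI_i} \le \sqrt{\frac{C_3}{|\cI_i|}},\]
where the last inequality is the hypothesis on the partition. The case $s > s'$ is symmetric, so this bound holds for all pairs $s, s' \in \cI_i$ and all arms $a$.

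Next, I would write the difference of the two averages as a double average over $\cJ_1 \times \cJ_2$:
\[\frac{1}{|\cJ_1|}\sum_{s\in\cJ_1}\mu_{a,s} - \frac{1}{|\cJ_2|}\sum_{s'\in\cJ_2}\mu_{a,s'} = \frac{1}{|\cJ_1|\,|\cJ_2|}\sum_{s\in\cJ_1}\sum_{s'\in\cJ_2}(\mu_{a,s} - \mu_{a,s'}).\]
Applying the triangle inequality and the pointwise bound term by term, each summand is at most $\sqrt{C_3/|\cI_i|}$ in absolute value. Since there are exactly $|\cJ_1|\,|\cJ_2|$ summands, the prefactor cancels, yielding the desired bound $\sqrt{C_3/|\cI_i|}$.

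There is no real obstacle here; the lemma is essentially a repackaging of the definition of $\cV_{\cI_i}$ combined with the defining property of the partition from Lemma \ref{lem:interval-partition}. The only point to be slightly careful about is making sure $\cJ_1$ and $\cJ_2$ are treated as arbitrary (possibly overlapping, possibly singletons) nonempty subsets of $\cI_i$, so the double-average rewriting and the pointwise bound apply uniformly.
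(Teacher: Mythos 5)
Your proof is correct and follows essentially the same route as the paper's: both reduce to the pointwise bound $|\mu_{a,s}-\mu_{a,s'}|\le \cV_{\cI_i}\le\sqrt{C_3/|\cI_i|}$ for $s,s'\in\cI_i$ and then average it away (the paper nests the two averages sequentially, you write them as a single double sum over $\cJ_1\times\cJ_2$, which is the same computation). Your telescoping step even makes explicit a pointwise inequality the paper uses implicitly, so nothing is missing.
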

    
\begin{proof}
    First for any $t\in\cI_i$, we have the following
    \begin{align*}
        \biggr|\mu_{a,t} - \frac{1}{|\cJ|}\sum_{s\in\cJ}\mu_{a,s}\biggr| 
        =& \biggr| \frac{1}{|\cJ|}\sum_{s\in\cJ}(\mu_{a,t} - \mu_{a,s})\biggr| \\
        \le& \frac{1}{|\cJ|}\sum_{s\in\cJ}|\mu_{a,t} - \mu_{a,s}| \\
        \le& \frac{1}{|\cJ|}\sum_{s\in\cJ}\cV_{\cI_i} \\
        \le& \sqrt{\frac{C_3}{|\cI_i|}}.
    \end{align*}
    Then, we have
    \begin{align*}
        \biggr|\frac{1}{|\cJ_1|}\sum_{s\in\cJ_1}\mu_{a,s} - \frac{1}{|\cJ_2|}\sum_{s\in\cJ_2}\mu_{a,s}\biggr|
        =& \biggr|\frac{1}{|\cJ_1|}\sum_{s'\in\cJ_1}\left(\mu_{a,s'} - \frac{1}{|\cJ_2|}\sum_{s\in\cJ_2}\mu_{a,s}\right)\biggr| \\
        \le& \frac{1}{|\cJ_1|}\sum_{s'\in\cJ_1}\biggr|\mu_{a,s'} - \frac{1}{|\cJ_2|}\sum_{s\in\cJ_2}\mu_{a,s}\biggr| \\
        \le& \frac{1}{|\cJ_1|}\sum_{s'\in\cJ_1}\sqrt{\frac{C_3}{|\cI_i|}} \\
        =& \sqrt{\frac{C_3}{|\cI_i|}}.
    \end{align*}
\end{proof}

Then, we have the corresponding lemma of Lemma \ref{lem:number-of-epochs-switching} showing that the number of epochs in the dynamic case is bounded. The proof is also an imitation, except that we have to consider the change of mean in each sub-interval.

\begin{restatable}{lemma}{lemnumberofepochsdynamic}\label{lem:number-of-epochs-dynamic}
    Suppose that we partition the interval $[1,T]$ into $\cI_1\cup\cI_2\cup\cdots\cup\cI_{\Gamma}$ such that $\cI_i \cap \cI_j = \phi$, and for any $i\le\Gamma$, $\cV_{\cI_i} \le \sqrt{\frac{C_3}{|\cI_i|}}$. When $\cN^s$ happens, we have at time $T$, $\ell \le \Gamma$, i.e. the number of epochs will not exceed the number of partitioned intervals.
\end{restatable}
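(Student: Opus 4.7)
The plan is to mimic the proof of Lemma \ref{lem:number-of-epochs-switching}, but to replace the ``identical distribution within an interval'' argument by the bounded-drift bound from Lemma \ref{lem:mean-difference}, and to rely on the strengthened constants $C_1\ge 8192$, $C_2 \ge 128$, $C_3 \le 1/2$ to absorb the additional slack.

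First I would argue by contradiction: assume $\cN^s$ holds and two consecutive epoch starts $\tau_\ell$ and $\tau_{\ell+1}$ both lie in the same partition interval $\cI_i$. Let $t+1=\tau_{\ell+1}$ be the round where the non-stationarity check of Step 4 fires, and let $(d,[t',t+1))\in\cE_t$, $(g,e,\bv)\in\cM'_t$ and arm $a\ge e$ witness the firing, so that $g\le 8d$ and $|\hat\mu_a[t',t+1)-\bv_a|>d/4$. As in the switching proof, since $\tau_\ell\le t'\le t$ and both endpoints lie in $\cI_i$, the entire interval $[t',t]$ is contained in $\cI_i$; similarly the storage time $s$ of $(g,e,\bv)$ and its reference window $[\sigma,s]$ (with $\sigma\ge\tau_\ell$, $g>\sqrt{C_1\ln(KT^3)/(s+1-\sigma)}$) both lie inside $\cI_i$. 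Then I would invoke Lemma \ref{lem:exploration-observation} exactly as in the switching proof to conclude that arm $a$ is observed at every step of $[t',t]$.

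The heart of the argument is the following three-way decomposition:
\begin{align*}
|\hat\mu_a[t',t+1)-\bv_a|
&\le \Bigl|\hat\mu_a[t',t+1)-\tfrac{1}{t+1-t'}\!\!\sum_{r\in[t',t]}\!\!\mu_{a,r}\Bigr|
 + \Bigl|\tfrac{1}{s+1-\sigma}\!\!\sum_{r\in[\sigma,s]}\!\!\mu_{a,r}-\bv_a\Bigr| \\
&\quad + \Bigl|\tfrac{1}{t+1-t'}\!\!\sum_{r\in[t',t]}\!\!\mu_{a,r} - \tfrac{1}{s+1-\sigma}\!\!\sum_{r\in[\sigma,s]}\!\!\mu_{a,r}\Bigr|.
\end{align*}
The first term is bounded by $d/\sqrt{2C_2}$ using $\cN^s$ and the exploration-phase length $\lceil C_2\ln(KT^3)/d^2\rceil$; the second is bounded by $g/\sqrt{2C_1}\le 8d/\sqrt{2C_1}$ using $\cN^s$ and the elimination condition; both of these arguments are identical to the switching case.

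The main obstacle, and the one genuinely new piece, is the third term, which was zero in the switching proof but is now nonzero because the means can drift within $\cI_i$. Here I apply Lemma \ref{lem:mean-difference} (with $\cJ_1=[t',t]$, $\cJ_2=[\sigma,s]$, both subsets of $\cI_i$) to bound this term by $\sqrt{C_3/|\cI_i|}$. The key quantitative observation is that the exploration phase $[t',t]$ is contained in $\cI_i$ and has length at least $C_2\ln(KT^3)/d^2$, so $|\cI_i|\ge C_2\ln(KT^3)/d^2$, which gives $\sqrt{C_3/|\cI_i|}\le d\sqrt{C_3/(C_2\ln(KT^3))}$. Under the hypotheses $C_1\ge 8192$, $C_2\ge 128$, $C_3\le 1/2$, and $\ln(KT^3)\ge 1$, each of the three terms is at most $d/16$, so their sum is at most $3d/16 < d/4$, contradicting the firing condition. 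This completes the contradiction, so epoch $\ell+1$ cannot start in $\cI_i$; iterating across all $\Gamma$ partition intervals yields $\ell\le\Gamma$ at time $T$.
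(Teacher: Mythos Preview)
Your proof is correct and follows essentially the same route as the paper: argue by contradiction, invoke Lemma \ref{lem:exploration-observation}, and combine the $\cN^s$ concentration with the drift bound from Lemma \ref{lem:mean-difference} to show that the non-stationarity check cannot fire within a single partition interval. Your three-way decomposition via the two interval averages $\frac{1}{t+1-t'}\sum_{r\in[t',t]}\mu_{a,r}$ and $\frac{1}{s+1-\sigma}\sum_{r\in[\sigma,s]}\mu_{a,r}$ is in fact slightly cleaner than the paper's version, which routes through the point values $\mu_{a,t}$ and $\mu_{a,s}$ and applies the drift bound separately to each piece.
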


\begin{proof}
    Given the partition $[1,T] =\cI_1\cup\cI_2\cup\cdots\cup\cI_{\Gamma}$, where $\cV_{\cI_i} \le \sqrt{\frac{C_3}{|\cI_i|}}$ for all $i\le \Gamma$, we only have to show that, if $\cN^s$ happens and epoch $\ell$ starts at time $t$ in interval $\cI_i = [s_i,e_i]$, epoch $\ell$ will not end in the interval $[s_i,e_i]$.
    
    Like Lemma \ref{lem:number-of-epochs-switching}, we prove by contradiction, $\tau_{\ell},\tau_{\ell+1}\in [s_i,e_i]$. Since epoch $\ell$ ends in time interval $[s_i,e_i]$, we know that from the definition of algorithm(Step 4), $\exists t\in [s_i,e_i], (d,[t',t+1))\in\cE_t, (g,e,\bv)\in\cM'_t,a\ge e$ such that $v_1 \le 8d$ and $|\hat\mu_a[t',t+1) - \bv_{a}| > \frac{d}{4}$, where $\cM'_t$ denote the set $\cM$ in time $t$ just before Step 4. Moreover, $\tau_{\ell+1} = t+1$. From now on, we will fix the variables  $d,a,t,\bv$. However we will show that when $\cN^s$ happens, $|\hat\mu_a[t',t+1) - \bv_{a}| \le \frac{d}{4}$.
    
    Like before, we have $\min\{e:(g,e,\bv)\in\cM_t,g\le 8d\} = \min\{e:(g,e,\bv)\in\cM'_t,g\le 8d\}$.
    
    Then from Lemma \ref{lem:exploration-observation}, we know that arm $a$ has been observed for $t-t'+1 = \lceil\frac{C_2\ln (KT^3)}{d^2}\rceil$ times in the interval $[t',t]$, because $a \ge e \ge \min\{e^*:(g^*,e^*,\bv^*)\in\cM'_t,g^*\le 8d\} = \min\{e^*:(g^*,e^*,\bv^*)\in\cM_t,g^*\le 8d\}$, Lemma \ref{lem:mean-difference}, and from the definition of $\cN^s$(Definition \ref{def:sampling-nice}), we have
    \begin{align*}
    |\hat\mu_a[t',t+1)-\mu_{a,t}|
    =& \biggr|\hat\mu_a[t',t+1)-\frac{1}{t+1-t'}\sum_{s=t'}^t\mu_{a,s}+\frac{1}{t+1-t'}\sum_{s=t'}^t\mu_{a,s}-\mu_{a,t}\biggr|\\
    \le&\biggr|\hat\mu_a[t',t+1)-\frac{1}{t+1-t'}\sum_{s=t'}^t\mu_{a,s}\biggr|+\biggr|\frac{1}{t+1-t'}\sum_{s=t'}^t\mu_{a,s}-\mu_{a,t}\biggr|\\
    \le& \sqrt{\frac{\ln(KT^3)}{2(t-t'+1)}} + \sqrt{\frac{C_3}{|\cI_i|}}\\
    \le& 2\sqrt{\frac{\ln(KT^3)}{2\lceil\frac{C_2\ln (KT^3)}{d^2}\rceil}}\\
    \le& 2\sqrt{\frac{\ln(KT^3)d^2}{2C_2\ln (KT^3)}}\\
    \le& \frac{2d}{\sqrt{2C_2}},
    \end{align*}
    where we use the assumption that $C_3 \le \frac{1}{2}$ and $\tau_{\ell+1} \in\cI_i$.
    
    Then we consider $\bv_{a}$. Suppose that the element $(g,e,\bv)$ is added into the set $\cM$ at time $s$, and we know that there exists $\sigma$ such that
    \[g > \sqrt{\frac{C_1\ln(KT^3)}{s+1-\sigma}},\]
    and from $\cN^s$, we have
    \begin{align*}
    |\bv_{a} - \mu_{a,s}| =& |\hat\mu_{a}[\sigma,s+1) - \mu_{a,s}| \\
    =& \biggr|\hat\mu_a[\sigma,s+1)-\frac{1}{s+1-\sigma}\sum_{t=\sigma}^s\mu_{a,t}+\frac{1}{s+1-\sigma}\sum_{t=\sigma}^s\mu_{a,t}-\mu_{a,s}\biggr|\\
    \le&\biggr|\hat\mu_a[\sigma,s+1)-\frac{1}{s+1-\sigma}\sum_{t=\sigma}^s\mu_{a,t}\biggr| +\biggr|\frac{1}{s+1-\sigma}\sum_{t=\sigma}^s\mu_{a,t}-\mu_{a,s}\biggr|\\
    < & \sqrt{\frac{\ln(KT^3)}{2(s-\sigma+1)}}+\sqrt{\frac{C_3}{|\cI_i|}}\\
    < & \frac{2}{\sqrt{2C_1}}g \\
    \le & \frac{16d}{\sqrt{2C_1}}.
    \end{align*}
    Then from the choice of parameters $C_1,C_2$ that $C_1 \ge 8192,C_2 \ge 128$, and $s,t\in [s_i,e_i]$, we have $\mu_{a,s} = \mu_{a,t}$ and thus
    \begin{align*}
    |\hat\mu_a[t',t+1) - \bv_{a}|
    \le& |\hat\mu_a[t',t+1)-\mu_{a,t}| + |\bv_{a} - \mu_{a,s}|\\
    <& \frac{2d}{\sqrt{2C_2}} + \frac{16d}{\sqrt{2C_1}} \\
    \le& \frac{d}{8} + \frac{d}{8} \\
    =& \frac{d}{4}.
    \end{align*}
    Then we conclude the proof of this lemma.
\end{proof}

Because we fix a partition of the interval $[1,T]$ into $\cI_1\cup\cI_2\cup\cdots\cup\cI_{\Gamma}$ such that $\cI_i \cap \cI_j = \phi$, and for any $i\le\Gamma$, $\cV_{\cI_i} \le \sqrt{\frac{C_3}{|\cI_i|}}$, and $\Gamma \le \left(\frac{2T}{C_3}\right)^{1/3}\cV^{\frac{2}{3}}+ 1$, we define
\[\bar\mu_{a,t} = \frac{1}{|\cI_j|}\sum_{s\in\cI_j}\mu_{a,s},\]
where $t\in\cI_j$. We also define $\bar\mu^*_t := \max_a\bar\mu_{a,t}$. Then considering $\bar\mu_{a,t}$ for all arms $a$ and time $t\le T$, it becomes ``an instance in switching case'' with number of switches $\Gamma$. We will formalize this idea in the following proofs.

We have the corresponding definitions in the dynamic case.

\begin{restatable}[Records are Consistent(Dynamic)]{definition}{defprocedurenicedynamic}\label{def:procedure-nice-dynamic}
    For a fixed partition of the interval $[1,T]$ into $\cI_1\cup\cI_2\cup\cdots\cup\cI_{\Gamma}$ such that $\cI_i \cap \cI_j = \phi$, and for any $i\le\Gamma$, $\cV_{\cI_i} \le \sqrt{\frac{C_3}{|\cI_i|}}$ and the definition of $\hat\mu_{a,t}$ before, we say that the procedure is nice at time $t$ if for all $(g,e,\bv)\in\cM_t$ such that for any arm $a \ge e$,
    \[|\bar\mu_{a,t}-\bv_{a}| \le \frac{g}{4}.\]
    We use $\cC^{D}_t$ to denote this event.
\end{restatable}

\begin{restatable}[Playing bad arm(Dynamic)]{definition}{defplayingbadarmdynamic}\label{def:playing-bad-arm-dynamic}
    Let $b_t$ denote the smallest arm such that $\exists (g,e,\bv)\in\cM_{t}$ such that $e = b_t$ and exists $a\ge b_t,|\bv_{a}-\bar\mu_{a,t}|>\frac{g}{4}$, i.e.
    \[b_t = \min\left\{e:(g,e,\bv)\in\cM_t,\exists a\ge b_t,|\bv_{a}-\bar\mu_{a,t}|>\frac{g}{4}\right\}.\]
    We use $\cB^D_t$ to denote the event $\{A_t \ge b_t\}$.
\end{restatable}

Then given these definitions, we have the following lemma for decomposing the regret.

\begin{restatable}[Dynamic Regret Decomposition]{lemma}{lemdecomposingdynamic}\label{lem:decomposing-dynamic}
    The regret can be bounded into the following way,
    \begin{align*}
    \E\left[\sum_{t=1}^T\left(\mu^*_t - \mu_{A_t,t}\right)\right]
    \le& \E\left[\sum_{t=1}^T\left(\bar\mu^*_t - \bar\mu_{A_t,t}\right)\cdot\I\left\{\cN^s\land \cC^{D}_t\land \lnot \cP_t\right\}\right]\\
    &\quad+ \E\left[\sum_{t=1}^T\left(\bar\mu^*_t - \bar\mu_{A_t,t}\right)\cdot\I\left\{\cN^s\land \cC^{D}_t\land \cP_t\right\}\right]\\
    &\quad +\E\left[\sum_{t=1}^T\left(\bar\mu^*_t - \bar\mu_{A_t,t}\right)\cdot\I\left\{\cN^s\land \lnot \cC^{D}_t\land\lnot \cB^D_t \right\}\right]\\
    &\quad +\E\left[\sum_{t=1}^T\left(\bar\mu^*_t - \bar\mu_{A_t,t}\right)\cdot\I\left\{\cN^s\land \lnot \cC^{D}_t\land \cB^D_t\right\}\right]\\
    &\quad +\E\left[\sum_{t=1}^T\left(\bar\mu^*_t - \bar\mu_{A_t,t}\right)\cdot\I\left\{\lnot\cN^s\right\}\right].\\
    &\quad +2\sum_{j=1}^{\Gamma}|\cI_j|\cdot\cV_{\cI_j}.
    \end{align*}
\end{restatable}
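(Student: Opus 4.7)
The plan is to reduce this decomposition to the switching-case decomposition (Lemma~\ref{lem:decomposing-switching}) applied to the averaged means $\bar\mu_{a,t}$, by bounding the discrepancy between the true per-round regret $\mu^*_t - \mu_{A_t,t}$ and the averaged per-round regret $\bar\mu^*_t - \bar\mu_{A_t,t}$. The fixed partition $\cI_1 \cup \cdots \cup \cI_\Gamma$ is chosen once in the analysis (not in the algorithm), and $\bar\mu_{a,t}$ is constant on each $\cI_j$, so we can think of $\bar\mu$ as the mean vector of a ``virtual switching instance'' with $\Gamma-1$ switches.

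First I would establish the pointwise approximation bound: for any arm $a$ and any $t \in \cI_j$, since $\bar\mu_{a,t}$ is the average of $\mu_{a,s}$ over $s\in\cI_j$, we have
\[
|\mu_{a,t} - \bar\mu_{a,t}| \le \frac{1}{|\cI_j|}\sum_{s\in\cI_j}|\mu_{a,t}-\mu_{a,s}| \le \cV_{\cI_j},
\]
by the triangle inequality and the definition of $\cV_{\cI_j}$ (each consecutive-step difference is $\le \cV_{\cI_j}$). Next I would argue that the $\max$ operator preserves this bound: letting $a^*_t = \argmax_a \mu_{a,t}$ and $\bar a^*_t = \argmax_a \bar\mu_{a,t}$, a standard two-case argument (bounding $\mu_{a^*_t,t} - \bar\mu_{\bar a^*_t,t}$ from above by $\mu_{a^*_t,t} - \bar\mu_{a^*_t,t}$, and from below by $\mu_{\bar a^*_t,t} - \bar\mu_{\bar a^*_t,t}$) gives $|\mu^*_t - \bar\mu^*_t| \le \cV_{\cI_j}$.

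Combining these two bounds via the triangle inequality yields, for every $t \in \cI_j$,
\[
(\mu^*_t - \mu_{A_t,t}) - (\bar\mu^*_t - \bar\mu_{A_t,t}) \le |\mu^*_t - \bar\mu^*_t| + |\mu_{A_t,t} - \bar\mu_{A_t,t}| \le 2\cV_{\cI_j}.
\]
Summing over $t \in \cI_j$ and then over $j$ produces the additive error term $2\sum_{j=1}^\Gamma |\cI_j|\cV_{\cI_j}$, so we obtain
\[
\sum_{t=1}^T (\mu^*_t - \mu_{A_t,t}) \le \sum_{t=1}^T (\bar\mu^*_t - \bar\mu_{A_t,t}) + 2\sum_{j=1}^\Gamma |\cI_j|\cV_{\cI_j}.
\]

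Finally, I would note that $\bar\mu^*_t - \bar\mu_{A_t,t} \ge 0$ because $\bar\mu^*_t = \max_a \bar\mu_{a,t}$, and that the five events $\cN^s\land \cC^D_t\land\lnot\cP_t$, $\cN^s\land \cC^D_t\land\cP_t$, $\cN^s\land\lnot \cC^D_t\land\lnot \cB^D_t$, $\cN^s\land\lnot \cC^D_t\land \cB^D_t$, and $\lnot\cN^s$ are mutually exclusive and exhaustive (their indicator sum is identically $1$), exactly as in Lemma~\ref{lem:decomposing-switching} with $\cC^D_t, \cB^D_t$ replacing $\cC_t, \cB_t$. Multiplying the nonnegative quantity $\bar\mu^*_t - \bar\mu_{A_t,t}$ by the sum of these indicators and taking expectations yields the five expectation terms in the statement. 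There is no substantial obstacle here: the only care needed is the two-case handling of the $\max$ to ensure $|\mu^*_t - \bar\mu^*_t| \le \cV_{\cI_j}$ and the observation that $\bar\mu^*_t - \bar\mu_{A_t,t} \ge 0$ so that multiplying by partition indicators preserves an inequality rather than an equality.
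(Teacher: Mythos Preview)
Your proposal is correct and follows essentially the same route as the paper: both bound the difference between the true regret $\sum_t(\mu^*_t-\mu_{A_t,t})$ and the averaged regret $\sum_t(\bar\mu^*_t-\bar\mu_{A_t,t})$ by $2\sum_j |\cI_j|\cV_{\cI_j}$ via the per-arm closeness $|\mu_{a,t}-\bar\mu_{a,t}|\le \cV_{\cI_j}$, and then split the averaged regret using the five exhaustive, mutually exclusive indicators. The paper packages your pointwise bound as an application of Lemma~\ref{lem:mean-difference}, whereas you re-derive it; and your remark about needing $\bar\mu^*_t-\bar\mu_{A_t,t}\ge 0$ is unnecessary since the five indicators sum to exactly $1$, so that step is an equality rather than an inequality.
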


\begin{proof}
    The first 5 terms add up to
    \[\E\left[\sum_{t=1}^T\left(\bar\mu^*_t - \bar\mu_{A_t,t}\right)\right].\]
    Then, we have
    \begin{align*}
        \E\left[\sum_{t=1}^T\left(\mu^*_t - \mu_{A_t,t}\right)\right] - \E\left[\sum_{t=1}^T\left(\bar\mu^*_t - \bar\mu_{A_t,t}\right)\right] 
        \le& \E\sum_{t=1}^T|\mu^*_t - \bar\mu^*_t| + \E\sum_{t=1}^T|\mu_{A_t,t} - \bar\mu_{A_t,t}| \\
        \le& 2\sum_{j=1}^{\Gamma}|\cI_j|\cdot\cV_{\cI_j},
    \end{align*}
    where the last inequality derives from Lemma \ref{lem:mean-difference}.
\end{proof}

Then we bound the first five terms in the following four lemmas. The lemmas correspond to those in the switching case.

\begin{restatable}{lemma}{lemfirsttermdynamic}\label{lem:first-term-dynamic}
    \begin{align*}
    &\E\left[\sum_{t=1}^T\left(\bar\mu^*_t - \bar\mu_{A_t,t}\right)\cdot\I\left\{\cN^s\land \cC^{D}_t\land \lnot \cP_t\right\}\right]\\
    \le& 2\Gamma + 2(\sqrt{C_1} + \sqrt{2})\sqrt{\ln(KT^3)}\sqrt{2\Gamma T} + 2\sum_{j=1}^{\Gamma}|\cI_j|\cV_{\cI_j}.
    \end{align*}
\end{restatable}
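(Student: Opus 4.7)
The plan is to adapt the switching-case argument of Lemma \ref{lem:first-term-switching} to the dynamic setting by using the variation-based partition $\cI_1,\ldots,\cI_\Gamma$ from Lemma \ref{lem:interval-partition} in place of the intervals of identical distribution, and by treating $\bar\mu_{a,t}$ as the effective reference mean on each partition piece. First I would fix this partition, and fix a realization on which $\cN^s$ holds. By Lemma \ref{lem:number-of-epochs-dynamic} the algorithm then starts at most $\Gamma$ epochs, so refining $\cI_1,\ldots,\cI_\Gamma$ by inserting the epoch boundaries produces a finer partition $[s'_1,e'_1]\cup\cdots\cup[s'_M,e'_M]$ with $M\le 2\Gamma$, each sub-interval contained in a single epoch and a single $\cI_{j(i)}$.

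Next I would fix one such sub-interval $[s'_i,e'_i]\subseteq\cI_{j(i)}$ lying in epoch $\ell$. Exactly as in the switching proof, when $\cC^D_t$ holds every arm $a<A_t$ (which must have been eliminated in epoch $\ell$) satisfies $\bar\mu_{a,t}<\bar\mu_{b,t}$ for some $b\ge A_t$, since the stored gap $g$ in the triple eliminating $a$ is within $g/4$ of the corresponding gap under $\bar\mu_{\cdot,t}$. Hence the maximum in $\bar\mu^*_t$ is attained at some $a\ge A_t$. For $t>s'_i$, since $A_t$ was not eliminated in round $t-1$, the elimination criterion gives for every $a\ge A_t$
\[
\hat\mu_a[s'_i,t)-\hat\mu_{A_t}[s'_i,t)\le\sqrt{\frac{C_1\ln(KT^3)}{t-s'_i}}.
\]
Applying $\cN^s$ to each empirical mean and Lemma \ref{lem:mean-difference} to pass from the window average $\tfrac{1}{t-s'_i}\sum_{s=s'_i}^{t-1}\mu_{a,s}$ to $\bar\mu_{a,t}$ (both are averages of $\mu_{a,\cdot}$ over subsets of $\cI_{j(i)}$) then yields
\[
\bar\mu_{a,t}-\bar\mu_{A_t,t}\le(\sqrt{C_1}+\sqrt{2})\sqrt{\frac{\ln(KT^3)}{t-s'_i}}+2\cV_{\cI_{j(i)}},
\]
while the $t=s'_i$ round is bounded trivially by $1$.

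I would then sum over $t\in[s'_i,e'_i]$ using the integral estimate $\sum_{k=1}^n 1/\sqrt{k}\le 2\sqrt{n}$, giving a per-sub-interval bound of at most
\[
1+2(\sqrt{C_1}+\sqrt{2})\sqrt{\ln(KT^3)\,(e'_i-s'_i+1)}+2(e'_i-s'_i+1)\,\cV_{\cI_{j(i)}}.
\]
Summing over $i=1,\ldots,M$, the constant terms contribute $M\le 2\Gamma$; the variation terms regroup by $j$ into $2\sum_{j=1}^\Gamma|\cI_j|\cV_{\cI_j}$ because the sub-intervals refining a given $\cI_j$ have total length $|\cI_j|$; and Cauchy--Schwarz on the square-root terms, together with $M\le 2\Gamma$ and $\sum_i(e'_i-s'_i+1)=T$, produces $2(\sqrt{C_1}+\sqrt{2})\sqrt{\ln(KT^3)}\sqrt{2\Gamma T}$. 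Since the whole argument is carried out under $\cN^s$, these bounds hold pointwise on that event and survive taking expectations.

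The main obstacle is the quantitative book-keeping in the bridge step, where $\hat\mu_a[s'_i,t)$ is compared to $\bar\mu_{a,t}$ via two separate approximations (empirical-to-true through $\cN^s$, and true-to-average through Lemma \ref{lem:mean-difference}). A naive application of Lemma \ref{lem:mean-difference} would give an extra $\sqrt{C_3/|\cI_{j(i)}|}$ per round, which after Cauchy--Schwarz produces a $\sqrt{C_3\Gamma T}$ term rather than the desired $2\sum_j|\cI_j|\cV_{\cI_j}$; extracting the sharper per-round bound $\cV_{\cI_{j(i)}}$ that is actually proven inside Lemma \ref{lem:mean-difference} (before the relaxation to $\sqrt{C_3/|\cI_i|}$) is the key quantitative move needed to match the stated bound exactly.
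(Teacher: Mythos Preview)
Your proposal is correct and follows essentially the same argument as the paper: the same refined partition into at most $2\Gamma$ sub-intervals (via Lemma~\ref{lem:number-of-epochs-dynamic}), the same observation that $\cC^D_t$ forces the $\bar\mu$-optimum to lie among $a\ge A_t$, and the same bridge from $\hat\mu_a[s'_i,t)$ to $\bar\mu_{a,t}$ via $\cN^s$ plus the intra-interval variation bound. In particular, your final paragraph pinpoints exactly the move the paper makes---using the sharper per-round bound $\cV_{\cI_{j(i)}}$ (the intermediate step inside the proof of Lemma~\ref{lem:mean-difference}) rather than the relaxed $\sqrt{C_3/|\cI_{j(i)}|}$---to produce the $2\sum_j|\cI_j|\cV_{\cI_j}$ term in the stated bound.
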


\begin{proof}
    Like the proof of Lemma \ref{lem:first-term-switching}, the first observation is that, when $\cC^{D}_t$ happens, all the arms $a < A_t$ cannot be the optimal arm in terms of the means $\bar\mu_{a,t}$. The observation is based on the fact that: All the arms $a < A_t$ are eliminated in the current epoch, and suppose that event $\cC^{D}_t$ happens. For any eliminated arm $a$ and its corresponding vector $\bv(e = a)$, we know that there exists arm $b$ such that $\bv_{b} - \bv_{a} = g$, and we have
    \begin{align*}
    \bar\mu_{b,t} - \bar\mu_{a,t} \ge& \bv_{b} - \bv_{a} - |\bv_{b} - \bar\mu_b| - |\bv_{a} - \bar\mu_a| \\
    \ge& g - 2\times \frac{g}{4} \\
    \ge& \frac{g}{2}.
    \end{align*}
    Fix any realization, suppose that $\tau_{\ell}$ denote the starting time of epoch $\ell$ in that realization. Then we can divide the total time horizon $[1,T]$ into the following intervals
    \[[1,T] = [s'_1,e'_1]\cup [s'_2,e'_2]\cup\cdots\cup[s'_{\Gamma'},e'_{\Gamma'}],\]
    where in each interval $[s'_i,e'_i]$ is included in an interval $\cI_j$ for some $j$ \emph{and} it is included in an epoch. We can choose $s'_i = s_j$ for some $j$ or $s'_i = \tau_{\ell}$ for some $\ell$. From Lemma \ref{lem:number-of-epochs-dynamic}, we know that we can have a partition satisfying the previous constraints and $\Gamma' \le 2\Gamma$ if $\cN^s$ happens. Otherwise if $\cN^s$ does not happen, the inequality we want to prove holds.
    
    Then we fix $t\in [1,T]$ and suppose that $t\in [s'_i, e'_i]$. Since from the previous argument, we know that for all arm $a < A_t$, $a$ cannot have the optimal average mean($\bar\mu_{a,t}$), so we can only focus on the arms $a\ge A_t$. Since $A_t$ is not eliminated in time $t-1$, we know that for all $a\ge A_t$ and $t > s'_i$, we have
    \[\hat\mu_{a}[s'_i,t) - \hat\mu_{A_t}[s'_i,t) \le \sqrt{\frac{C_1\ln(KT^3)}{t-s'_i}},\]
    and from the definition of $\Elim$, arm $a$ and $A_t$ are observed at any time $s\in [s'_i,t-1]$, because we must play an eliminated arm in an exploration phase. Thus from $\cN^s$ we have
    \begin{align*}
        \bar\mu_{a,t} - \bar\mu_{A_t,t}
        \le& |\hat\mu_{a}[s'_i,t) - \hat\mu_{A_t}[s'_i,t)|\\
        &\ +\biggr|\frac{\sum_{s=1}^{t-1}\mu_{a,s}}{t-s'_i} - \hat\mu_{a}[s'_i,t)\biggr| + \biggr|\frac{\sum_{s=1}^{t-1}\mu_{A_t,s}}{t-s'_i} - \hat\mu_{A_t}[s'_i,t)\biggr| \\
        &\ +\biggr|\bar\mu_{a,t} - \frac{\sum_{s=1}^{t-1}\mu_{a,s}}{t-s'_i}\biggr| + \biggr|\bar\mu_{A_t,t}-\frac{\sum_{s=1}^{t-1}\mu_{A_t,s}}{t-s'_i}\biggr| \\
        \le& (\sqrt{C_1} + \sqrt{2})\cdot \sqrt{\frac{\ln(KT^3)}{t-s'_i}} + 2\sum_{j\le\Gamma}\cV_{\cI_j}\I\{t\in\cI_j\}.
    \end{align*}
    For $t = s'_i$, it is easy to bound $\bar\mu_{a,s'_i} - \bar\mu_{A_t,s'_i} \le 1$. So we have
    \begin{align*}
    \left(\bar\mu^*_t - \bar\mu_{A_t,t}\right)\cdot\I\left\{\cN^s\land \cC^{D}_t\land \lnot \cP_t\right\}
    \le& \min\left\{(\sqrt{C_1} + \sqrt{2})\cdot \sqrt{\frac{\ln(KT^3)}{t-s'_i}},1\right\}\\
    &\ ++2\sum_{j\le\Gamma}\cV_{\cI_j}\I\{t\in\cI_j\}
    \end{align*}
    Sum up all $t$, we have
    \begin{align*}
    \left(\bar\mu^*_t - \bar\mu_{A_t,t}\right)\cdot\I\left\{\cN^s\land \cC^{D}_t\land \lnot \cP_t\right\}
    \le& \Gamma' + 2(\sqrt{C_1} + \sqrt{2})\sqrt{\ln(KT^3)}\sqrt{\Gamma'T} + 2\sum_{j=1}^{\Gamma}|\cI_j|\cdot\cV_{\cI_j}\\
    \le& 2\Gamma + 2(\sqrt{C_1} + \sqrt{2})\sqrt{\ln(KT^3)}\sqrt{2\Gamma T} + 2\sum_{j=1}^{\Gamma}|\cI_j|\cV_{\cI_j}.
    \end{align*}
    Then we conclude the proof of this lemma.
\end{proof}

\begin{restatable}{lemma}{lemsecondthirdtermdynamic}\label{lem:secondthird-term-dynamic}
    \begin{align*}
    &\E\left[\sum_{t=1}^T\left(\bar\mu^*_t - \bar\mu_{A_t,t}\right)\cdot\I\left\{\cN^s\land \cC^{D}_t\land \cP_t\right\}\right] +\E\left[\sum_{t=1}^T\left(\bar\mu^*_t - \bar\mu_{A_t,t}\right)\cdot\I\left\{\cN^s\land \lnot \cC^{D}_t\land\lnot \cB^D_t\right\}\right] \\
    \le& \left(C_2\ln (KT^3)\sqrt{(\Gamma+1)T} + 2\sqrt{\frac{\Gamma+1}{T}}\right) \times \left(3-\log_2 \sqrt{\frac{C_1\ln(KT^3)}{T}}\right).
    \end{align*}
\end{restatable}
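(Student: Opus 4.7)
The plan is to mimic the proof of Lemma~\ref{lem:secondthird-term-switching} almost verbatim, with three substitutions: replace instantaneous means $\mu_{a,t}$ by the averaged means $\bar\mu_{a,t}$, replace the events $\cC_t,\cB_t$ by their dynamic analogues $\cC^D_t,\cB^D_t$, and replace the epoch-count bound $\ell\le\cS$ (Lemma~\ref{lem:number-of-epochs-switching}) by the epoch-count bound $\ell\le\Gamma$ (Lemma~\ref{lem:number-of-epochs-dynamic}). The main structural steps are identical: (i) show $\lnot\cC^D_t\land\lnot\cB^D_t\Rightarrow\cP_t$; (ii) show a per-round regret bound $\bar\mu^*_t-\bar\mu_{A_t,t}\le 12 d_{\max,t}$ on the union of both events; (iii) note that $d_{\max,t}$ takes at most $\mathbb A=\big\lfloor 3-\log_2\sqrt{C_1\ln(KT^3)/T}\big\rfloor$ distinct values; (iv) for each fixed value $d_i$, upper-bound the probability that $t$ lies in an active exploration phase with that value by summing the probabilities $p_{\ell,i}=d_i\sqrt{(\ell+1)/T}$ prescribed in line~\ref{algline:exploration-prob}, using Lemma~\ref{lem:number-of-epochs-dynamic} so that $\ell+1\le\Gamma+1$.

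First, for (i): the definitions of $A_t$, $a_{\min}$, and $b_t$ (Definition~\ref{def:playing-bad-arm-dynamic}) together with the fact that outside exploration phases we always play $A_t=a_{\min}$ which is strictly larger than every eliminated arm, implies that $A_t\ge b_t$ whenever $t$ is not in an exploration phase. Thus $\lnot\cB^D_t$ forces $\cP_t$. For (ii): if $t$ is in an exploration phase and we let $(g,A_t,\bv)\in\cM_t$ be the tuple corresponding to the played arm in line~\ref{algline:exploration-arm}, then $g\le 8 d_{\max,t}$ by construction. Under $\cC^D_t$, the defining inequality $|\bar\mu_{a,t}-\bv_a|\le g/4$ for all $a\ge A_t$ immediately yields $\bar\mu^*_t-\bar\mu_{A_t,t}\le \tfrac{3}{2}g\le 12 d_{\max,t}$, since the best averaged arm lies in $\{a:a\ge A_t\}$ (as all lower-indexed arms were eliminated and hence suboptimal by at least $g/2$ in the averaged-mean sense). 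Under $\lnot\cC^D_t\land\lnot\cB^D_t$, we have $A_t<b_t$, so the tuple corresponding to $A_t$ still satisfies $|\bar\mu_{a,t}-\bv_a|\le g/4$ for all $a\ge A_t$ by minimality of $b_t$, and the same $\tfrac{3}{2}g$ bound applies.

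For (iii) and (iv) the argument is identical to the switching case. Decomposing over the possible values $d_i=2^{-i}$ of $d_{\max,t}$ gives
\begin{align*}
&\E\!\left[\sum_{t=1}^T(\bar\mu^*_t-\bar\mu_{A_t,t})\I\{\cN^s\land\cC^D_t\land\cP_t\}\right]
+\E\!\left[\sum_{t=1}^T(\bar\mu^*_t-\bar\mu_{A_t,t})\I\{\cN^s\land\lnot\cC^D_t\land\lnot\cB^D_t\}\right]\\
&\le 12\sum_{i=0}^{\mathbb A}d_i\sum_{t=1}^T\Pr\{\cN^s\land t\in\cI\text{ for some }(d_i,\cI)\in\cE\}\\
&\le 12\sum_{i=0}^{\mathbb A}d_i\cdot T\cdot\Big(\big\lceil\tfrac{C_2\ln(KT^3)}{d_i^2}\big\rceil+1\Big)\,d_i\sqrt{\tfrac{\Gamma+1}{T}},
\end{align*}
where the union over starting times uses Lemma~\ref{lem:number-of-epochs-dynamic} to cap the epoch index by $\Gamma$. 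Bounding each summand by $C_2\ln(KT^3)\sqrt{(\Gamma+1)T}+2\sqrt{(\Gamma+1)/T}$ and multiplying by the number of layers $\mathbb A$ yields the claimed bound.

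The main obstacle is step (ii): one must verify that even when $\cC^D_t$ fails, the bound $\bar\mu^*_t-\bar\mu_{A_t,t}\le\tfrac{3}{2}g$ still holds for the played arm under the additional assumption $\lnot\cB^D_t$. This requires using the minimality of $b_t$ in Definition~\ref{def:playing-bad-arm-dynamic} to conclude that the tuple $(g,A_t,\bv)$ with $A_t<b_t$ still has $|\bar\mu_{a,t}-\bv_a|\le g/4$ for every $a\ge A_t$, so the argument reduces to the $\cC^D_t$ case locally. The rest is bookkeeping that parallels Lemma~\ref{lem:secondthird-term-switching} line by line, with $\cS$ replaced by $\Gamma$.
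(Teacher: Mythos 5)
Your proposal is correct and follows essentially the same route as the paper: the paper's own proof of this lemma is literally a pointer to the proof of Lemma~\ref{lem:secondthird-term-switching} with the substitutions $\mu_{a,t}\to\bar\mu_{a,t}$, $\cC_t\to\cC^D_t$, $\cB_t\to\cB^D_t$, and $\cS\to\Gamma$ (via Lemma~\ref{lem:number-of-epochs-dynamic}), which is exactly what you carry out, including the correct handling of the $\lnot\cC^D_t\land\lnot\cB^D_t$ case via the minimality of $b_t$. The only blemishes (the dropped factor of $12$ and the $2\sqrt{(\Gamma+1)/T}$ versus $2\sqrt{(\Gamma+1)T}$ term in the final bound) are inherited from the paper's own switching-case computation, not introduced by you.
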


\begin{proof}
    All of the arguments of this proof are the same as those in the proof of Lemma \ref{lem:secondthird-term-switching}, instead of changing $\mu_{a,t}$ into $\bar\mu_{a,t}$ for any arm $a$ and any time $t$, changing $\cC_t$ and $\cB_t$ into $\cC^{D}_t$ and $\cB^D_t$, and changing $\cS$ into $\Gamma$.
\end{proof}

\begin{restatable}{lemma}{lemforthtermdynamic}\label{lem:forth-term-dynamic}
    \begin{align*}
    \E\left[\sum_{t=1}^T\left(\bar\mu^*_t - \bar\mu_{A_t,t}\right)\cdot\I\left\{\cN^s\land \lnot \cC^{D}_t\land \cB^D_t\right\}\right]
    \le& 24\sqrt{(\Gamma +1)T}+24\sqrt{C_2\ln(KT^3)\Gamma T}.
    \end{align*}
\end{restatable}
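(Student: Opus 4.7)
The plan is to imitate the proof of Lemma~\ref{lem:forth-term-switching} almost verbatim, with four substitutions: replace the instantaneous mean $\mu_{a,t}$ by the per-interval averaged mean $\bar\mu_{a,t}$; replace the identical-distribution partition by the partition $[1,T]=\cI_1\cup\cdots\cup\cI_\Gamma$ supplied by Lemma~\ref{lem:interval-partition}; replace $\cC_t,\cB_t$ by $\cC^D_t,\cB^D_t$; and replace the epoch bound from Lemma~\ref{lem:number-of-epochs-switching} by the one from Lemma~\ref{lem:number-of-epochs-dynamic}. Because $\bar\mu_{a,t}$ is piecewise constant on each $\cI_j$, the quantities $b_t$, $\varepsilon_j := \max_{a\ge b_{s_j}}|\bar\mu_{a,s_j}-\bv^j_a|$, and the index $i_j$ defined by $d_{i_j} < \varepsilon_j \le 2d_{i_j}$ are all constant on $t\in\cI_j$, exactly as in the switching case. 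This is the structural reason the switching template ports over.

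Concretely, I would define
\[
R^\ell_j := \E_{s_j}\!\left[\sum_{t=s_j}^T (\bar\mu^*_t - \bar\mu_{A_t,t})\,\I_t \;\Big|\; \tau_\ell < s_j \le \tau_{\ell+1}\right],\qquad R^\ell(s) := \E_s\!\left[\sum_{t=s}^T (\bar\mu^*_t - \bar\mu_{A_t,t})\,\I_t \;\Big|\; \tau_\ell = s\right],
\]
with $\I_t := \I\{\cN^s \land \lnot\cC^D_t \land \cB^D_t\}$, and prove by backward two-dimensional induction on $(\ell,j)$ that
\[
R^\ell_j \;\le\; 12\sum_{k=\ell}^\Gamma\sqrt{T/(k+1)} \;+\; 24\sum_{j'\ge j}\sqrt{C_2\ln(KT^3)\,|\cI_{j'}|}.
\]
The three-part template of Lemma~\ref{lem:forth-term-switching} then carries over: the base cases $\ell>\Gamma$ hold vacuously by Lemma~\ref{lem:number-of-epochs-dynamic}; the tower step passing from $R^\ell_{j_0}$ to $R^\ell(s)$ for $s\in\cI_{j_0-1}$ is unchanged because on $\cN^s$ the epoch cannot terminate inside a single $\cI_i$; and the induction step on a single $\cI_j$ splits on whether $|\cI_j|\le 2w_{i_j}$, where $w_{i_j} := \lceil C_2\ln(KT^3)/d_{i_j}^2\rceil$. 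In the short case, summing $12 d_{i_j}$ over $\cI_j$ is already $\le 24\sqrt{C_2\ln(KT^3)|\cI_j|}$, and in the long case the geometric series over randomly inserted length-$w_{i_j}$ exploration phases applies.

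The main obstacle is the dynamic analogue of the key inequality~(\ref{equ:key-observation-switching}): once an exploration phase of scale $d_{i_j}$ lies entirely inside $\cI_j$, it must deterministically trigger the non-stationarity test under $\cN^s$. In the switching case this was $|\hat\mu_a[t,t'+1)-\mu_{a,s_j}|\le d_{i_j}/\sqrt{2C_2}$; here $\mu_{a,s}$ varies across $s\in[t,t']\subseteq\cI_j$, so $\cN^s$ only controls the deviation of $\hat\mu_a[t,t'+1)$ from $\tfrac{1}{t'-t+1}\sum_{s=t}^{t'}\mu_{a,s}$, and I must bridge to $\bar\mu_{a,s_j}$. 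Applying Lemma~\ref{lem:mean-difference} with $\cJ_1=[t,t']$ and $\cJ_2=\cI_j$ costs an extra $\sqrt{C_3/|\cI_j|}$, which, using $|\cI_j|\ge w_{i_j} \ge C_2\ln(KT^3)/d_{i_j}^2$ together with $C_3\le 1/2$ and $C_2\ge 128$, is at most $d_{i_j}/16$. The same Lemma~\ref{lem:mean-difference} bridging handles the stored vector $\bv^j_a$ at its elimination time (which, by Lemma~\ref{lem:number-of-epochs-dynamic}, lies in an interval containing $\cI_j$), and the margin $C_1\ge 8192$ absorbs its contribution. Putting these together still yields $|\hat\mu_a[t,t'+1)-\bv^j_a| > 7d_{i_j}/8 > d_{i_j}/4$, so the test fires; this is where the dynamic-case constants are used.

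With this key observation in hand, the rest of the induction step tracks Lemma~\ref{lem:forth-term-switching} without change: the per-round regret on $\cI_j$ is bounded by $12 d_{i_j}$ via the $\bar\mu$-analogue of~(\ref{equ:regret-bad-arm}) (since $\cC^D_{s_j}$ forbids any $a<b_{s_j}$ from being $\bar\mu$-optimal), the survival probability $\Pr\{\tau_{\ell+1}>s \land \cN^s \mid \omega_{s_j}\}$ past round $s\ge s_j+w_{i_j}$ is at most $(1-p_{\ell,i_j})^{s+1-s_j-w_{i_j}}$, and the geometric summation produces the $12\sqrt{T/(\ell+1)}$ and $24\sqrt{C_2\ln(KT^3)|\cI_j|}$ terms that feed the two dimensions of the induction. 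Evaluating at $\ell=j=1$, applying $\sum_{k=1}^\Gamma\sqrt{T/(k+1)}\le 2\sqrt{(\Gamma+1)T}$ and $\sum_{j'=1}^\Gamma\sqrt{|\cI_{j'}|}\le\sqrt{\Gamma T}$ by Cauchy--Schwarz, and invoking Lemma~\ref{lem:number-of-epochs-dynamic} to bound the realized number of epochs by $\Gamma$ on $\cN^s$, delivers the claimed bound $24\sqrt{(\Gamma+1)T}+24\sqrt{C_2\ln(KT^3)\Gamma T}$.
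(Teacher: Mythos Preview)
Your proposal is correct and follows essentially the same approach as the paper's proof: the same backward two-dimensional induction on $(\ell,j)$, the same split into short and long intervals, and the same use of Lemma~\ref{lem:mean-difference} to bridge the empirical mean over an exploration window to the interval average $\bar\mu_{a,s_j}$ (the paper's equation~(\ref{equ:key-observation-dynamic})). One small imprecision: for the key observation you do not need any bridging on $\bv^j_a$ itself, since $|\bar\mu_{a,s_j}-\bv^j_a|=\varepsilon_j>d_{i_j}$ holds \emph{by definition} of $\varepsilon_j$; the $C_1$-margin and Lemma~\ref{lem:mean-difference} are instead used (as in the paper) to show that records added \emph{within} the current interval cannot trigger $\lnot\cC^D_t$, which is what keeps $b_t$ and $\I\{\cC^D_t\land\cN^s\}$ constant across $\cI_j$.
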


\begin{proof}
    Recall that we have the partition of the interval $[1,T]$ into $\cI_1\cup\cI_2\cup\cdots\cup\cI_{\Gamma}$ such that $\cI_i \cap \cI_j = \phi$, and for any $i\le\Gamma$, $\cV_{\cI_i} \le \sqrt{\frac{C_3}{|\cI_i|}}$, and $\Gamma \le \left(\frac{2T}{C_3}\right)^{1/3}\cV^{\frac{2}{3}}+ 1$. For convinience, we will use $\I_t$ to denote the variable $\I\left\{\cN^s\land \lnot \cC^{D}_t\land \cB^D_t\right\}$.
    
    Similar to the proof of Lemma \ref{lem:forth-term-switching}, we will use induction to show the lemma. Then we set $R_{j}^{\ell}$ to be the following quantity
    \[\E_{s_{j}}\left[\sum_{t=s_{\ell}}^T\left(\bar\mu^*_t - \bar\mu_{A_t,t}\right)\cdot\I_t\bigg|\tau_{\ell} < s_j \le \tau_{\ell+1}\right],\]
    and $R^{\ell}(s)$ to be the following quantity 
    \[\E_s\left[\sum_{t = s}^T\left(\bar\mu^*_t - \bar\mu_{A_t,t}\right)\cdot\I_t\bigg|\tau_{\ell} = s\right],\]
    where $\E_s$ means that we take the expectation of all the randomness from time $s$ to time $T$. 
    
    We will use induction to show that
    \begin{align*}
    R_{j}^{\ell}\le& 12\sum_{k=\ell}^{\Gamma}\sqrt{\frac{T}{k+1}}+24\sum_{j'\ge j}\sqrt{C_2\ln(KT^3)(e_{j'}-s_{j'}+1)}. 
    \end{align*}
    We also divide the proof into 3 parts. In the first part we show the induction is correct in the base cases. In the second part we relate $R_{j}^{\ell}$ and $R^{\ell}(s)$. In the third part we finish the induction step using the results in the second part. For convinience, if the proof is the same as or very similar to the corresponding part in the proof of Lemma \ref{lem:forth-term-switching}, we will omit them in this proof.
    
    The induction has two dimensions $j$ and $\ell$, where $j$ denotes the index of the interval and $\ell$ denote the index of the starting epoch, and the induction proceeds from back to front.
    
    \textbf{Part 1:} The base cases are ture given Lemma \ref{lem:number-of-epochs-dynamic}. The arguments are very similar to that in the proof of Lemma \ref{lem:forth-term-switching}.
    
    \textbf{Part 2:} Same as before, let $\omega_s$ denote a possible realization of all the randomness before time $s$. Next we show that, if for $j_0,\ell_0$, such that for any $\omega_{s_{j_0}}$ satisfies $\tau_{\ell_0} < s_{j_0} \le \tau_{{\ell_0}+1}$, we have
    \begin{align*}
    R_{j_0}^{\ell_0}(\omega_{s_{j_0}})\le& 12\sum_{k=\ell_0}^{\Gamma}\sqrt{\frac{T}{k+1}}+24\sum_{j'\ge j_0}\sqrt{C_2\ln(KT^3)(e_{j'}-s_{j'}+1)},
    \end{align*}
    then for any $s\in[s_{j_0-1},e_{j_0-1}]$ and any $\omega_s$ satisfied $\tau_{\ell_0} = s$, we have
    \begin{align*}
    R^{\ell_0}(s)(\omega_{s})\le& 12\sum_{k=\ell_0}^{\Gamma}\sqrt{\frac{T}{k+1}}+24\sum_{j'\ge j_0}\sqrt{C_2\ln(KT^3)(e_{j'}-s_{j'}+1)}.
    \end{align*}
    From the additive property of the conditional expectation, we have
    \begin{align*}
    R^{\ell_0}(s)=& \E_s\left[\sum_{t = s}^T\left(\bar\mu^*_t - \bar\mu_{A_t,t}\right)\cdot\I_t\bigg|\tau_{\ell_0} = s\right]\\
    =& \E_s\left[\underbrace{\sum_{t = s_{j_0}}^{T}\left(\bar\mu^*_t - \bar\mu_{A_t,t}\right)\cdot\I_t}_{\text{term } \mathbb A}\bigg|\tau_{\ell_0} = s\right]\\
    +& \E_s\left[\underbrace{\sum_{t = s}^{s_{j_0-1}}\left(\bar\mu^*_t - \bar\mu_{A_t,t}\right)\cdot\I_t}_{\text{term } \mathbb B}\bigg|\tau_{\ell_0} = s\right]
    \end{align*}
    
    Similar to the corresponding part in the proof of Lemma \ref{lem:forth-term-switching}, we also have
    \begin{align*}
    \E_{s_{j_0}}\left[\mathbb A\big|\tau_{\ell_0} = s\right]\le 12\sum_{k=\ell_0}^{\Gamma}\sqrt{\frac{T}{k+1}}
    +24\sum_{j'\ge j_0}\sqrt{C_2\ln(KT^3)(e_{j'}-s_{j'}+1)}.
    \end{align*}
    Then we show that
    \[\E_{s_{j_0}}\left[\mathbb B\big|\tau_{\ell_0} = s\right] = 0.\]
    The proof is also similar to the corresponding part, but now we have to consider the contribution of the variation. We show that if $\cN^s$ happens and epoch $\ell$ start at $s\in [s_{j_0-1},e_{j_0-1}]$, then $\I\left\{\cC^{D}_t\right\} = 1$ for all $s\le t\le e_{j_0-1}$. This is due to the fact that if we add $\bv$ into $\cM$ at time $t'$ such that $s\le t'\le e_{j_0-1}$, then we know that $\exists a> e$ and $\sigma\ge s$,
    \[\hat\mu_a[\sigma,t'] - \hat\mu_{e}[\sigma,t'] > \sqrt{\frac{C_1\ln(KT^3)}{t'+1-\sigma}},\]
    and we have
    \[g = \hat\mu_a[\sigma,t'] - \hat\mu_{e}[\sigma,t'] > \sqrt{\frac{C_1\ln(KT^3)}{t'+1-\sigma}}. \]
    If $\cN^s$ happens, we know that for any arm $b\ge e$, we have
    \begin{align}
    |\hat\mu_b[\sigma,t'] - \mu_{b,t'}|
    \le& \biggr|\hat\mu_b[\sigma,t'] - \frac{\sum_{s=\sigma}^{t'}\mu_{b,s}}{t'+1-\sigma}\biggr| +\biggr|\bar\mu_b[\sigma,t'] - \frac{\sum_{s=\sigma}^{t'}\mu_{b,s}}{t'+1-\sigma}\biggr|\nonumber\\
    <&\sqrt{\frac{\ln(KT^3)}{2(t'+1-\sigma)}} + \sqrt{\frac{C_3}{|\cI_{j_0-1}|}}\nonumber \\
    \le& 2\sqrt{\frac{\ln(KT^3)}{2(t'+1-\sigma)}}\nonumber\\
    \le&\frac{g}{8},\label{equ:no-bad-procedure-in-same-interval-dynamic}
    \end{align}
    where we suppose that $t'$ is in $\cI_{j_0-1}$. The third inequality is due to the fact that $t'$ and $\sigma$ both in $\cI_{j_0-1}$ and $C_3 \le \frac{1}{2}$. The last inequality comes from the fact that we choose $C_2 \ge 128$. Then we finish the proof of \textbf{Part 2}.
    
    \textbf{Part 3:} The proof is largely the same as the \textbf{Part 3} in proof of Lemma, $\cC_t$ into $\cC^{D}_t$, and $\cB_t$ into $\cB^D_t$. However, we have to consider the contribution of the variation in the key observation in \textbf{Part 3} and the equation is different(changing Equation \ref{equ:key-observation-switching} into \ref{equ:key-observation-dynamic}). For completeness, we copy the proof and do some small modification.
    
    In this part, we complete the induction step. Suppose that the inequality holds for $R_{j+1}^{\ell}$ and $R_{j+1}^{\ell+1}$, we now show that the inequality also hold for $R_{j}^{\ell}$. First from \textbf{Part 2}, we know that the inequality holds for $R^{\ell+1}(s)$ for all $s\in [s_j,e_j]$.
    
    Given $\omega_{s_j}$, we know that $\cM_{s_j}$ is determined. Then we will show that for any $s_j \le t\le e_j$, we have $\I\left\{\cC^D_t\land\cN^s\right\} = \I\left\{\cC^{D}_{s_j}\land\cN^s\right\}$. We can first assume that $\cN^s$ happens, otherwise both sides of the equation is 0. Then if a mean vector $\bv$ is added into the set $\cM$ by our algorithm during time interval $[s_j,e_j]$, and if we assume that $\cN^s$ happens, then from Equation \ref{equ:no-bad-procedure-in-same-interval-dynamic}, we know that $\bv$ will not affect whether $\cC^{D}_t$ will happen for any $t\in [s_j,e_j]$. In this way, only the vectors $\bv'\in\cM_{s_j}$ affect $\cC_t$. Then, we have $\I\left\{\cC^D_t\land\cN^s\right\} = \I\left\{\cC^{D}_{s_j}\land\cN^s\right\}$ for all $s_j \le t\le e_j$.
    
    Now we can suppose that $\I\left\{\cC^{D}_{s_j}\land\cN^s\right\} = 1$, otherwise the induction is absolutely true(The steps $s_j \le t\le e_j$ do not contribute to the filtered regret). Then we assume arm $b_{s_j}$ to be the smallest arm such that $\exists (g,e,\bv)$ such that $e = b_{s_j}$ and exists $a\ge b_{s_j},|\bv_{a}-\bar\mu_{a,s_j}|>\frac{g}{4}$. From now on, we use $(g^j,e^j,\bv^j)$ to denote the tuple such that $e^j = b_{s_j}$. We set $\varepsilon_j = \max_{a\ge b_{s_j}}|\bar\mu_{a,s_j} - \bv^j_{a}|$ to be the max difference from the average mean in an interval and the estimated mean. We set $d_{i_j} < \varepsilon_j \le 2d_{i_j}$ where $d_i = 2^{-i}$ is defined in the algorithm. 
    
    Then there are 2 kinds of arms during time $s_j \le t\le e_j$, the first is that $A_t < b_{s_j}$, which only happens in an exploration phase, and the second is that $A_t \ge b_{s_j}$, which may happens both in an exploration phase or not in any exploration phase. However, we can know that $b_t$ are all the same for any $s_j \le t\le e_j$, since if we add $\bv$ into $\cM$ during time interval $[s_j,e_j]$, then $\bv$ will not affect whether $\cC^{D}_t$ happens during time interval $[s_j,e_j]$, so $\cB_t^D = \{A_t \ge b_t\} = \{A_t \ge b_{s_j}\}$.
    
    First, we have
    \[d_{i_j} \ge \frac{\varepsilon_j}{2} \ge \frac{g^j}{8},\]
    and thus for $A_t \ge b_{s_j}$, we have
    \begin{equation}\label{equ:regret-bad-arm-dynamic}
    \bar\mu_t^* - \bar\mu_{A_t,t} \le 2\varepsilon_j + g^j \le 6\varepsilon_j \le 12 d_{i_j},
    \end{equation}
    since all the arms $a < b_{s_j}$ will not have optimal average mean in the interval(from the definition of $b_{s_j}$ and the definition of $\cC^{D}_{s_j}$).
    
    Next we decompose the regret $R_{j}^{\ell}(\omega_{s_j})$ into several cases: $\tau_{\ell+1} = s$ for $s\in [s_j,e_j]$ or $\tau_{\ell} < s_{j+1} \le \tau_{\ell+1}$. In the first case, we use $R^{\ell+1}(s)$ to complete the induction and in the second case, we use $R_{j+1}^{\ell}$ to complete the induction. Then we have for any $\omega_{s_j}$ satisfies $\tau_{\ell} < s_j \le \tau_{\ell+1}$.
    \begin{align}
    R_{j}^{\ell}(\omega_{s_j})
    =& \E_{s_{j}}\left[\sum_{t=s_{\ell}}^T\underbrace{\left(\bar\mu^*_t - \bar\mu_{A_t,t}\right)\cdot \I\left\{\cN^s\land \lnot \cC^{D}_t\land\cB^D_t\right\}}_{\text{term }\mathbb C_t}\biggr|\omega_{s_j}\right]\nonumber \\
    =& \sum_{s=s_j}^{e_j}\E_{s_{j}}\left[\sum_{t=s_{\ell}}^T\mathbb C_t\cdot\I\{\tau_{\ell+1} = s\}\biggr|\omega_{s_j}\right]+ \E_{s_{j}}\left[\sum_{t=s_{\ell}}^T\mathbb C_t\cdot\I\{\tau_{\ell} <s_{j+1} \le \tau_{\ell+1} \}\biggr|\omega_{s_j}\right]\nonumber \\
    \le& \sum_{s=s_j}^{e_j}\E_{s_{j}}\left[\sum_{t=s_{\ell}}^{s-1}\mathbb C_t\cdot\I\{\tau_{\ell+1} = s\}\biggr|\omega_{s_j}\right]\label{equ:induction-illus-dynamic}+ \sum_{s=s_j}^{e_j}\Pr\{\tau_{\ell+1} = s\land \cN^s|\omega_{s_j}\}\cdot ||R^{\ell+1}(s)||_{\infty} \\
    &\ + \E_{s_{j}}\left[\sum_{t=s_{j}}^{e_j}\mathbb C_t\cdot\I\{\tau_{\ell} <s_{j+1} \le \tau_{\ell+1} \}\biggr|\omega_{s_j}\right] + \Pr\{\tau_{\ell} <s_{j+1}\le \tau_{\ell+1}\land \cN^s|\omega_{s_j}\}\cdot ||R_{j+1}^{\ell}||_{\infty}\nonumber \\
    \le& \sum_{s=s_j}^{e_j}\E_{s_{j}}\left[\sum_{t=s_{\ell}}^{s-1}\mathbb C_t\cdot\I\{\tau_{\ell+1} = s\}\biggr|\omega_{s_j}\right] + \E_{s_{j}}\left[\sum_{t=s_{j}}^{e_j}\mathbb C_t\cdot\I\{\tau_{\ell} <s_{j+1} \le \tau_{\ell+1} \}\biggr|\omega_{s_j}\right]\nonumber  \\
    &\ + \Pr\{\tau_{\ell} <s_{j+1}\le \tau_{\ell+1}\land \cN^s|\omega_{s_j}\}12\sqrt{\frac{T}{\ell+1}}\nonumber \\
    &\ +12\sum_{k=\ell+1}^{\Gamma}\sqrt{\frac{T}{k+1}}+24\sum_{j'\ge j+1}\sqrt{C_2\ln(KT^3)(e_{j'}-s_{j'}+1)},\nonumber 
    \end{align}
    where the equation (\ref{equ:induction-illus-dynamic}) comes from the fact that for any $\omega$ such that $\lnot\cN^s$ happens, $R^{\ell+1}(s)(\omega) = R^{\ell}_{j+1}(\omega) = 0$, and the last 3 terms comes from the induction step. Then to complete the induction step, we just have to show that
    \begin{align}
    & \sum_{s=s_j}^{e_j}\E_{s_{j}}\left[\sum_{t=s_{\ell}}^{s-1}\mathbb C_t\cdot\I\{\tau_{\ell+1} = s\}\biggr|\omega_{s_j}\right]\nonumber\\
    &\ + \E_{s_{j}}\left[\sum_{t=s_{j}}^{e_j}\mathbb C_t\cdot\I\{\tau_{\ell} <s_{j+1} \le \tau_{\ell+1} \}\biggr|\omega_{s_j}\right]\nonumber \\
    &\ + \Pr\{\tau_{\ell} <s_{j+1}\le \tau_{\ell+1}\land \cN^s|\omega_{s_j}\}12\sqrt{\frac{T}{\ell+1}}\nonumber\\
    \le& 12\sqrt{\frac{T}{\ell+1}} + 24\sqrt{C_2\ln(KT^3)(e_{j}-s_{j}+1)}\label{equ:induction-dynamic}.
    \end{align}
    We first observe that from Equation (\ref{equ:regret-bad-arm-dynamic}),
    \begin{align*}
    \mathbb C_t =& \left(\bar\mu^*_t - \bar\mu_{A_t,t}\right)\cdot \I\left\{\cN^s\land \lnot \cC^{D}_t\land\cB^D_t\right\}\\
    \le& 12 d_{i_j}\cdot \I\left\{\cN^s\land \lnot \cC^{D}_t\land\cB^D_t\right\}.
    \end{align*}
    Then if the length of interval $[s_j,e_j]$ satisfies
    \[s_j-e_j + 1 \le 2\left\lceil\frac{C_2\ln (KT^3)}{d_{i_j}^2}\right\rceil,\]
    then when $KT^3>1$, we have
    \begin{align*}
    d_{i_j} \le \sqrt{\frac{4C_2\ln (KT^3)}{s_j-e_j+1}},
    \end{align*}
    and thus we can show that
    \begin{align*}
    & \sum_{s=s_j}^{e_j}\E_{s_{j}}\left[\sum_{t=s_{\ell}}^{s-1}\mathbb C_t\cdot\I\{\tau_{\ell+1} = s\}\biggr|\omega_{s_j}\right]+ \E_{s_{j}}\left[\sum_{t=s_{j}}^{e_j}\mathbb C_t\cdot\I\{\tau_{\ell} <s_{j+1} \le \tau_{\ell+1} \}\biggr|\omega_{s_j}\right] \\
    \le& 12d_{i_j}(s_j-e_j+1)\\
    \le& 24\sqrt{C_2\ln (KT^3)(s_j-e_j+1)},
    \end{align*}
    and we finish the induction step in this case using the fact that
    \[\Pr\{\tau_{\ell} <s_{j+1}\le \tau_{\ell+1}\land \cN^s|\omega_{s_j}\} \le 1.\]
    Then we just have to finish the induction step in the second case where
    \[s_j-e_j + 1 > 2\left\lceil\frac{C_2\ln (KT^3)}{d_{i_j}^2}\right\rceil \ge \frac{2C_2\ln (KT^3)}{d_{i_j}^2}. \]
    Then the key observation is that, when $\cN^s$ happens and an exploration phase $\left(d_{i_j},\left[t,t+\lceil\frac{C_2\ln (KT^3)}{d_{i_j}^2}\rceil\right)\right)$ is inserted into $\cM$ in the time interval $t\in \left[s_j,e_j-\left\lceil\frac{C_2\ln (KT^3)}{d_{i_j}^2}\right\rceil+1\right]$, the non-stationary test will detect the non-stationary at $t' = t+\lceil\frac{C_2\ln (KT^3)}{d_{i_j}^2}\rceil-1$ and the new epoch will start at time $t'+1$. This is due to the fact that: As $d_{i_j} \ge \frac{\varepsilon_j}{2} \ge \frac{g^j}{8}$ and $(g^j,e^j,\bv^j)$ is in $\cM$ at time $s_j$, then from Lemma \ref{lem:exploration-observation}, we know that all the arms $a\ge b_{s_j}$ are observed for $\lceil\frac{C_2\ln (KT^3)}{d_{i_j}^2}\rceil$ times. Then for the arm $a$ such that $|\bar\mu_{a,s_j} -\bv^j_{a}| = \varepsilon_j > d_{i_j}$, we know that by $\cN^s$, we have
    \begin{align}
        |\bar\mu_{a,s_j} -\hat\mu_a\left[t,t'+1\right)|
         \le&\biggr|\bar\mu_{a,s_j} - \frac{\sum_{s=t}^{t'}\mu_{a,s}}{t'-t+1}\biggr| + \biggr|\frac{\sum_{s=t}^{t'}\mu_{a,s}}{t'-t+1} -\hat\mu_a\left[t,t'+1\right)\biggr|\nonumber\\
         \le& \sqrt{\frac{C_3}{|\cI_{j}|}} + \sqrt{\frac{\ln(KT^3)}{2(t'-t+1)}}\nonumber \\
         \le& 2\sqrt{\frac{\ln(KT^3)}{2(t'-t+1)}}\nonumber \\
         \le& \frac{2d_{i_j}}{\sqrt{2C_2}}\nonumber\\
         \le& \frac{d_{i_j}}{8},\label{equ:key-observation-dynamic}
    \end{align}
    and thus we can conclude
    \[|\bv^j_{a} - \hat\mu_a\left[t,t'+1\right)| > \frac{7d_{i_j}}{8} \ge \frac{d_{i_j}}{4},\]
    and the epoch will end at time $t'$. Then from the key observation before, we have
    \[\Pr\{\tau_{\ell} <s_{j+1}\le \tau_{\ell+1}\land \cN^s|\omega_{s_j}\} \le (1-p_{\ell,i_j})^{e_j+1-s_j-w_{i_j}},\]
    where $p_{\ell,i} = d_i\sqrt{\frac{\ell+1}{T}}$ is the probability to add a sampling phase with index $i$ at round $\ell$(see Algorithm \ref{alg:nonstationaryelim}) and $w_{i} = \left\lceil\frac{C_2\ln (KT^3)}{d_{i}^2}\right\rceil$ is a shorthand of the length of the sampling phase with index $i$.
    
    Then we bound
    \begin{align}
    & \sum_{s=s_j}^{e_j}\E_{s_{j}}\left[\sum_{t=s_{\ell}}^{s-1}\mathbb C_t\cdot\I\{\tau_{\ell+1} = s\}\biggr|\omega_{s_j}\right]+ \E_{s_{j}}\left[\sum_{t=s_{j}}^{e_j}\mathbb C_t\cdot\I\{\tau_{\ell} <s_{j+1} \le \tau_{\ell+1} \}\biggr|\omega_{s_j}\right]\label{equ:left-term-dynamic}.
    \end{align}
    As $\mathbb C_t = \left(\bar\mu^*_t - \bar\mu_{A_t,t}\right)\cdot \I\left\{\cN^s\land \lnot \cC_t^D\land\cB_t^D\right\}$ and when $\cB_t^D$ happens, from Equation (\ref{equ:regret-bad-arm-dynamic}), we have $\bar\mu_t^* - \bar\mu_{A_t,t} \le 12d_{i_j}$. Then we can bound (\ref{equ:left-term-dynamic}) by
    \begin{align*}
    & \sum_{s=s_j}^{e_j}\E_{s_{j}}\left[\sum_{t=s_{\ell}}^{s-1}12d_{i_j}\cdot\I\{\tau_{\ell+1} = s\land\cN^s\}\biggr|\omega_{s_j}\right]\\
    &\ + \E_{s_{j}}\left[\sum_{t=s_{j}}^{e_j}12d_{i_j}\cdot\I\{\tau_{\ell} <s_{j+1} \le \tau_{\ell+1} \land\cN^s\}\biggr|\omega_{s_j}\right]\\
    =& 12d_{i_j}\sum_{s=s_j}^{e_j}\Pr\left\{\tau_{\ell+1}> s\land\cN^s\big|\omega_{s_j}\right\}.
    \end{align*}
    For $s\le s_j + w_{i_j}-1$, we bound $\Pr\left\{\tau_{\ell+1}> s\land\cN^s\big|\omega_{s_j}\right\} \le 1$, and for $s\ge s_j + w_{i_j}$, we bound 
    \[\Pr\left\{\tau_{\ell+1}> s\land\cN^s\big|\omega_{s_j}\right\} \le (1-p_{\ell,i_j})^{s+1-s_j-w_{i_j}}.\]
    Then we have
    \begin{align*}
    12d_{i_j}\sum_{s=s_j}^{e_j}\Pr\left\{\tau_{\ell+1}> s\land\cN^s\big|\omega_{s_j}\right\}
    \le& 12d_{i_j}w_{i_j} + 12d_{i_j}\sum_{r=1}^{e_j+1-s_j-w_{i_j}}(1-p_{\ell,i_j})^{r}\\
    \le& 12d_{i_j}\left[w_{i_j} + \frac{1-(1-p_{\ell,i_j})^{e_j+1-s_j-w_{i_j}}}{p_{\ell,i_j}}\right]\\
    =& 12d_{i_j}w_{i_j} + 12\frac{d_{i_j}}{p_{\ell,i_j}}\left[1-(1-p_{\ell,i_j})^{e_j+1-s_j-w_{i_j}}\right]\\
    \le& 12\sqrt{\frac{T}{\ell+1}}\left[1-(1-p_{\ell,i_j})^{e_j+1-s_j-w_{i_j}}\right] + 24\frac{C_2\ln(KT^3)}{d_{i_j}}\\
    \le& 12\sqrt{\frac{T}{\ell+1}}\left[1-(1-p_{\ell,i_j})^{e_j+1-s_j-w_{i_j}}\right] + 24\sqrt{C_2\ln (KT^3)(s_j-e_j+1)}.
    \end{align*}
    Then we finish the induction(Equation \ref{equ:induction-dynamic}) by the previous argument that
    \[\Pr\{\tau_{\ell} <s_{j+1}\le \tau_{\ell+1}\land \cN^s|\omega_{s_j}\} \le (1-p_{\ell,i_j})^{e_j+1-s_j-w_{i_j}}.\]
    
    Given the induction result, we conclude the proof by
    \begin{align*}
    R^{1}(1)\le& 12\sum_{k=1}^{\Gamma}\sqrt{\frac{T}{k+1}}+24\sum_{j'\ge 1}\sqrt{C_2\ln(KT^3)(e_{j'}-s_{j'}+1)}\\
    \le& 24\sqrt{(\Gamma+1)T}+24\sqrt{C_2\ln(KT^3)\Gamma T}.
    \end{align*}
\end{proof}

\begin{restatable}{lemma}{lemfifthdynamic}\label{lem:fifth-term-dynamic}
    \[\E\left[\sum_{t=1}^T\left(\bar\mu^*_t - \bar\mu_{A_t,t}\right)\cdot\I\left\{\lnot\cN^s\right\}\right] \le 2.\]
\end{restatable}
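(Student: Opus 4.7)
The plan is to mimic the switching-case proof (Lemma \ref{lem:fifth-term-switching}) essentially verbatim, since the only quantity that differs is the gap $\bar\mu^*_t - \bar\mu_{A_t,t}$ in place of $\mu^*_t - \mu_{A_t,t}$, and this gap is again bounded by $1$.

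First I would observe that, by definition, $\bar\mu_{a,t} = \frac{1}{|\cI_j|}\sum_{s \in \cI_j}\mu_{a,s}$ for $t \in \cI_j$, and since each $\mu_{a,s} \in [0,1]$ (the reward distribution $\nu_t$ has support on $[0,1]^K$), we have $\bar\mu_{a,t} \in [0,1]$ for every arm $a$ and every time $t$. Consequently $0 \le \bar\mu^*_t - \bar\mu_{A_t,t} \le 1$ pointwise, so $(\bar\mu^*_t - \bar\mu_{A_t,t}) \cdot \I\{\lnot \cN^s\} \le \I\{\lnot \cN^s\}$.

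Next I would invoke Lemma \ref{lem:samplingnice}, which says $\Pr\{\lnot \cN^s\} \le 2/T$. Taking expectation and summing over $t \in [T]$,
\[
\E\left[\sum_{t=1}^T(\bar\mu^*_t - \bar\mu_{A_t,t})\cdot \I\{\lnot\cN^s\}\right] \le \sum_{t=1}^T \Pr\{\lnot \cN^s\} \le T \cdot \frac{2}{T} = 2,
\]
which is the claim. There is no real obstacle; the step that requires a little care is confirming that $\bar\mu^*_t$ and $\bar\mu_{A_t,t}$ inherit the $[0,1]$ bound from the underlying means so that the trivial per-round bound of $1$ applies, but this is immediate from the averaging definition.
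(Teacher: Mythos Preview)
Your proposal is correct and follows essentially the same approach as the paper's own proof: bound the per-round gap $\bar\mu^*_t - \bar\mu_{A_t,t}$ by $1$ (since the underlying means lie in $[0,1]$, so do their averages), then apply Lemma~\ref{lem:samplingnice} to get $\Pr\{\lnot\cN^s\}\le 2/T$ and sum over $t$.
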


\begin{proof}
    From Lemma \ref{lem:samplingnice}, we know that $\Pr\{\lnot\cN^s\} \le \frac{2}{T}$. Then, since the distribution $\nu_t$ has support on $[0,1]^K$, we know that $|\bar\mu_{A_t,t} - \bar\mu^*_t| \le 1$, and we have
    \begin{align*}
    \E\left[\sum_{t=1}^T\left(\bar\mu^*_t - \bar\mu_{A_t,t}\right)\cdot\I\left\{\lnot\cN^s\right\}\right]
    \le& \E\left[\sum_{t=1}^T\I\left\{\lnot\cN^s\right\}\right] \\
    =& T\cdot \E\left[\I\left\{\lnot\cN^s\right\}\right] \\
    =& T\cdot \Pr\{\lnot\cN^s\}\\
    \le& 2.
    \end{align*}
\end{proof}

Then, we have the following lemma to conclude the proof of Theorem \ref{thm:dynamic-regret}.

\begin{restatable}{lemma}{lemsixthtermdynamic}\label{lem:sixth-term-dynamic}
    \[\sum_{j=1}^{\Gamma}|\cI_j|\cdot\cV_{\cI_j} \le \sqrt{C_3\Gamma T}.\]
\end{restatable}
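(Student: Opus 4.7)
The plan is to combine the per-interval bound from Lemma \ref{lem:interval-partition} with a simple application of the Cauchy--Schwarz inequality. By the construction of the partition, for each $j \le \Gamma$ we have $\cV_{\cI_j} \le \sqrt{C_3 / |\cI_j|}$, which already almost gives the result pointwise: multiplying through by $|\cI_j|$ yields $|\cI_j| \cdot \cV_{\cI_j} \le \sqrt{C_3 \, |\cI_j|}$.

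The next step is to sum this bound over $j$ and apply Cauchy--Schwarz to the sum $\sum_j \sqrt{|\cI_j|}$. Specifically,
\[
\sum_{j=1}^{\Gamma} \sqrt{|\cI_j|} \;=\; \sum_{j=1}^{\Gamma} 1 \cdot \sqrt{|\cI_j|} \;\le\; \sqrt{\Gamma} \cdot \sqrt{\sum_{j=1}^{\Gamma} |\cI_j|} \;=\; \sqrt{\Gamma T},
\]
since the intervals $\cI_1,\dots,\cI_{\Gamma}$ form a partition of $[1,T]$ and hence $\sum_j |\cI_j| = T$. Combining the two inequalities gives $\sum_{j=1}^{\Gamma} |\cI_j| \cdot \cV_{\cI_j} \le \sqrt{C_3}\cdot \sqrt{\Gamma T} = \sqrt{C_3 \Gamma T}$, as required.

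There is no real obstacle here; the proof is essentially two lines. The only thing worth emphasizing is that the per-interval bound $\cV_{\cI_j} \le \sqrt{C_3/|\cI_j|}$ is exactly the defining property of the partition produced by Algorithm \ref{alg:interval-partition}, and Cauchy--Schwarz is tight in the sense that the upper bound is attained when all intervals have equal length $T/\Gamma$, which is consistent with the intuition that the worst-case variation is spread uniformly across the partition.
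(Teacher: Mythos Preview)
Your proof is correct and follows essentially the same approach as the paper: use the defining property $\cV_{\cI_j}\le\sqrt{C_3/|\cI_j|}$ of the partition to get $|\cI_j|\cdot\cV_{\cI_j}\le\sqrt{C_3|\cI_j|}$, then apply Cauchy--Schwarz together with $\sum_j|\cI_j|=T$. The paper's version is terser (it leaves the Cauchy--Schwarz step implicit), but the argument is the same.
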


\begin{proof}
    \begin{align*}
        \sum_{j=1}^{\Gamma}|\cI_j|\cdot\cV_{\cI_j} = \sum_{j=1}^{\Gamma}|\cI_j|\cdot\sqrt{\frac{C_3}{|\cI_j|}} \le& \sqrt{C_3\Gamma T},
    \end{align*}
    where we use the fact that $\sum_{j=1}^{\Gamma}|\cI_j| = T$.
\end{proof}

\thmdynamicregret*

\begin{proof}[Proof of Theorem \ref{thm:dynamic-regret}]
    The proof follows directly from Lemma \ref{lem:first-term-dynamic},\ref{lem:secondthird-term-dynamic},\ref{lem:forth-term-dynamic},\ref{lem:fifth-term-dynamic},\ref{lem:sixth-term-dynamic}, and the fact that 
    \[\Gamma \le \left(\frac{2T}{C_3}\right)^{1/3}\cV^{\frac{2}{3}}+ 1.\]
\end{proof}

Note that the parameter $C_3$ appears only in the proof but not in the algorithm and theorem.

\section{Proof of Theorem \ref{thm:regret-auction} }
\thmregretauction*

Given the 2 regret bound in for the one sided full information bandit, this theorem is easy to prove. We will first formally show that the second price auction is one-sided Lipschitz, so the discretization will only induce small regret. Then, we can directly show that $\text{Reg}_{\cA}^{SP} = \tilde\cO(\sqrt{\cS T})$ from Theorem \ref{thm:switching-regret}. To prove that $\text{Reg}_{\cA}^{SP} = \tilde\cO(\bar\cV^{\frac{1}{3}}T^{\frac{2}{3}})$, we need to further relate the sum of total variation $\bar\cV$ with the variation of the means $\cV$ in the one-sided bandit problem.

\begin{proof}[Proof of Theorem \ref{thm:regret-auction}]
    First, we show that given any 2 reserve price $0\le r_1\le r_2\le 1$ and any private value distribution $\cD$, we have
    \[\cR(r_1,\cD) \ge \cR(r_2,\cD) - (r_2 - r_1).\]
    Plug in the definition of revenue, we only have to show that
    \[\E_{\bv\sim\cD}\left[\sum_{i=1}^n p_i(r_2,\bv) - \sum_{i=1}^n p_i(r_1,\bv)\right] \le r_2 - r_1.\]
    This is true because in the second price auction, only the bidder with largest private value \emph{may} pay, and for any private value vector $\bv$, the payment difference is at most $r_2 - r_1$, i.e.
    \[\sum_{i=1}^n p_i(r_2,\bv) - \sum_{i=1}^n p_i(r_1,\bv) \le r_2 - r_1.\]
    Then any online auction problem(Definition \ref{def:non-stationary-auction}) corresponds to an one-sided full-information bandit problem(Definition \ref{def:non-stationary-bandit}) with number of arms $\lceil\sqrt{T}\rceil+1$. This is due to the fact that given any private value vector $\bv$, we have that for any $0\le k\le \lceil\sqrt{T}\rceil$, the reward of $r_k$ is $\sum_{i=1}^n p_i(r_k,\bv)$, and $\E_{\bv\sim\cD}[\sum_{i=1}^n p_i(r_k,\bv)] = \cR(r_k,\cD)$.
    
    Then, we have
    \begin{align*}
        \text{Reg}_{\cA}^{SP} =& \E\left[\sum_{t=1}^T (\cR(r^*_t,\cD_t) - \cR(r^{(t)},\cD_t))\right]\\
        =& \E\left[\sum_{t=1}^T (\cR(r^*_t,\cD_t) - \cR(\bar r^*_t,\cD_t))\right]+ \E\left[\sum_{t=1}^T (\cR(\bar r^*_t,\cD_t) - \cR(r^{(t)},\cD_t))\right],
    \end{align*}
    where $r^*_t := \argmax_{r_k}\cR(\bar r_k,\cD_t)$ is the best reserve price in the discrete domain. We first show that
    \[\E\left[\sum_{t=1}^T (\cR(r^*_t,\cD_t) - \cR(\bar r^*_t,\cD_t))\right] \le \sqrt{T}.\]
    This is because from the previous argument of the one-sided Lipschitz condition, and if we define $\hat r^*_t = r_j$ where $r_j \le r^*_t < r_{j+1}$, we have
    \begin{align*}
        \E\left[\sum_{t=1}^T (\cR(r^*_t,\cD_t) - \cR(\bar r^*_t,\cD_t))\right]
        \le&\E\left[\sum_{t=1}^T (\cR(r^*_t,\cD_t) - \cR(\hat r^*_t,\cD_t))\right] \\
        \le&\E\left[\sum_{t=1}^T (r^*_t - \hat r^*_t)\right] \\
        \le& T \cdot \frac{1}{\lceil\sqrt{T}\rceil} \\
        \le& \sqrt{T}.
    \end{align*}
    Then because the number of switchings $\cS$ in the auction problem is the same as the number of switchings in the one-sided bandit case, and from Theorem \ref{thm:switching-regret}, when using our algorithm $\Elim$, we have
    \[\E\left[\sum_{t=1}^T (\cR(\bar r^*_t,\cD_t) - \cR(r^{(t)},\cD_t))\right] = \tilde\cO(\sqrt{\cS T}).\]
    Then we have $\text{Reg}_{\cA}^{SP} = \tilde\cO(\sqrt{\cS T})$. To show that $\text{Reg}_{\cA}^{SP} = \tilde\cO(\bar\cV^{\frac{1}{3}}T^{\frac{2}{3}})$, we need to bound the variation in the one-sided bandit case. We use $f_t$ to denote the probability distribution function of the distribution $\cD_t$. We have
    \begin{align*}
        \sum_{t=2}^T\max_{r_k}|\cR(r_k,\cD_t) - \cR(r_k,\cD_{t-1})|
        =&\sum_{t=2}^T\max_{r_k}\biggr|\int_{\bv\in [0,1]^n}(\sum_{i=1}^n p_i(r_k,\bv))(f_t(\bv) - f_{t-1}(\bv))d \bv\biggr| \\
        \le& \sum_{t=2}^T\int_{\bv\in [0,1]^n}|f_t(\bv) - f_{t-1}(\bv)|d \bv \\
        =& \sum_{t=2}^T 2||\cD_t - \cD_{t-1}||_{\text{TV}}.
    \end{align*}
    Then we know that the variation in the one-sided bandit case $\cV$ is bounded by $\cV \le 2\bar\cV$, and from Theorem \ref{thm:dynamic-regret}, we have
    \[\E\left[\sum_{t=1}^T (\cR(\bar r^*_t,\cD_t) - \cR(r^{(t)},\cD_t))\right] = \tilde\cO(\bar\cV^{\frac{1}{3}}T^{\frac{2}{3}}),\]
    and we conclude the proof of this theorem.
\end{proof}

\section{Proof of Theorem \ref{thm:lower-bound-switching}(Lower Bound)}
In this section, we show the proof of Theorem \ref{thm:lower-bound-switching}. The lower bound shows that our algorithm is nearly optimal(up to logarithm factors) in the switching case. Our proof is based on the following proposition in \cite{cesa2015regret} and its proof.

\proplowerboundstationary*

\thmlowerboundswitching*

\begin{proof}[Proof of Theorem \ref{thm:lower-bound-switching}]
    In the proof of Theorem 2 in \cite{cesa2015regret}, the authors construct 2 private value distributions: $\cD'_1$ and $\cD'_2$  where $\cD'_1$ has probability $\frac{1}{2} +\varepsilon$ to be $\frac{1}{2}$ and probability $\frac{1}{2}-\varepsilon$ to be $\frac{3}{4}$, and $\cD'_2$ has probability $\frac{1}{2} -\varepsilon$ to be $\frac{1}{2}$ and probability $\frac{1}{2}+\varepsilon$ to be $\frac{3}{4}$. Then suppose $T$ is large enough and choose $\varepsilon = \Theta(\sqrt{1/T})$. Then consider 2 i.i.d bidders with private value distribution $\cD'_1$ or $\cD'_2$, the author shows that for any deterministic algorithm, there exists a distribution $\cD'\in\{\cD'_1,\cD'_2\}$ such that if 2 i.i.d bidders have private value distribution following $\cD'$ and $T$ is large enough, the regret is $\Omega(\sqrt{T})$. It is easy to generalize the above statement from any deterministic algorithm into any algorithm by the Fubini's theorem.
    
    Then we first show how to prove the lower bound in the switching case. We choose $T' = \frac{T}{\cS}$ is large enough. Then we show that, for any algorithm, there exists an instance $\{\cD_t\}_{t\le T}\in \{\cD'_1,\cD'_2\}^T$ such that the dynamic regret is $\Omega(\sqrt{\cS T})$. We restrict the instance into the following form: for the distributions in a segment $t = kT'+1,\dots,(k+1)T'$, the distributions $\cD_t$ remain the same.
    
    We construct the distribution segment by segment. Suppose that we have construct the distributions in the previous $k$ segments, such that the switching regret in the previous $k$ segments are both $\Omega(\sqrt{T'})$, then we show that we can choose the distributions from $\{\cD'_1,\cD'_2\}$ such that the regret in segment $k+1$ is also $\Omega(\sqrt{T'})$. Otherwise, we can construct an algorithm such that it achieves static regret $o(\sqrt{T'})$ on both the distribution $\cD'_1$ and $\cD'_2$. The algorithm is to use the non-stationary algorithm we want to prove, and first executing on the distributions in the previous $k$ segment.
    
    Then by the proof of Proposition \ref{prop:lower-bound-stationary}, we know that there does not exist such algorithm, and we can choose the distribution in segment $k+1$ such that the non-stationary regret in segment $k+1$ is also $\Omega(\sqrt{T'})$. Then, the total non-stationary regret is $\Omega(\cS\sqrt{T'}) = \Omega(\sqrt{\cS T})$.
    
    Then we show the lower bound in the dynamic case. Given $\bar\cV$, we can first choose $\Delta$ large enough such that the regret on a segment with length $\Delta$ is at least $\Omega(\sqrt{\Delta})$. Note that with length $\Delta$, the variable $\varepsilon$ in the proof of Theorem 2 in \cite{cesa2015regret} is chosen to be $\Theta(\frac{1}{\sqrt{\Delta}})$. Then the total variation $||\cD'_1-\cD'_2||_{\text{TV}}$ is also bounded by $\Theta(\frac{1}{\sqrt{\Delta}})$. We choose the number of segments to be $\Theta(\bar\cV\sqrt{\Delta})$ and then we have the regret is lower bounded by $\Omega(\bar\cV\Delta)$. Note that $T = \Delta\cdot \bar\cV\sqrt{\Delta}$, solve for $\Delta$ we get $\Delta = \left(\frac{T}{\bar\cV}\right)^{\frac{2}{3}}$. Plug it into the lower bound before we get the dynamic regret is at least $\Omega(\bar\cV^{\frac{1}{3}}T^{\frac{2}{3}})$.
\end{proof}}

\end{document}